\theoremstyle{plain}
\newtheorem{theorem}{Theorem}[section]
\newtheorem{lemma}{Lemma}[section]
\newtheorem{proposition}[theorem]{Proposition}
\theoremstyle{remark}
\newtheorem{definition}{Definition}[section]
\newtheorem{assumption}{Assumption}
\newtheorem{remark}{Remark}[section]
\newcommand{\xueqing}[1]{
   {\color{black}  {#1}}
  }
\begin{document}

\begin{frontmatter}
\title{Thompson sampling for zero-inflated count outcomes with an application to the Drink Less mobile health study}
\runtitle{Thompson sampling for count outcomes}

\begin{aug}
\author[A]{\fnms{Xueqing}~\snm{Liu}\ead[label=e1]{xueqing\_liu@u.duke.nus.edu}},
\author[B]{\fnms{Nina}~\snm{Deliu}\ead[label=e2]{nina.deliu@uniroma1.it
}},
\author[C]{\fnms{Tanujit}~\snm{Chakraborty}\ead[label=e3]{tanujit.chakraborty@sorbonne.ae}},
\author[D]{\fnms{Lauren}~\snm{Bell}\ead[label=e4]{L.M.Bell@leeds.ac.uk}},
\and
\author[A]{\fnms{Bibhas}~\snm{Chakraborty}\ead[label=e5]{bibhas.chakraborty@duke-nus.edu.sg}}

\address[A]{Centre for Quantitative Medicine, Duke-NUS Medical School \printead[presep={,\ }]{e1,e5}}

\address[B]{Department of Methods and Models for Economics, Territory and Finance, Sapienza University of Rome \printead[presep={,\ }]{e2}}


\address[C]{Department of Science and Engineering, Sorbonne University Abu Dhabi  \printead[presep={,\ }]{e3}}


\address[D]{Clinical Trials Research Unit, Leeds Institute of Clinical Trials Research, University of Leeds \printead[presep={,\ }]{e4}}


\end{aug}

\begin{abstract}
Mobile health (mHealth) interventions often aim to improve distal outcomes, such as clinical conditions, by optimizing proximal outcomes through just-in-time adaptive interventions. Contextual bandits provide a suitable framework for customizing such interventions according to individual time-varying contexts. However, unique challenges, such as modeling count outcomes within bandit frameworks, have hindered the widespread application of contextual bandits to mHealth studies. The current work addresses this challenge by leveraging count data models into online decision-making approaches. Specifically, we combine four common offline count data models (Poisson, negative binomial, zero-inflated Poisson, and zero-inflated negative binomial regressions) with Thompson sampling, a popular contextual bandit algorithm. The proposed algorithms are motivated by and evaluated on a real dataset from the Drink Less trial, where they are shown to improve user engagement with the mHealth platform. The proposed methods are further evaluated on simulated data, achieving improvement in maximizing cumulative proximal outcomes over existing algorithms. Theoretical results on regret bounds are also derived. The \texttt{countts} R package provides an implementation of our approach.
\end{abstract}

\begin{keyword}
\kwd{Count data}
\kwd{Contextual bandits}
\kwd{Mobile health}
\kwd{Online decision-making}
\kwd{Thompson sampling}
\end{keyword}

\end{frontmatter}

\section{Introduction}
\label{sec:intro}
Mobile health (mHealth) interventions use mobile devices to manage health and wellness by continuously observing contextual data such as location, step count, or other real-time individual information. The just-in-time adaptive intervention (JITAI) is a unique intervention design for delivering mHealth support, which can customize the content and delivery of support based on a user's changing state and environment \citep{nahum-shani2018}. This intervention design can be conceptualized as an online decision-making problem, where the mobile phone delivers an intervention based on contextual information and monitors a proximal outcome at each decision time. The aim is to improve an ultimate distal outcome, such as reducing alcohol consumption, by maximizing the cumulative proximal outcome, such as engagement with an alcohol reduction app \citep{bell2020notifications}. 

Contextual bandits provide an ideal framework for customizing JITAIs, where an agent continually observes users' real-time contexts and selects from a range of actions or interventions based on available knowledge to maximize their cumulative proximal outcome or reward over a given time horizon \citep{tewari2017a}. To achieve this, the agent must balance between exploiting the actions associated with the current highest expected outcomes and exploring other unexplored actions that may further improve the expected outcomes \citep{dumitrascu2018}.

The Thompson sampling (TS) approach, originally proposed by \citet{thompson1933likelihood} in the context of clinical trials, represents a Bayesian strategy for addressing the above exploration-exploitation trade-off. The underlying principle of TS is to select an action according to its posterior probability of being optimal, that is, leading to the highest expected outcome. Several empirical studies revealed that TS outperforms different bandit alternatives in various simulated and real data settings \citep{scott2010modern,chapelle2011a}. Recent theoretical research has also proved regret bounds for TS methods under linear bandit, generalized linear bandit, and full reinforcement learning settings \citep{agrawal2013thompson, abeille2017, russo2014learning, russo2016information}. Moreover, TS-based algorithms have been used in mHealth and clinical studies, including DIAMANTE \citep{aguilera2020a}, HeartSteps II \citep{spruijt-metz2022}, and StratosPHere 2 \citep{deliu2023stratosphere}. Improved TS algorithms have also been developed to properly personalize JITAIs in mHealth settings \citep{liao2020a, tomkins2021intelligentpooling, kumar2023using, NIPS2017_4fa177df, huch2023debiased}. 

Contextual bandits have gained popularity in mHealth applications, but their use requires careful attention to modeling count outcomes in an online setting, which has not been sufficiently addressed. Many mHealth studies collect proximal outcomes as counts, e.g., the number of dialogue steps \citep{beinema2022} and the number of screen views within an hour following a notification \citep{bell2020notifications}. Count models may enhance the personalization of JITAIs in such instances. Poisson distribution, assuming equal mean and variance, is frequently used in count data analysis. However, this assumption is often violated in real data with overdispersion, where the variance is greater than the mean, and with zero-inflation phenomena \citep{lambert1992zero}. Notably, overdispersion and zero-inflation can result in an invalid estimation if not appropriately addressed \citep{mccullagh2019generalized}. 

In offline statistical analysis, several approaches have been developed to address these issues \citep{payneApproachesDealingVarious2017, feng2021comparison}.  Among the alternatives to Poisson regression, the negative binomial (NB) regression is frequently utilized, especially given its variance being a quadratic function of the mean, thereby facilitating the modeling of overdispersed count data \citep{ver2007quasi}. In scenarios with excess zeros, the zero-inflated Poisson (ZIP) regression provides a robust framework. This model posits a latent mixture of a zero component and a Poisson component, i.e., with probability $p$ the only possible observation is $0$, and with probability $1-p$, a Poisson random variable is observed \citep{lambert1992zero}.  To address additional overdispersion in the Poisson component, the zero-inflated negative binomial (ZINB) model is employed \citep{garay2011estimation}. Although these advancements have significantly improved the analysis of count data in offline settings, the adaptation and application of these models to online learning frameworks are not well studied.

The current paper aims to address the present lack of effective algorithms for contextual bandits with count outcomes in mHealth. One potential approach is to extend the existing generalized linear bandit framework to account for count outcomes \citep{filippi2010,li2017provably,kveton2020randomized}. However, these studies mainly focus on logistic bandits and lack discussion on count data. Moreover, many count regressions, such as negative binomial regression with an unknown dispersion parameter, zero-inflated Poisson regression, and zero-inflated negative binomial regression, do not belong to the generalized linear model framework \citep{cameron2013regression}. \xueqing{
In the mHealth domain, the problem of zero inflation has been tackled in \citet{trella2022designing,trella2023reward}, where a contextual Bayesian Poisson model is investigated in comparison to a Bayesian linear model (of a similar flavor to TS). However, their work focuses on the practical implementation of the algorithm with no insights on its theoretical properties and overdispersion is not taken into account.}

In this work, both Bayesian and frequentist regret bounds have been derived to assess the theoretical performances of the proposed algorithms. To evaluate their empirical performances, we have conducted simulation studies under a wide range of scenarios. In addition, we have applied our approaches to the \textit{Drink Less} data, obtained from a micro-randomized trial (MRT) designed to optimize the content and sequencing of push notifications in a behavior change app that helps people reduce alcohol consumption \citep{bell2020notifications}. Contextual bandits can enhance the utility of an MRT in several ways. For example, by adapting randomization probabilities in favor of the most effective interventions, they can offer a benefit to current users and directly translate clinical trial efficacy to real-world effectiveness by continuously learning from accumulating data. 

The structure of this article is as follows. Section \ref{data} details our motivating example, the Drink Less study, while Section \ref{prem} outlines the problem setup and explores various count models. Section \ref{method} introduces the TS-Count strategy, which integrates four count models, and discusses the approximation method for posterior sampling as well as the metric used to evaluate their performances.  Theoretical results supporting our proposed algorithms are provided in Section \ref{theory}. Section \ref{simu} assesses the performance of these algorithms via simulations, and Section \ref{app} illustrates their application using the Drink Less MRT dataset. Finally, Section \ref{conc} concludes the article with a comprehensive discussion.

\section{Motivating Example: the Drink Less Study}
\label{data}
Our work is motivated by a behavior change app called Drink Less. This app targets UK adults drinking alcohol at increasing and higher risk levels in reducing hazardous and harmful alcohol consumption  \citep{garnett2019development,garnett2021refining}. The study is based on an MRT design: a useful experimental design that provides high-quality data to optimize the timing, content, and sequencing for mHealth interventions, intending to sustain engagement \citep{klasnja2015microrandomized,qian2022microrandomized,liu2023microrandomized}. 
A 30-day MRT involving 349 participants was conducted to refine the push notification strategy for the Drink Less app \citep{bell2020notifications,bell2023}. Participants with a baseline Alcohol Use Disorders Identification Test (AUDIT) score of 8 or higher \citep{bohn1995alcohol,saunders1993development}, who resided in the UK, were at least 18 years old, and desired to drink less, were recruited into the trial. During the MRT, participants were randomly assigned each day at 8 p.m. to either receive a push notification (with a probability of 60\%) or not (with a probability of 40\%). The study collected both baseline and time-varying contextual data, such as age, AUDIT score, gender, and occupation type.  An exploratory proximal outcome was the overall number of screen views from 8 p.m. to 9 p.m., serving as a count measure of the depth of engagement with the app. Figure \ref{fig:normality} illustrates the empirical distribution and the corresponding Q-Q plot for this outcome, defined as the number of screen views recorded between 8 p.m. and 9 p.m. following the receipt of a push notification or no intervention. The analysis reveals that the proximal outcome deviates from a normal distribution, characterized notably by a high frequency of zero values. Consequently, this article introduces innovative contextual bandit algorithms designed to optimize the delivery of interventions with count outcomes.

\begin{figure}[ht]
	\centering
    \subfloat[Histogram]
{\includegraphics[width=10cm,height=6cm,keepaspectratio]{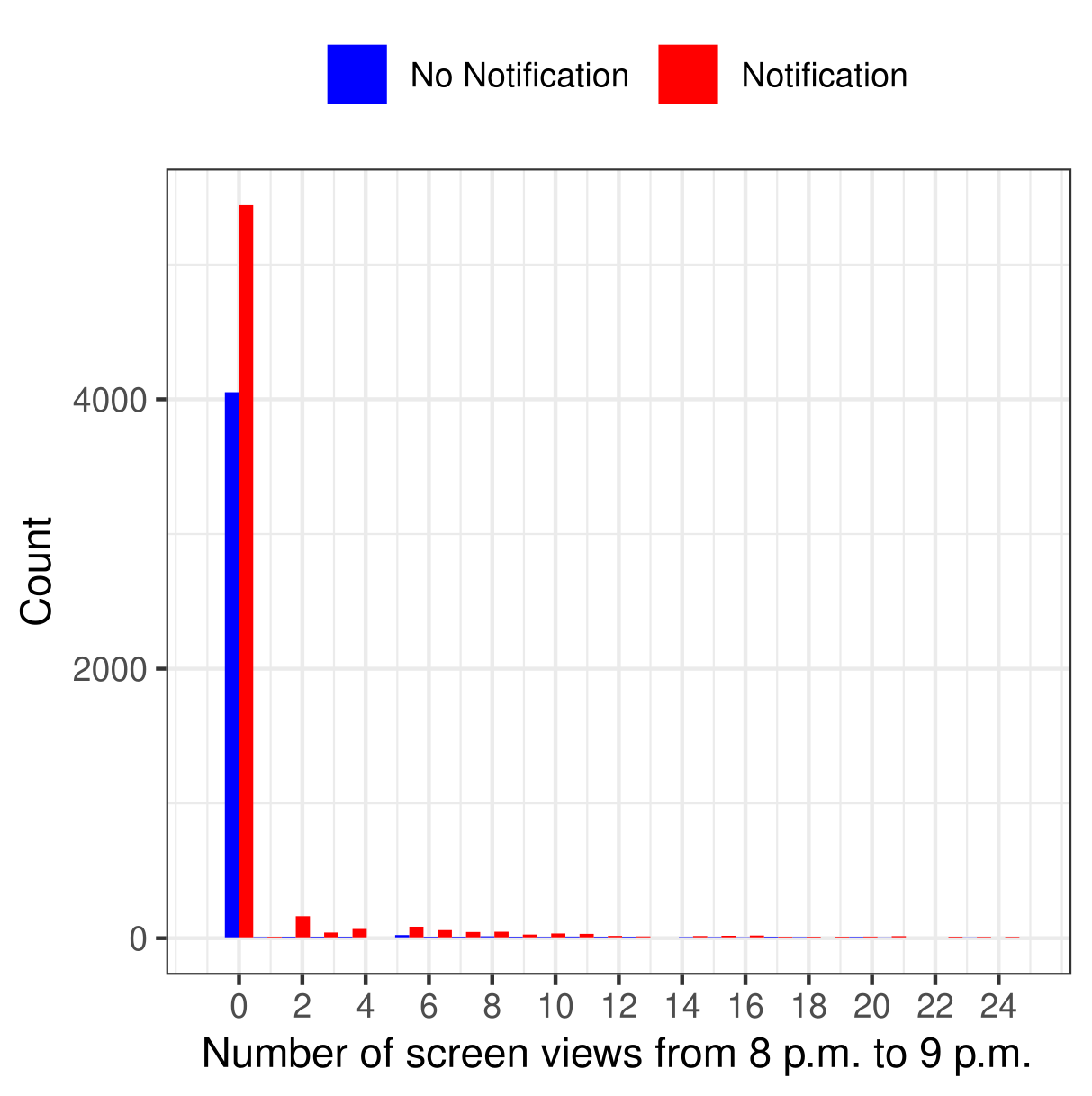}} 
	\subfloat[Q-Q plot]
 {\includegraphics[width=10cm,height=6cm,keepaspectratio]{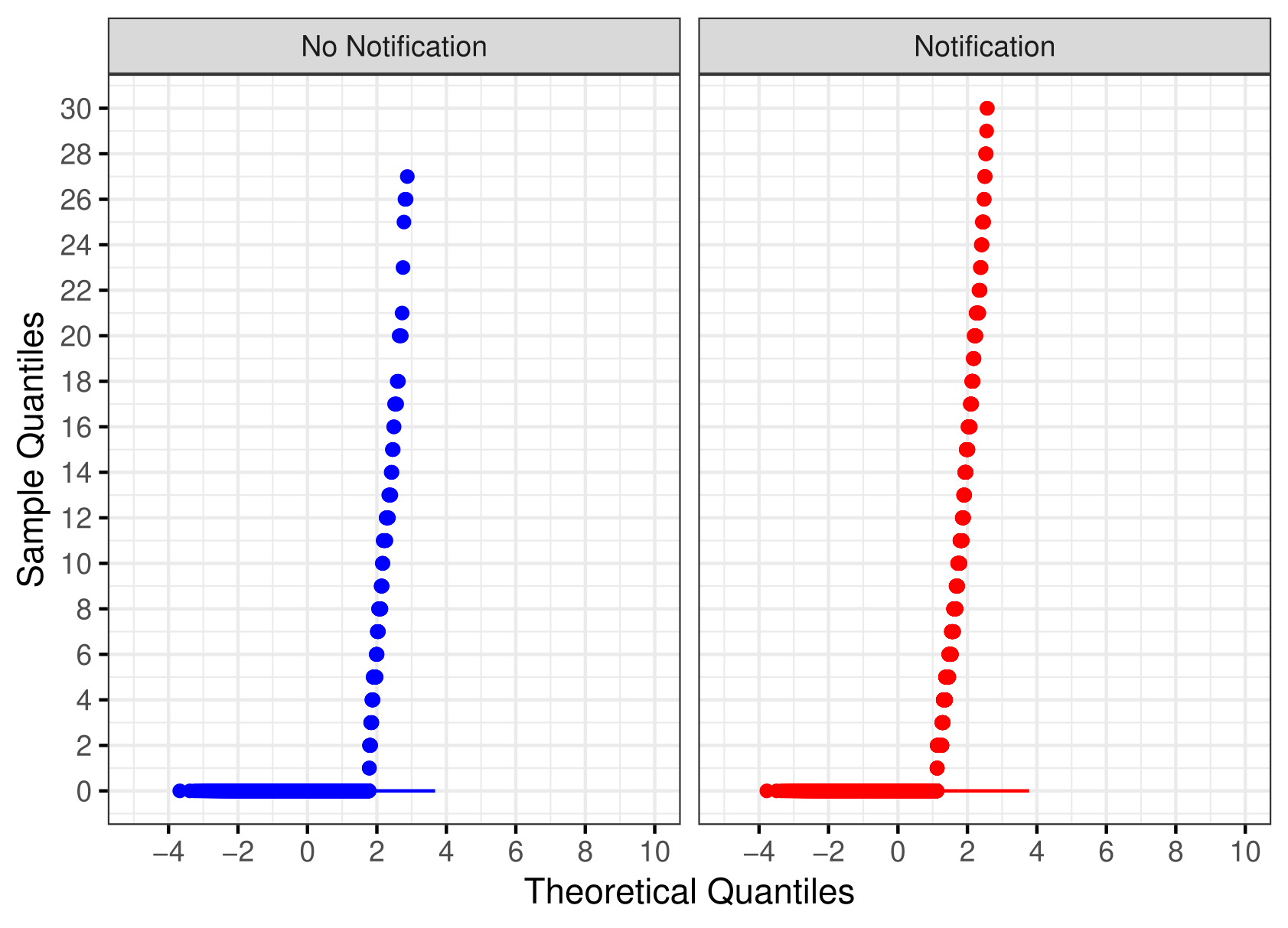}} 
	\caption{Graphical inspection of the distribution and normality check of the number of screen views from 8 p.m. to 9 p.m. after each intervention.}
	\label{fig:normality}
\end{figure}

\section{Preliminaries}
\label{prem}
\subsection{Notations and Problem Setup}
We consider the setting where the learning algorithm embedded in a mobile app must decide whether to deliver an intervention and what type of intervention to provide to a given user over a predefined follow-up period. This period is defined on a time horizon $T$, with the decision occurring at discrete time points $t= 1,2,..., T$. Let $\mathcal{A}$ denote the discrete set of available interventions at any time $t$ with cardinality $|\mathcal{A}|=K$. We denote the personal and contextual information of an individual at time $t$ by $X_t $. In the Drink Less study, for example, $X_t$ represents the real-time information of a user, 
\xueqing{typically selected based on domain knowledge.}  After an intervention $A_t \in \mathcal{A}$ is delivered, we collect a stochastic proximal outcome $Y_{t}$. We assume that $Y_t$ is count data, i.e., non-negative integers, with potential overdispersion and zero-inflation throughout the analysis. \xueqing{Further, let $\phi(A_t,X_t) \in \mathbb{R}^d$, with $d\geq 1$, denote the feature vector, which is a pre-specified function of the intervention $A_t$  and the context $X_t$. Typically, it includes the context, interventions, and interactions between intervention and context (see Section \ref{app} for further illustration). }


As the environment for generating the proximal outcomes is unknown, the algorithm within the mobile app must estimate the expected outcome for each intervention. This presents a trade-off between exploiting the best intervention so far versus exploring alternative interventions that might yield better results. Contextual bandits offer a framework for addressing this trade-off in online decision-making problems \citep{tewari2017a}. At each decision time $t$, contextual bandit algorithms utilize historical observations, denoted as $D_{t} = \left\{X_i, A_i, Y_i; i=1,\cdots, t-1\right\}$, to update the estimates of model parameters using approaches such as maximum likelihood estimation (MLE) method. A naive strategy is the \textit{greedy} algorithm \citep{sutton2018reinforcement}, which always exploits by selecting the intervention with the highest estimated proximal outcome at each decision time $t$.
However, this can lead to unreliable algorithms that do not explore sufficiently and get stuck at suboptimal interventions. There are two alternative bandit approaches recognized for their ability to deal with this issue: the upper confidence bound method, which relies on ``optimism in the face of uncertainty'' \citep{filippi2010,li2017provably}, and the TS approach, rooted in the Bayesian framework \citep{russo2018tutorial}. We have chosen to implement the TS strategy for two reasons: first, it has a valid theoretical guarantee and excellent empirical performance \citep{agrawal2013thompson, abeille2017, russo2014learning, russo2016information}; second, it is a randomized strategy that assigns probabilities to each intervention option, making it suitable for MRTs and allowing for valid post-study causal inference \citep{liao2020a, liu2023incorporating}.


\subsection{Count Data Models: A Review}
In this section, we provide a brief overview of two widely used regression models for count data, namely, Poisson regression and NB regression. In addition, we also discuss two variations of these basic models, namely, ZIP regression and ZINB regression, which are particularly useful in cases with zero-inflation.

\subsubsection{Poisson Regression}
First, let us consider a Poisson regression model. 
Mathematically, the conditional distribution of $Y_t$ can be written as 
\begin{eqnarray*}
    \operatorname{Pr}\left(Y_t= y | A_t, X_t \right)=\frac{\exp (-\mu_{t}) \mu_{t}^y}{y !}, \quad y=0,1,2, \ldots
\end{eqnarray*}
where $\mu_{t} = \exp(\phi(A_t,X_t)^{\top}\beta^*)$ and $\beta^*\in \mathbb{R}^d$ denotes the true model parameter. 

The parameter $\beta^*$ can be estimated by maximizing the log-likelihood function given by
\begin{eqnarray}
\label{lik_pois}
\ln L^{\text{Poisson}}(\beta) = \sum_{i=1}^{t-1} \left[-\exp \left(\phi(A_i,X_i)^{\top} \beta\right)+\left(\phi(A_i,X_i)^{\top} \beta\right) Y_i-\ln \left(Y_i !\right)\right].
\end{eqnarray}

It is also worth noting that the only condition for consistency is the correct specification of the conditional mean model \citep{gourieroux1984pseudo}. However, due to the common issues with count data, i.e., overdispersion and zero-inflation, other models may be preferred over the basic Poisson regression. 

\subsubsection{Negative Binomial Regression}
The NB model is a popular alternative to handle overdispersion in count data analysis. The NB model is defined by the probability mass function
\begin{eqnarray*}
     \operatorname{Pr}\left(Y_t= y | A_t, X_{t}\right) = \frac{\Gamma(Y_i+r)}{Y_i!\Gamma(r)} \left(\frac{\mu_{t}}{\mu_{t} + r}\right)^{Y_i} \left(1+\frac{\mu_{t}}{r}\right)^{-r},
\end{eqnarray*}
where $r>0$ denotes the inverse dispersion parameter and $\Gamma(\cdot)$ denotes the gamma function. The estimates of the regression coefficients $\beta^*$ can be obtained by maximizing the log-likelihood function given by
\begin{eqnarray}
\label{lik_nb}
\ln L^{\text{NB}}(\beta,r) &=& \sum_{i=1}^{t-1} \Bigg\{ \log \left[\Gamma\left(r + Y_i\right)\right]-\log \left[\Gamma\left(Y_i+1\right)\right]-\log [\Gamma(r)] \nonumber  \\
&+&  Y_i \log \left[\frac{\exp \left(\phi(A_i,X_i)^{\top} \beta\right)}{r+\exp \left(\phi(A_i,X_i)^{\top} \beta\right)}\right]+r \log \left[\frac{r}{r+\exp \left(\phi(A_i,X_i)^{\top} \beta\right)}\right] \Bigg\}. 
\end{eqnarray}

While the MLEs for the Poisson model can be obtained using the Newton-Raphson or the iteratively reweighted least square technique \citep{cameron2013regression}, for the NB model, additionally, the estimation of the overdispersion parameter needs to be incorporated into the iteration procedure.

\subsubsection{Zero-inflated Poisson Regression}
Count data with extra zeros is common in many real-world applications. The ZIP model can handle such data by combining a distribution that is degenerate at zero with a Poisson distribution. In ZIP regression, the outcome $Y_t$ has a probability mass function  given by
\begin{eqnarray*}
    \operatorname{Pr}\left(Y_t=y | A_t, X_t\right) = \begin{cases}p_{t}+\left(1-p_{t}\right)e^{-\mu_{t}}, & y=0 \\ \left(1-p_{t}\right) \frac{e^{-\mu_{t}} \mu_{t}^y}{y!}, & y=1,2, \ldots\end{cases}
\end{eqnarray*}
where $0 \leq p_{t} \leq 1$ represents the probability that the zero value comes from the zero state, and 
\begin{eqnarray*}
     \mu_{t}= \exp\left(\phi(A_t,X_t)^{\top} \beta^* \right)\text { and } p_{t}= \operatorname{sigmoid} \left(\phi(A_t,X_t)^{\top} \gamma^* \right).
\end{eqnarray*}
Here, $\gamma^* \in \mathbb{R}^d$ and $\beta^* \in \mathbb{R}^d$ denote the true model parameters. 

The log-likelihood function is given by 
\begin{eqnarray*}
    \ln L^{\text{ZIP}}(\beta,\gamma) &=& \sum_{i=1}^{t-1} \log \left[p_{i} + (1-p_{i})\exp(-\mu_{i})\right]I_{(Y_i = 0)} \nonumber \\
    &+& \sum_{i=1}^{t-1} \log \left[(1-p_{i})\frac{\exp(-\mu_{i}) \mu_{i}^{Y_i}}{Y_i!}\right]I_{(Y_i > 0)}.
\end{eqnarray*}

Typically the expectation-maximization (EM) algorithm is used to derive the MLEs, which is stable and straightforward. To this end, an additional variable $Z_t$ is needed to indicate which zeros came from the Poisson distribution and which came from the zero state. Specifically, $Z_t = 1$ when $Y_t$ is from the zero state and $Z_t = 0$ otherwise. The complete-data log-likelihood is
\begin{eqnarray}
\label{like_zip}
    \ln L^{\text{ZIP}}_{c}(\beta, \gamma) &=& \sum_{i=1}^{t-1} Z_i\phi(A_i,X_i)^{\top}\gamma - \log(1+\exp(\phi(A_i,X_i)^{\top}\gamma)) \nonumber \\
    &+& (1-Z_i) \left(-\exp \left(\phi(A_i,X_i)^{\top} \beta\right)+\left(\phi(A_i,X_i)^{\top} \beta\right) Y_i-\ln \left(Y_i !\right)\right).
\end{eqnarray}
The EM algorithm proceeds by iteratively alternating between estimating $Z_t$ by its expectation under the current estimates of $\beta$ and $\gamma$ (E step) and maximizing Equation \eqref{like_zip} with $Z_t$ fixed at the values from the E step (M step). We refer readers to \citet{lambert1992zero} for more details on the EM algorithm. 

\subsubsection{Zero-inflated Negative Binomial Regression}
Suppose the data suggests additional overdispersion after accounting for zero-inflation. In that case, it may be necessary to consider the ZINB model that mixes a distribution degenerate at zero with an NB distribution. This is a more flexible approach compared to the ZIP model. In ZINB regression, $Y_t$ has a probability mass function given by 
\begin{eqnarray*}
    \operatorname{Pr}\left(Y_t=y | A_t, X_t\right)= \begin{cases}p_{t}+\left(1-p_{t}\right)\left(\frac{r}{\mu_{t}+r}\right)^r, & y=0 \\ \left(1-p_{t}\right) \frac{\Gamma\left(r+y\right)}{\Gamma\left(y+1\right) \Gamma(r)}\left(\frac{\mu_{t}}{\mu_{t}+r}\right)^{y}\left(\frac{r}{\mu_{t}+r}\right)^r, & y=1,2, \ldots\end{cases}
\end{eqnarray*}
where $\mu_t$ and $p_t$ are defined as before. Similar to ZIP, one can derive the complete-data log-likelihood function of ZINB, which is given by
\begin{equation}
\label{lik_zinb}
    \ln L^{\text{ZINB}}_{c}\left(\beta, \gamma, r\right)=\sum_{i=1}^{t-1} \left\{Z_i \phi(A_i,X_i)^{\top}\gamma-\log (1+\exp \left(\phi(A_i,X_i)^{\top} \gamma\right))+\left(1-Z_i\right)  g\left(Y_i;\beta, r\right)\right\}
\end{equation}
where $g\left(Y_i;\beta, r\right)$ is the same as Equation \eqref{lik_nb}. We also use the EM algorithm to compute the MLEs; see \citet{garay2011estimation} for more details. 

\section{The TS-Count Strategy}
\label{method}
In this section, we present the TS-Count strategy for optimizing the delivery of mHealth interventions. The TS strategy is closely connected to Bayesian inference, which selects the optimal intervention according to its posterior probability of being optimal. Algorithm \ref{alg1} presents the pseudocode for TS-Count, specialized to the case of a Poisson or NB bandit, while Algorithm \ref{alg2} displays TS-Count in the context of a ZIP or ZINB bandit. Depending on the model used for $Y_t$, the strategy can be referred to as TS-Poisson, TS-NB, TS-ZIP, or TS-ZINB. 


\begin{algorithm}[ht]
\KwIn{prior distribution $Q_0(\beta)$.}
\For{$t=1,\cdots, T$}{
Update the posterior distribution $Q_t(\beta)$ using data $D_{t}$

Observe context $X_t$ 

Sample $\tilde{\beta}_t$ from the updated posterior distribution $Q_t(\beta)$

Compute $A_t = \arg \max_{a \in \mathcal{A}} \exp\left\{\phi(a,X_t)^{\top} \tilde{\beta}_t\right\}$

Offer intervention $A_t$ and observe the associated outcome $Y_t$
}
  \caption{TS-Count (Poisson / NB)}
  \label{alg1}
\end{algorithm}

\begin{algorithm}[ht]
\KwIn{prior distribution $Q_0(\gamma)$ and $Q_0(\beta)$.}
\For{$t=1,\cdots, T$}{
Update the posterior distribution $Q_t(\gamma)$ and $Q_t(\beta)$ using data $D_{t}$

Observe context $X_t$  

Sample $\tilde{\gamma}_t$ from the updated posterior distribution $Q_t(\gamma)$

Sample $\tilde{\beta}_t$ from the updated posterior distribution $Q_t(\beta)$

Compute $A_t = \arg \max_{a \in \mathcal{A}}\left[1-\operatorname{sigmoid}(\phi(a,X_t)^{\top} \tilde{\gamma}_t)\right] \exp\left\{\phi(a,X_t)^{\top} \tilde{\beta}_t\right\}$

Offer intervention $A_t$ and observe the associated outcome $Y_t$
}
  \caption{TS-Count (ZIP / ZINB)}
  \label{alg2}
\end{algorithm}

These algorithms operate through a sequential five-step process at each decision time $t$. First, a prior distribution $Q_0(\beta)$ is assumed for the underlying model parameter $\beta^*$.  Subsequently, the algorithm proceeds to update the posterior distribution $Q_t(\beta)$ by integrating the data collected up to time $t$, say $D_{t}$. Following this, the algorithm derives a Bayesian estimate $\tilde{\beta}_t$ for $\beta^*$ by sampling from the posterior distribution $Q_t(\beta)$. This process is grounded in the Bayesian framework, which treats the true parameter $\beta^*$ as a random variable; thus, at time $t$, both the Bayesian estimate $\tilde{\beta}_t$ and the model parameter $\beta^*$ are characterized by a posterior distribution $Q_t(\beta)$. The subsequent phase involves selecting an intervention $A_t$ that maximizes the estimated expected proximal outcome under the Bayesian estimate $\tilde{\beta}_t$. Finally, $A_t$ is offered to the user, leading to the observation of a proximal outcome $Y_t$.  It is important to note that for both TS-ZIP and TS-ZINB, an additional step is included wherein $\tilde{\gamma}_t$ is sampled from its respective posterior distribution to aid in the intervention selection process, thereby introducing an extra layer of randomness into the algorithm.

The proposed algorithms, which update the posterior distribution $Q_t(\gamma)$ and $Q_t(\beta)$ through a Bayesian framework, benefit significantly from prior assumptions about the underlying model parameter $\gamma^*$ and $\beta^*$. While a normal prior is frequently utilized in the estimation of regression coefficients due to its desirable properties \citep{vanzwet2019}, its application within count data models presents computational challenges. Specifically, the normal prior is non-conjugate for such models, thereby complicating the derivation of closed-form posterior distributions. This inherent complexity necessitates the adoption of advanced computational methods. Markov chain Monte Carlo (MCMC) techniques become indispensable in such scenarios to approximate the posterior distribution effectively. However, MCMC can be computationally demanding and rather time-consuming, particularly in online settings where the algorithm updates very frequently \citep{russo2018tutorial}. As such, we turn to more efficient methods for approximate sampling from posterior distributions.

\subsection{Laplace Approximation for Tractability}
In this section, we address the computational complexity associated with the posterior distributions in the TS-Count framework. Our approach involves adopting the \textit{Laplace approximation}, a technique that simplifies the posterior distribution by approximating it with a normal distribution \citep{russo2018tutorial}. As \citet{abeille2017} have indicated, when a distribution possesses appropriate \textit{concentration} and \textit{anti-concentration} characteristics, it ensures minimal regret in the approximation.  Specifically, the stochastic estimate $\tilde{\beta}_t$ is designed to strike a balance between being sufficiently far from the MLE $\hat{\beta}_t$ (anti-concentration) to ensure enough exploration, and being adequately concentrated around $\hat{\beta}_t$ (concentration) to ensure adequate exploitation. 

At each decision time $t$, a Bayesian estimate $\tilde{\beta}_t$ is sampled from a Gaussian distribution $\mathcal{N}(\hat{\beta}_t, \alpha_\beta^2 \mathcal{I}_t^{-1}(\hat{\beta}_t))$. Here, $\hat{\beta}_t$ denotes the MLE, and $\mathcal{I}_t(\hat{\beta}_t)$ represents the Fisher information, with $\alpha_\beta$ serving as a tuning parameter. Similarly, for our implementations of TS-ZIP and TS-ZINB, we additionally sample $\tilde{\gamma}_t$ from a Gaussian distribution $\mathcal{N}(\hat{\gamma}_t, \alpha_\gamma^2 \mathcal{I}_t^{-1}(\hat{\gamma}_t))$, where $\alpha_\gamma$ is also a tuning parameter. The pseudocode for applying TS-Poisson or TS-NB with Laplace approximation is detailed in Algorithm \ref{alg3}. In contrast, Algorithm \ref{alg4} presents the pseudocode of TS-ZIP or TS-ZINB with Laplace approximation. \xueqing{The implementation of Laplace approximation in these algorithms facilitates efficient sampling of the stochastic estimates $\tilde{\beta}_t$ and $\tilde{\gamma}_t$.} The tuning parameters, $\alpha_\beta$ and $\alpha_\gamma$, are critical in controlling the exploration-exploitation tradeoff and must be chosen carefully (see the proofs of Theorems \ref{regret-nb} - \ref{regret-zip} in the Supplement for more details).

\begin{algorithm}[ht]
\KwIn{Tuning parameter $\alpha_\beta$, initial random exploration time $\tau$}
\textbf{Initialization:}
Randomly choose $A_t \in \mathcal{A}$ for $t\leq \tau$ and collect the observations $D_{\tau+1}$

\For{$t=\tau+1,\cdots, T$}{

Compute the MLE $\hat{\beta}_{t}$ by maximizing the log-likelihood function using data $D_t$

\tcc{TS-Poisson: Equation \eqref{lik_pois}}
\tcc{TS-NB: Equation \eqref{lik_nb}}

Get $\mathcal{I}_t(\hat{\beta}_{t})$ as a byproduct of step 3

Sample $\tilde{\beta}_t$ from $\mathcal{N}(\hat{\beta}_t, \alpha_\beta^2\mathcal{I}_t^{-1}(\hat{\beta}_{t}))$

Observe context $X_t$  and compute $A_t = \arg \max_{a \in \mathcal{A}} \exp\left\{\phi(a,X_t)^{\top} \tilde{\beta}_t\right\}$

Offer intervention $A_t$ and observe the associated outcome $Y_t$
}
\caption{TS-Count with Laplace approximation (Poisson / NB)}
\label{alg3}
\end{algorithm}

\begin{algorithm}[ht]
\KwIn{Tuning parameters $\alpha_\beta$ and $\alpha_\gamma$, initial random exploration time $\tau$}
\textbf{Initialization:}
Randomly choose $A_t \in \mathcal{A}$ for $t\leq \tau$ and collect the observations $D_{\tau+1}$

\For{$t=\tau+1,\cdots, T$}{

Compute the MLEs $\hat{\gamma}_{t}$ and $\hat{\beta}_{t}$ by maximizing the log-likelihood function using data $D_t$

\tcc{TS-ZIP: Equation \eqref{like_zip}}
\tcc{TS-ZINB: Equation \eqref{lik_zinb}}

Get $\mathcal{I}_t(\hat{\gamma}_{t})$ and $\mathcal{I}_t(\hat{\beta}_{t})$ as a byproduct of step 3

Sample $\tilde{\gamma}_t$ from $\mathcal{N}(\hat{\gamma}_t, \alpha_\beta^2\mathcal{I}^{-1}_t(\hat{\gamma}_{t}))$

Sample $\tilde{\beta}_t$ from $\mathcal{N}(\hat{\beta}_t, \alpha_\gamma^2\mathcal{I}^{-1}_t(\hat{\beta}_{t}))$

Observe context $X_t$  and compute $A_t = \arg \max_{a \in \mathcal{A}}\left[1-\operatorname{sigmoid}(\phi(a,X_t)^{\top} \tilde{\gamma}_t)\right] \exp\left\{\phi(a,X_t)^{\top} \tilde{\beta}_t\right\}$

Offer intervention $A_t$ and observe the associated outcome $Y_t$
}
\caption{TS-Count with Laplace approximation (ZIP / ZINB)}
\label{alg4}
\end{algorithm}

We define $G_t=\sum_{i=1}^{t-1} \phi(A_i,X_i) \phi(A_i,X_i)^{\top}$ as the unweighted Hessian matrix at decision time $t$. TS-Count with normal approximation starts with an initial exploration time period $\tau$, which consists of delivering interventions $A_1,A_2,\cdots,A_{\tau}$ with uniform randomization probabilities. This can ensure the invertibility of $G_t$ for all $t \geq \tau$. The necessity is motivated by the classical likelihood theory: as the sample size $t \rightarrow \infty$, the MLE $\hat{\beta}_t$ is consistent and asymptotically normal, i.e., $\hat{\beta}_t -\beta^* \rightarrow \mathcal{N}(0, \mathcal{I}_t^{-1}(\beta^*))$. The invertibility of $G_t$ can ensure that $\mathcal{I}_{t}$ is invertible and a unique MLE is attainable. Proposition \ref{cor1} provides a way for theoretically specifying the length of the initial exploration time.

 \begin{proposition}
 \label{cor1}
Suppose $X_{t}$ is drawn i.i.d. from some distribution with support in the unit ball, i.e., $||X_{t}|| \leq 1$. Furthermore, let $\Sigma:=\mathbb{E}\left[\phi(A_t,X_t)\phi(A_t,X_t)^{\top}\right]$ be the second moment matrix, and $B$ be a positive constant. Then, there exist positive, universal constants $D_1$ and $D_2$ such that $\lambda_{\min }\left(G_{\tau}\right) \geq B$ with probability at least $1-T^{-1}$, as long as
$$
\tau \geq\left(\frac{D_1 \sqrt{d}+D_2 \sqrt{\log (T)}}{\lambda_{\min }(\Sigma)}\right)^2+\frac{2 B}{\lambda_{\min }(\Sigma)}.
$$
 \end{proposition}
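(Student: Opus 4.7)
The plan is to recognize Proposition \ref{cor1} as a standard concentration statement for the empirical second-moment matrix built from i.i.d.\ feature vectors, and to reduce it to a matrix concentration inequality followed by an application of Weyl's inequality.

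First, I would fix the setting of the initial exploration phase: for $t \le \tau$ the interventions $A_t$ are drawn uniformly from $\mathcal{A}$, independently of the contexts $X_t$, which are themselves i.i.d.\ with $\|X_t\|\le 1$. Consequently the rank-one matrices $M_i := \phi(A_i,X_i)\phi(A_i,X_i)^{\top}$ are i.i.d., positive semidefinite, with $\mathbb{E}[M_i]=\Sigma$ and, because $\phi$ is a bounded deterministic function of $(A_i,X_i)$, with uniformly bounded operator norm $\|M_i\|_{\mathrm{op}}\le R$ for some constant $R$. The matrix $G_\tau=\sum_{i=1}^{\tau-1}M_i$ is therefore a sum of i.i.d.\ bounded PSD random matrices with mean $(\tau-1)\Sigma$.

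Next I would apply a matrix concentration inequality -- either the matrix Chernoff bound or (my preferred route) the matrix Bernstein inequality due to Tropp -- to obtain, with probability at least $1-T^{-1}$,
\begin{equation*}
\bigl\|G_\tau - (\tau-1)\Sigma\bigr\|_{\mathrm{op}} \;\le\; C_1\sqrt{(\tau-1)\,d\,}\; +\; C_2\sqrt{(\tau-1)\,\log T\,}
\end{equation*}
for absolute constants $C_1,C_2$ (where the $\sqrt{d}$ factor arises from the intrinsic dimension term, and $\sqrt{\log T}$ from requiring the failure probability to be $T^{-1}$; the sub-Gaussian-type tail dominates because $\tau$ will turn out to be reasonably large). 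Combining this with Weyl's inequality,
\begin{equation*}
\lambda_{\min}(G_\tau) \;\ge\; (\tau-1)\,\lambda_{\min}(\Sigma) \;-\; \bigl\|G_\tau - (\tau-1)\Sigma\bigr\|_{\mathrm{op}},
\end{equation*}
and requiring the right-hand side to exceed $B$, I would solve the resulting quadratic inequality in $\sqrt{\tau-1}$. Isolating $\sqrt{\tau}\,\lambda_{\min}(\Sigma)$ on one side and squaring yields exactly a term of the form $\bigl((D_1\sqrt d + D_2\sqrt{\log T})/\lambda_{\min}(\Sigma)\bigr)^2$, while the extra linear contribution $B$ becomes the additive $2B/\lambda_{\min}(\Sigma)$ summand (the factor $2$ being a slack term absorbing lower-order cross terms from the squaring step). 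Absolute constants $D_1,D_2$ can be read off from $C_1,C_2$ and $R$.

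The only non-routine step is the choice and invocation of the matrix tail bound: one must ensure that the bound is stated on the \emph{uncentered} sum $G_\tau$ in operator norm and that the high-probability event has failure probability at most $T^{-1}$ uniformly in the ambient dimension $d$, which is precisely what Tropp's matrix Bernstein or matrix Chernoff provides. Everything after that -- Weyl's inequality and solving the quadratic for $\tau$ -- is elementary algebra, so this concentration step is where the real work lies.
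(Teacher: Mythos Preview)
The paper does not actually prove this proposition: immediately after stating it, the authors write that it is an adaptation of Proposition~1 in \citet{li2017provably} and \citet{oh2019}, and refer the reader to those sources for the proof. Your high-level route---a matrix concentration bound on $G_\tau - (\tau-1)\Sigma$, followed by Weyl's inequality and solving a quadratic in $\sqrt{\tau}$---is exactly the strategy those cited references employ, so in substance your proposal matches what the paper relies on.

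One technical remark is worth making. Tropp's matrix Bernstein (or matrix Chernoff) inequality produces a deviation term of order $\sqrt{(\tau-1)\log(d\,T)}$, i.e.\ a $\sqrt{\log d}$ dependence, not the $\sqrt{d}$ appearing in the proposition; the ``intrinsic dimension'' there enters logarithmically, not as a square root. The exact form $D_1\sqrt{d}+D_2\sqrt{\log T}$ in the statement comes instead from Vershynin's concentration result for bounded (sub-Gaussian) random vectors, where the $\sqrt{d}$ factor arises from an $\varepsilon$-net argument over the unit sphere in $\mathbb{R}^d$. Your Tropp-based argument would in fact deliver a \emph{tighter} bound, so the proposition would still follow a fortiori, but your explanation of where the $\sqrt{d}$ comes from is a misreading of matrix Bernstein. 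If you want to reproduce the stated bound verbatim, swap in Vershynin's covariance-estimation theorem; otherwise your argument is sound.
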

This is an adaptation of Proposition 1 in \citet{li2017provably} and \citet{oh2019} to TS-Count, and the proof can be found in the same sources. Note that the i.i.d. assumption on $X_t$ is only required during the initial exploration period to ensure  $G_t$ is invertible. Subsequently, $X_t$ can be chosen adversarially, without assuming any structure on it, as long as $||X_{t}||$ is bounded. 

\xueqing{
\begin{remark}
    In terms of the Drink Less app, Proposition \ref{cor1} provides guidance on the number of initial observations required to ensure the validity of the learning algorithm for intervention delivery.  Typically, we select $\tau$ such that the minimum eigenvalue of $G_{\tau}$ satisfy $\lambda_{\min}(G_{\tau}) \geq 1$, aligning with setting $B=1$ in Proposition \ref{cor1}. Additionally, when designing a real mobile health study, it is important to ensure that the length of the study is longer than the required initial exploration length. This consideration needs to be taken into account before starting the study to ensure sufficient data collection beyond the exploration phase. 
    
    An alternative approach involves the use of regularization techniques. Specifically, we suggest modifying Line 3 in Algorithms \ref{alg3} and \ref{alg4} to include a ridge penalty in the objective function, $\lambda \|\beta\|_2^2$, mirroring the effect of a normal prior within a Bayesian context. This is of particular interest during the implementation of algorithms for the Drink Less case study, as discussed in Section \ref{app}. 
\end{remark}
}

\subsection{Performance Metrics}
To quantify the theoretical performances of the proposed contextual bandit algorithms, we introduce the following metrics common in the bandit literature.

\begin{definition}[Frequentist Regret] The frequentist regret of a learning algorithm is defined as the cumulative expected difference between the expected proximal outcome of the optimal intervention and that of the intervention chosen by the learning algorithm, i.e.,
    \begin{eqnarray*}
   \mathcal{R}(T,\eta^*) &=&\sum_{t=1}^T \mathbb{E}\left[\Delta_{A_t} | \eta^*\right] \\
   &=& \sum_{t=1}^T \mathbb{E}\left[\max_{a\in \mathcal{A}}h(\phi(a,X_t)^{\top};\eta^*) - h(\phi(A_t,X_t)^{\top};\eta^*) \Big| \eta^*\right],
   \end{eqnarray*}
    where $h(\phi(a,X_t)^{\top};\eta^*)$ denotes the expected proximal outcome for intervention $a$ at time $t$, $\eta^*$ denotes the vector of the true model parameters. For TS-Poisson and TS-NB, $h(\phi(a,X_t)^{\top};\eta^*) = \exp\left\{\phi(a,X_t)^{\top} \beta^*\right\}$, while for TS-ZIP and TS-ZINB, $h(\phi(a,X_t)^{\top};\eta^*) = \left[1-\operatorname{sigmoid}(\phi(a,X_t)^{\top} \gamma^*)\right] \allowbreak \exp\left\{\phi(a,X_t)^{\top} \beta^*\right\}$. Note that, the outer expectation is taken over random variables $A_t$ and $X_t$, which may depend on the randomness in rewards and in the algorithm.
\end{definition}

\begin{definition}[Bayesian Regret] The Bayesian regret of a learning algorithm is defined as $\mathbb{E}[\mathcal{R}(T,\eta^*)]$ 
where $\eta^*$ is considered as a random vector having a Bayesian flavor. Mathematically, we write
   \begin{eqnarray*}
    \mathcal{R}_{Bayes}(T) &=& \sum_{t=1}^T \mathbb{E}\left[\mathbb{E}\left[\Delta_{A_t}|\eta^*\right]\right] \\
    &=&  \sum_{t=1}^T \mathbb{E}\left[\mathbb{E}\left[\max_{a\in \mathcal{A}}h(\phi(a,X_t)^{\top};\eta^*)  -h(\phi(A_t,X_t)^{\top};\eta^*)\Big|\eta^*\right]\right],
    \end{eqnarray*}
   where the outer expectation is taken with respect to the prior distribution over $\eta^*$. 
\end{definition}

\xueqing{The concept of regret is central to measuring the performance of a contextual bandit algorithm. It quantifies the expected difference between the expected proximal outcome generated by the optimal intervention and the intervention selected by the learning algorithm. A lower regret value indicates a better performance of the learning algorithm, with zero regrets being achieved by an oracle algorithm that always selects the optimal intervention. 

Bayesian regret is less informative than its frequentist counterpart because a bound on $\mathcal{R}_{Bayes}(T)$ does not necessarily translate to a meaningful bound on $\mathcal{R}(T,\eta^*)$. However, the converse is true. Frequentist regret, by focusing on the worst-case or minimax regret, offers a robust measure that bounds the maximum possible regret for any specific instance of the problem, thus providing valuable insights into the algorithm's performance under the most challenging conditions \citep{agra2017}. }

\section{Regret Analysis}
\label{theory}
In this section, we present the theoretical guarantees, i.e., regret bounds, for the proposed algorithms. Previous studies, such as \citet{russo2014learning}, have shown that regret analysis for generalized linear bandits can be extended to TS with a Poisson model (TS-Poisson). However, such an extension is not straightforward for TS-NB, TS-ZIP, and TS-ZINB. The complexity arises primarily due to the undetermined overdispersion parameter in NB regression and the parameters governing the zero components in ZIP and ZINB. \xueqing{In what follows, we first establish Bayesian regret bounds for Algorithms \ref{alg1} and \ref{alg2}, assuming the posterior distributions $Q_t(\beta)$ and $Q_t(\gamma)$ are known. We then transition to examining Algorithms \ref{alg3} and \ref{alg4}, where we derive more robust frequentist regret bounds. It is important to note that the concentration results obtained from the Bayesian regret analysis are crucial for formulating the frequentist regret bounds in Algorithms \ref{alg3} and \ref{alg4}.}

To facilitate our analysis, we introduce some new notations.  The norm of a vector $x$ is denoted as $|x| = \sqrt{x^\top x}$, and $\tilde{O}$ denotes the big-$O$ notation up to logarithmic factors in time $T$.  For any positive semi-definite matrix $M$, $\lambda_{\min}(M) \geq 0$ denotes the minimum eigenvalue of $M$. Additionally, $\Delta_{\max}$ represents the maximum gap across all actions $a$,  
namely, $\Delta_{\max} = \max_{a\in \mathcal{A}}\Delta_a$. The regret analysis depends on several key assumptions stated below.

\begin{assumption}
\label{ass1}
    For time $t$ and any intervention $a$, we have $\left\|\phi(a,X_t)\right\| \leq 1$. In addition, we have  $\|\gamma^*\| \leq 1$ and $\|\beta^*\| \leq 1$. 
\end{assumption}
\begin{assumption}
\label{ass2}
    There exists $Y_{\max} >0$ such that for any $t \geq 1$, $0 \leq Y_t \leq Y_{\max}$ holds almost surely. 
\end{assumption}
\begin{assumption}
\label{ass3} The overdispersion parameter $r$ satisfies $0 < r_{\min} \leq r \leq r_{\max}$.
\end{assumption}


Assumption \ref{ass1} is used to make the regret bounds scale-free for convenience; otherwise, we may assume $\left\|\phi(a,X_t)\right\| \leq C$, $\|\gamma^*\| \leq C$, and $\|\beta^*\| \leq C$ instead, and the regret bound would increase by a factor of $C$. In practice, this assumption can also be satisfied by scaling the data. Assumption \ref{ass2} restricts the magnitude of the proximal outcome, which is adapted from the generalized linear bandit literature \citep{filippi2010}. \xueqing{In Lemma S1.1, S1.3, and S1.4, presented across Sections S3.1, S3.3, and S3.4 of the Supplement, we demonstrate that Assumption 2 validates the sub-Gaussian nature of the noise term in each model.} We make an additional Assumption \ref{ass3} to facilitate the regret analysis for TS-NB and TS-ZINB.

\xueqing{
\begin{theorem}
\label{bayes-nb}
Suppose we run TS-NB for a total of $T$ time points. Under Assumptions \ref{ass1}-\ref{ass3} (also satisfying certain mild regularity conditions defined in the Supplement), the Bayesian regret of TS-NB is upper-bounded as
\begin{eqnarray*}
\mathcal{R}_{\text{Bayes}}(T) &\leq&   \sqrt{d}+ \mathcal{O}(1) 
+ \left[2s Y_{\max}
\kappa_{\min}^{-1} \sqrt{d \log (T/ d)+4 \log T}\right]\sqrt{2 d T \log \left(\frac{T}{d }\right)},
\end{eqnarray*}
where $s$ represents the Lipschitz constant for the exponential function, and $\kappa_{\min}$ is a constant (for further details see the Supplement).
\end{theorem}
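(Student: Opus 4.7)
The plan is to follow the UCB-proxy decomposition used in \citet{russo2014learning} for the Bayesian regret of TS in generalized linear bandits, adapted to the negative binomial mean-exp link. Writing $h_t(a) := \exp(\phi(a,X_t)^\top \beta^*)$ for the expected proximal outcome, I would introduce a surrogate UCB
\begin{equation*}
U_t(a) = \exp\bigl(\phi(a,X_t)^\top \hat{\beta}_t\bigr) + s\,\rho_t\,\|\phi(a,X_t)\|_{V_t^{-1}},
\end{equation*}
with $V_t = G_t + I$ and a confidence radius $\rho_t$ to be specified. The Bayesian regret then decomposes as
\begin{equation*}
\mathcal{R}_{\text{Bayes}}(T) = \sum_{t=1}^T \mathbb{E}\bigl[h_t(A_t^*) - U_t(A_t^*)\bigr] + \sum_{t=1}^T \mathbb{E}\bigl[U_t(A_t^*) - U_t(A_t)\bigr] + \sum_{t=1}^T \mathbb{E}\bigl[U_t(A_t) - h_t(A_t)\bigr].
\end{equation*}
The defining posterior-matching property of TS—conditional on the history, $\tilde{\beta}_t$ and $\beta^*$ share the same law, so $A_t$ and the optimal action $A_t^*$ do as well—forces the middle sum to vanish. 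The first sum is nonpositive on the high-probability event that $\rho_t$ is a valid confidence width for $\beta^*$ and contributes only the $\sqrt{d}+\mathcal{O}(1)$ slack from the initial exploration length prescribed by Proposition \ref{cor1} and the complementary low-probability event.

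For the third sum, the Lipschitz property of $\exp(\cdot)$ restricted to the compact region $|\phi(a,X_t)^\top \beta|\le 1$ (guaranteed by Assumption \ref{ass1}), with Lipschitz constant $s$, yields
\begin{equation*}
U_t(A_t) - h_t(A_t) \leq s\bigl(\|\hat{\beta}_t - \beta^*\|_{V_t} + \rho_t\bigr)\,\|\phi(A_t,X_t)\|_{V_t^{-1}}.
\end{equation*}
On the confidence event $\{\|\hat{\beta}_t - \beta^*\|_{V_t}\le \rho_t\}$, Cauchy--Schwarz together with the elliptical potential lemma then gives
\begin{equation*}
\sum_{t=1}^T \|\phi(A_t,X_t)\|_{V_t^{-1}} \leq \sqrt{T \sum_{t=1}^T \|\phi(A_t,X_t)\|_{V_t^{-1}}^2} \leq \sqrt{2dT\log(T/d)},
\end{equation*}
so the third sum is at most $2s\rho_T\sqrt{2dT\log(T/d)}$, matching the product form of the claimed bound once $\rho_T$ is substituted.

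The main obstacle is thus the NB-specific concentration estimate $\rho_t = Y_{\max}\,\kappa_{\min}^{-1}\sqrt{d\log(T/d) + 4\log T}$. Differentiating the NB log-likelihood in Equation \eqref{lik_nb} exposes the score in $\beta$ as a martingale $S_t = \sum_{i<t} \phi(A_i,X_i)\,\psi_i(Y_i,\beta^*,r)$ whose increments are sub-Gaussian with proxy of order $Y_{\max}$ by Assumption \ref{ass2} and the referenced Supplement Lemma S1.1. Assumption \ref{ass3} then plays the crucial role of supplying a uniform lower bound $\kappa_{\min}$ on the quadratic form induced by the expected Hessian of $-\ln L^{\text{NB}}$ along $V_t$, over the feasible parameter ball $\|\beta\|\leq 1$ and the compact dispersion window $r\in[r_{\min},r_{\max}]$. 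Plugging the sub-Gaussian score bound together with this uniform strong convexity into the Abbasi-Yadkori-type self-normalized martingale inequality yields the desired $\tilde O(Y_{\max}\kappa_{\min}^{-1}\sqrt{d\log T})$ confidence radius, and assembling the three pieces gives the theorem. Handling the unknown overdispersion parameter uniformly—so that the Hessian perturbation from plugging in $\hat r$ rather than the true $r$ does not collapse $\kappa_{\min}$—is the principal departure from the standard GLM bandit analysis and is precisely where Assumption \ref{ass3} becomes indispensable.
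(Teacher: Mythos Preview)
Your proposal is correct and follows essentially the same route as the paper: the Russo--Van Roy UCB-proxy decomposition, posterior matching to kill the middle term, an NB-specific concentration bound for $\hat\beta_t$ via the self-normalized martingale inequality applied to the sub-Gaussian score (with $\kappa_{\min}$ from the Hessian lower bound of Condition~1, which is where Assumptions~\ref{ass2}--\ref{ass3} enter), and the elliptical potential lemma to sum the widths. The only cosmetic differences are that the paper places the confidence width \emph{inside} the exponential, $U_t(a)=\exp\{\phi(a,X_t)^\top\hat\beta_t+c_1\|\phi(a,X_t)\|_{G_t^{-1}}\}$, and uses the unregularized $G_t$ with an explicit initial exploration phase $\tau=\sqrt d$ rather than your $V_t=G_t+I$; both choices lead to the same bound.
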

}

Theorem \ref{bayes-nb} establishes $\tilde{O}(d\sqrt{T})$ Bayesian regret bound, which coincides with the Bayesian regret bound for generalized linear TS \citep{russo2014learning}. A detailed proof of Theorem \ref{bayes-nb} is deferred to Section S1.1 of the Supplement.

\xueqing{
\begin{theorem}
\label{bayes-zip}
Suppose we run TS-ZIP for a total of $T$ time points. Under Assumptions \ref{ass1}-\ref{ass3} (also satisfying certain mild regularity conditions defined in the Supplement), the Bayesian regret of TS-ZIP is upper-bounded as
\begin{eqnarray*}
\label{eqzip}
\mathcal{R}_{\text{Bayes}}(T) &\leq&   \sqrt{d} + \mathcal{O}(1) 
+\left[2 q (Y_{\max } \kappa_{1,\min}^{-1} + \kappa_{2,\min}^{-1} )\sqrt{d \log (T / d)+4 \log T}\right] \sqrt{2 d T \log \left(\frac{T}{d}\right)},
\end{eqnarray*}
where $q = \sqrt{2}e$, $\kappa_{1,\min}$ and $\kappa_{2,\min}$ are constants (for further details see the Supplement).  
\end{theorem}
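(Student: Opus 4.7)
The plan is to adapt the generic Bayesian regret framework of Russo and Van Roy (2014) and the generalized linear bandit analysis underlying Theorem \ref{bayes-nb}, accommodating the additional parameter $\gamma^{*}$ of the zero-inflation component. The starting point is that under TS, conditional on the history $D_{t}$ the sampled pair $(\tilde{\beta}_{t},\tilde{\gamma}_{t})$ is distributionally identical to $(\beta^{*},\gamma^{*})$; this equidistribution property lets us pass from the optimal-action regret to a pointwise deviation bound between $h(\phi_{t};\tilde{\beta}_{t},\tilde{\gamma}_{t})$ and the corresponding quantity evaluated at the MLEs, without incurring an ``optimism gap''.

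First, I would decompose the instantaneous regret via a telescoping Lipschitz argument. Writing $h(\phi;\beta,\gamma)=(1-\operatorname{sigmoid}(\phi^{\top}\gamma))\exp(\phi^{\top}\beta)$ and using Assumption~\ref{ass1} to bound $|\phi^{\top}\beta|\le 1$, $|\phi^{\top}\gamma|\le 1$, the mean function is Lipschitz in each linear index: $\exp$ has Lipschitz constant $s\le e$ on $[-1,1]$, and $\operatorname{sigmoid}$ has Lipschitz constant $\le 1/4$. Splitting across the two components gives
\begin{equation*}
\bigl|h(\phi_{t};\beta^{*},\gamma^{*})-h(\phi_{t};\tilde{\beta}_{t},\tilde{\gamma}_{t})\bigr|\;\le\; q\,Y_{\max}\,\bigl|\phi_{t}^{\top}(\tilde{\beta}_{t}-\beta^{*})\bigr|\;+\;q\,\bigl|\phi_{t}^{\top}(\tilde{\gamma}_{t}-\gamma^{*})\bigr|,
\end{equation*}
where $q=\sqrt{2}e$ accounts for the product of the exponential bound and the combination of the two Lipschitz terms through a Cauchy--Schwarz step. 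This decomposition reduces the problem to controlling the linear-index deviations for $\beta$ and $\gamma$ separately.

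Second, I would invoke the sub-Gaussian concentration of the ZIP score functions (this is the content of Lemma~S1.3 in the Supplement, and is where Assumption~\ref{ass2} enters: bounded $Y_{t}$ makes the Poisson-part residual bounded, and the Bernoulli latent indicator is trivially bounded). This gives confidence ellipsoids around $\hat{\beta}_{t}$ and $\hat{\gamma}_{t}$ in the weighted Fisher-norm, whose widths scale like $\kappa_{1,\min}^{-1}\sqrt{d\log(T/d)+4\log T}$ and $\kappa_{2,\min}^{-1}\sqrt{d\log(T/d)+4\log T}$, with $\kappa_{1,\min}$, $\kappa_{2,\min}$ serving as uniform lower bounds on the relevant Fisher-information block eigenvalues. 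Combining this concentration with the TS equidistribution property and the triangle inequality produces, with high probability, the pointwise bounds
\begin{equation*}
\bigl|\phi_{t}^{\top}(\tilde{\beta}_{t}-\beta^{*})\bigr|\;\lesssim\;\kappa_{1,\min}^{-1}\sqrt{d\log(T/d)+4\log T}\;\|\phi_{t}\|_{G_{t}^{-1}},
\end{equation*}
and similarly for $\gamma$ with $\kappa_{2,\min}^{-1}$. Summing over $t$ and applying the elliptical potential lemma $\sum_{t}\|\phi_{t}\|_{G_{t}^{-1}}^{2}\le 2d\log(T/d)$ followed by Cauchy--Schwarz yields $\sum_{t}\|\phi_{t}\|_{G_{t}^{-1}}\le \sqrt{2dT\log(T/d)}$. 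Plugging in produces the dominant term of the stated bound; the $\sqrt{d}$ summand absorbs the regret over the initial invertibility phase (controlled by Proposition~\ref{cor1}), and the $\mathcal{O}(1)$ summand handles the low-probability failure events outside the confidence ellipsoids.

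The hardest step, in my view, is establishing the sub-Gaussian concentration of $\hat{\gamma}_{t}$ in the ZIP model. Unlike standard GLMs, the $\gamma$-score involves the \emph{latent} zero-indicator $Z_{i}$, whose posterior mean is itself a nonlinear function of both $\beta$ and $\gamma$ and of the observation; the Fisher information is only asymptotically block-diagonal, so the confidence sets for $\beta$ and $\gamma$ cannot be treated in complete isolation. Ensuring that $\kappa_{1,\min}$ and $\kappa_{2,\min}$ remain bounded away from zero in a neighborhood of the truth, and that the residuals defining the self-normalized martingale for $\gamma$ are genuinely sub-Gaussian (rather than merely sub-exponential), is what requires the detailed regularity conditions deferred to the Supplement. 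Once that lemma is in hand, the rest of the argument parallels the TS-NB analysis mutatis mutandis.
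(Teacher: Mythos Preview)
Your proposal captures the same architecture as the paper's proof: the posterior-matching (Russo--Van Roy) trick, a Lipschitz/gradient bound on the bivariate mean function $h$ yielding the constant $q=\sqrt{2}e$, sub-Gaussian concentration of the ZIP score functions (the paper's Lemma~\ref{lem3}) delivering the $\kappa_{1,\min}^{-1}$ and $\kappa_{2,\min}^{-1}$ radii, and the elliptical potential lemma for the sum $\sum_t\|\phi(A_t,X_t)\|_{G_t^{-1}}$. Your identification of the $\gamma$-concentration as the delicate step is also accurate.

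There is, however, one genuine gap in the decomposition. You write that equidistribution ``lets us pass from the optimal-action regret to a pointwise deviation bound between $h(\phi_t;\tilde{\beta}_t,\tilde{\gamma}_t)$ and the corresponding quantity evaluated at the MLEs, without incurring an optimism gap,'' and then display a Lipschitz bound on $|h(\phi_t;\beta^*,\gamma^*)-h(\phi_t;\tilde{\beta}_t,\tilde{\gamma}_t)|$. But this quantity is not the instantaneous regret: the regret compares $h(\phi(A_t^*,X_t);\beta^*,\gamma^*)$ against $h(\phi(A_t,X_t);\beta^*,\gamma^*)$, i.e.\ two \emph{different} actions at the \emph{same} true parameter. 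Equidistribution of $(\tilde\beta_t,\tilde\gamma_t)$ and $(\beta^*,\gamma^*)$ given $\mathcal{F}_t$ does not by itself let you swap $A_t^*$ for $A_t$. What the paper does---and what your sketch omits---is insert an explicit $\mathcal{F}_t$-measurable upper confidence function
\[
U_t(a)=\bigl[1-\operatorname{sigmoid}(\phi(a,X_t)^\top\hat\gamma_t-c_1^\gamma\|\phi(a,X_t)\|_{G_t^{-1}})\bigr]\exp\bigl\{\phi(a,X_t)^\top\hat\beta_t+c_1^\beta\|\phi(a,X_t)\|_{G_t^{-1}}\bigr\},
\]
and then decompose $\mathbb{E}_t[\Delta_{A_t}]$ into $\mathbb{E}_t[R_t(A_t^*,\gamma^*,\beta^*)-U_t(A_t^*)]+\mathbb{E}_t[U_t(A_t^*)-U_t(A_t)]+\mathbb{E}_t[U_t(A_t)-R_t(A_t,\gamma^*,\beta^*)]$. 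The middle term vanishes precisely because $U_t$ is $\mathcal{F}_t$-measurable and $A_t^*,A_t$ are conditionally i.i.d.; the first term is nonpositive (plus an $\mathcal{O}(1)$ failure cost) by the concentration lemma; and only the third term receives your gradient/Lipschitz treatment. Without this intermediate $U_t$, the passage from action-regret to parameter-deviation is unjustified.

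A smaller point: in your displayed Lipschitz inequality the factor $Y_{\max}$ is misplaced. The Lipschitz constant of $h$ in its $\beta$-index does not carry $Y_{\max}$; under Assumption~\ref{ass1} the gradient of $f(x,y)=\exp(y)/(1+\exp(x))$ is bounded by $q=\sqrt{2}e$ in both coordinates. The $Y_{\max}$ enters later, through the sub-Gaussian parameter of the $\beta$-score residual in Lemma~\ref{lem3}, which sets $c_1^\beta=Y_{\max}\kappa_{1,\min}^{-1}\sqrt{d\log(T/d)+4\log T}$. The final bound is the same, but the bookkeeping in your first display should read $q\,|\phi_t^\top(\tilde\beta_t-\beta^*)|+q\,|\phi_t^\top(\tilde\gamma_t-\gamma^*)|$, with $Y_{\max}$ appearing only once you substitute the concentration radii.
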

}

Theorem \ref{bayes-zip} also establishes $\tilde{O}(d\sqrt{T})$ Bayesian regret bound for TS-ZIP. The proof is deferred to Sections S1.2 of the Supplement. The Bayesian regret bound for TS-ZINB has the same form as TS-ZIP, but under slightly different conditions, and the proof can be found in Section S1.3 of the Supplement. 




In the second part of our analysis, we turn our focus to the evaluation of frequentist regret associated with the proposed algorithms.  Frequentist regret, in contrast to Bayesian regret, provides a measure of performance under the assumption that true parameters of the model are fixed but unknown, rather than being treated as random variables \citep{hamidi2020worst}. 

\xueqing{
\begin{theorem}
\label{regret-nb}
Suppose we run TS-NB for a total of $T$ time points. Under the Assumptions \ref{ass1}-\ref{ass3} (also satisfying certain mild regularity conditions defined in the Supplement), the regret of TS-NB is upper-bounded as
    \begin{eqnarray*}
        \mathcal{R}(T,\beta^*) &\leq& (\tau+ 2)\Delta_{\max}+ sY_{\max} \kappa_{\min }^{-1}C\left(1+\sqrt{2 \kappa_{\min }^{-1} \kappa_{\max } \log (K T)}\right)\\
    &\times&\sqrt{d \log (T / d)+2 \log T}  \sqrt{2 dT \log (T / d)},
    \end{eqnarray*}
    where the number of exploration rounds $\tau$ satisfies $\lambda_{\min}(G_\tau) \geq 1$. The constant $C$ is in the proof. 
\end{theorem}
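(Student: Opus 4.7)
The plan is to adapt the frequentist regret analysis for Thompson sampling in generalized linear bandits (in the style of Abeille and Lazaric, 2017; Filippi et al., 2010; Li et al., 2017) to accommodate the negative binomial model's unknown overdispersion parameter. I would first split the regret into two contributions: the initial $\tau$ exploration rounds, each contributing at most $\Delta_{\max}$ (giving the $\tau\Delta_{\max}$ term), and the post-exploration rounds for $t \geq \tau+1$ where Proposition \ref{cor1} guarantees $\lambda_{\min}(G_t) \geq 1$ so the MLE is well-defined.

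For $t \geq \tau+1$, the core work is a single-step regret decomposition. Since the mean function is $h(\phi;\beta)=\exp(\phi^\top\beta)$, Assumption \ref{ass1} makes $h$ Lipschitz with constant $s=e$ over the relevant domain, so
\begin{equation*}
\Delta_{A_t} \;\leq\; s\,\bigl|\phi(a_t^\star,X_t)^\top\beta^* - \phi(A_t,X_t)^\top\tilde\beta_t\bigr| + s\,\bigl|\phi(A_t,X_t)^\top(\tilde\beta_t-\beta^*)\bigr|.
\end{equation*}
I would then invoke the standard TS optimism-in-expectation argument: on the high-probability event that $\|\hat\beta_t-\beta^*\|_{V_t}$ and $\|\tilde\beta_t-\hat\beta_t\|_{V_t}$ are small (with $V_t = \mathcal{I}_t(\hat\beta_t)$ or $G_t$), and that with constant probability the sampled $\tilde\beta_t$ yields an optimistic predicted reward, the first term can also be bounded by a multiple of $\|\phi(A_t,X_t)\|_{V_t^{-1}}$ times a confidence radius. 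This gives $\Delta_{A_t} \lesssim s\cdot(\text{radius}_t)\cdot\|\phi(A_t,X_t)\|_{V_t^{-1}}$.

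The two concentration ingredients required are: (i) an MLE concentration bound for NB regression showing $\|\hat\beta_t-\beta^*\|_{V_t} \lesssim Y_{\max}\kappa_{\min}^{-1}\sqrt{d\log(T/d)+2\log T}$ with probability at least $1-1/T$; and (ii) a Gaussian concentration bound for the Laplace sample giving $\|\tilde\beta_t-\hat\beta_t\|_{V_t} \lesssim \alpha_\beta\sqrt{2\kappa_{\min}^{-1}\kappa_{\max}\log(KT)}$ after converting from the Fisher-information norm to the $V_t$ norm (which produces the $\sqrt{\kappa_{\min}^{-1}\kappa_{\max}}$ factor matching the statement). The constant $C$ in the theorem absorbs the tuning-parameter scaling $\alpha_\beta$ and the Gaussian tail constants. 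Summing over $t$ and applying the elliptical-potential lemma,
\begin{equation*}
\sum_{t=\tau+1}^T \|\phi(A_t,X_t)\|_{V_t^{-1}} \;\leq\; \sqrt{T\sum_{t=\tau+1}^T \|\phi(A_t,X_t)\|_{V_t^{-1}}^2} \;\leq\; \sqrt{2dT\log(T/d)},
\end{equation*}
produces the advertised $\tilde O(d\sqrt{T})$ bound, with the additional $+2\Delta_{\max}$ coming from the constant number of failure-event rounds.

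The main obstacle will be step (i): establishing MLE concentration for NB regression where the dispersion parameter $r$ is unknown. Unlike a clean GLM, the NB log-likelihood in \eqref{lik_nb} involves gamma functions and $r$ is jointly estimated, so the Fisher information block for $\beta$ depends on $r$. This is where Assumption \ref{ass3} ($r\in[r_{\min},r_{\max}]$) is essential: it lets me lower-bound the curvature of the profile log-likelihood in $\beta$ by $\kappa_{\min}$, upper-bound it by $\kappa_{\max}$, and (together with Assumption \ref{ass2} and the sub-Gaussian noise lemma cited from the supplement) control the score function via a self-normalized martingale inequality à la Abbasi-Yadkori et al. Once this NB-specific curvature/concentration result is in hand, the remainder of the proof is a direct transcription of the GLM-TS argument, with $s$, $Y_{\max}$, $\kappa_{\min}$, $\kappa_{\max}$ entering exactly where the statement of the theorem predicts.
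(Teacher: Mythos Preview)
Your proposal is essentially correct and identifies the right ingredients---MLE concentration (Lemma~\ref{lem1}), Gaussian anti-concentration/concentration of the Laplace sample (the paper's events $E_{2,t}$, $E_{3,t}$ via Lemma~\ref{lem6}), and the elliptical-potential lemma---but the single-step regret argument you sketch differs from the paper's.

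You propose the Abeille--Lazaric ``optimism in expectation'' decomposition, adding and subtracting $\phi(A_t,X_t)^\top\tilde\beta_t$ and arguing that the optimal-arm term is non-positive whenever the sample is optimistic. The paper instead follows the Kveton et al.\ \emph{undersampled-arm} technique (its Lemma~\ref{lem5}): it defines $\bar S_t=\{a:(c_1+c_2)\|\phi(a,X_t)\|_{G_t^{-1}}\geq\tilde\Delta_a\}$, the least-uncertain undersampled arm $J_t=\arg\min_{a\in\bar S_t}\|\phi(a,X_t)\|_{G_t^{-1}}$, bounds $\Delta_{A_t}$ in terms of $\|\phi(A_t,X_t)\|_{G_t^{-1}}+2\|\phi(J_t,X_t)\|_{G_t^{-1}}$, and then converts the $J_t$ term to an $A_t$ term via $\mathbb E_t[\|\phi(A_t,X_t)\|_{G_t^{-1}}]\geq\|\phi(J_t,X_t)\|_{G_t^{-1}}\,\mathbb P_t(A_t\in\bar S_t)$, lower-bounding $\mathbb P_t(A_t\in\bar S_t)$ by $p_3-p_2$ through the optimism event $E_{3,t}$. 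Both routes land on the same $\tilde O(d\sqrt{T\log K})$ bound with constant $C\approx 1+2/(p_3-p_2)$; your route is slightly more direct conceptually, while the paper's avoids tracking $\|\phi(a_t^\star,X_t)\|_{G_t^{-1}}$ (which cannot be summed by the potential lemma) via the $J_t$ device rather than the Abeille--Lazaric transfer argument you leave implicit.

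On the NB-specific obstacle you flag: the paper handles the unknown dispersion parameter more simply than you anticipate. Rather than a profile-likelihood argument, it observes that the $\beta$-score satisfies $\|S_t(\beta^*)\|_{G_t^{-1}}\leq\|Z_t\|_{G_t^{-1}}$ with $Z_t=\sum_i(Y_i-\mu_i)\phi(A_i,X_i)$ because the NB-specific factor $r/(r+\mu_i)$ is bounded by $1$ regardless of $r$. The curvature bounds $\kappa_{\min},\kappa_{\max}$ are simply assumed as Condition~1 on the Hessian weight $g_t(\beta)=r(r+Y_t)\mu_t/(r+\mu_t)^2$, with Assumptions~\ref{ass2}--\ref{ass3} making this plausible. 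So the self-normalized martingale step is essentially the standard GLM one, not a new NB-specific result.
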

}

Theorem \ref{regret-nb} establishes $\tilde{O}(d\sqrt{T\log K})$ frequentist regret bound. We defer the proof of Theorem \ref{regret-nb} to Section S2.1 of the Supplement.

\xueqing{
\begin{theorem}
\label{regret-zip}
Suppose we run TS-ZIP for a total of $T$ time points. Under Assumptions \ref{ass1}-\ref{ass3} (also satisfying certain mild regularity conditions defined in the Supplement), the regret of TS-ZIP is upper-bounded as
    \begin{eqnarray*}
    \label{regretzip}
        \mathcal{R}(T,\gamma^*, \beta^*) &\leq& (\tau+4-2/T)\Delta_{\max}+ qD\sqrt{d \log (T / d)+2 \log T}\sqrt{2 dT \log (T / d)}\\
        &\times&
         \Bigg[Y_{\max} \kappa_{1,\min }^{-1}\left(1+\sqrt{2 \kappa_{1,\min }^{-1} \kappa_{1,\max } \log (K T) }\right)\\
        &+& \kappa_{2,\min }^{-1}\left(1+\sqrt{2 \kappa_{2,\min }^{-1} \kappa_{2,\max } \log (K T)}\right)  \Bigg],
    \end{eqnarray*}
where the number of exploration rounds $\tau$ satisfies $\lambda_{\min}(G_\tau) \geq 1$. The constant $D$ is in the proof. 
\end{theorem}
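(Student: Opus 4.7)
The plan is to adapt the frequentist regret analysis of Theorem \ref{regret-nb} (TS-NB) to the zero-inflated setting, in which two parameter vectors $\gamma^*$ and $\beta^*$ must be simultaneously estimated. The overall structure follows the standard Abeille--Lazaric style proof for TS under generalized linear-type bandits, combining MLE concentration, Gaussian anti-concentration of the Laplace sample, Lipschitz continuity of the reward function, and a self-normalized elliptical-potential sum.

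First, I would decompose the cumulative regret into the contribution from the initial exploration phase of length $\tau$, which is bounded trivially by $\tau \Delta_{\max}$, and the contribution from $t>\tau$, where Proposition \ref{cor1} guarantees $\lambda_{\min}(G_t)\ge 1$ and hence that $\hat{\gamma}_t$ and $\hat{\beta}_t$ are well-defined. The extra $(4-2/T)\Delta_{\max}$ absorbs the failure probabilities of the high-probability events described below. For $t>\tau$, I would first establish MLE concentration in Mahalanobis norm, namely $\|\hat{\gamma}_t-\gamma^*\|_{\mathcal{I}_t(\hat{\gamma}_t)}\lesssim \sqrt{d\log(T/d)+2\log T}$ and an analogous bound for $\hat{\beta}_t$. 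These rely on self-normalized martingale concentration for the score functions of the ZIP log-likelihood in Equation \eqref{like_zip}, combined with the sub-Gaussianity of the relevant noise terms supplied by Lemma S1.3 of the Supplement under Assumption \ref{ass2}.

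Next, I would invoke Gaussian anti-concentration for the Laplace samples $\tilde{\gamma}_t\sim\mathcal{N}(\hat{\gamma}_t,\alpha_\gamma^2\mathcal{I}_t^{-1}(\hat{\gamma}_t))$ and $\tilde{\beta}_t\sim\mathcal{N}(\hat{\beta}_t,\alpha_\beta^2\mathcal{I}_t^{-1}(\hat{\beta}_t))$ to construct an optimism event on which the sampled reward is at least $\max_a h(\phi(a,X_t);\gamma^*,\beta^*)$, with constant probability. The per-step regret is then split as
\begin{align*}
\Delta_{A_t}
&= \bigl[h(\phi(A_t,X_t);\tilde{\gamma}_t,\tilde{\beta}_t)-h(\phi(A_t,X_t);\gamma^*,\beta^*)\bigr] \\
&\quad + \bigl[\max_{a}h(\phi(a,X_t);\gamma^*,\beta^*) - h(\phi(A_t,X_t);\tilde{\gamma}_t,\tilde{\beta}_t)\bigr],
\end{align*}
and on the optimism event both pieces are controlled by the Lipschitz decomposition
\begin{align*}
h(\phi;\tilde{\gamma},\tilde{\beta})-h(\phi;\gamma^*,\beta^*)
&= \bigl[1-\operatorname{sigmoid}(\phi^\top\tilde{\gamma})\bigr]\bigl[\exp(\phi^\top\tilde{\beta})-\exp(\phi^\top\beta^*)\bigr] \\
&\quad + \bigl[\operatorname{sigmoid}(\phi^\top\gamma^*)-\operatorname{sigmoid}(\phi^\top\tilde{\gamma})\bigr]\exp(\phi^\top\beta^*),
\end{align*}
so that the two parameter errors decouple. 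Using Assumptions \ref{ass1}--\ref{ass2} to bound $\exp(\phi^\top\beta)\le Y_{\max}$, the sigmoid's $1/4$ Lipschitz constant, and the exponential's local Lipschitz constant $Y_{\max}$, each piece is majorised by a constant multiple of $\|\phi(A_t,X_t)\|_{\mathcal{I}_t^{-1}}$ times $\|\tilde{\gamma}_t-\gamma^*\|_{\mathcal{I}_t}$ or $\|\tilde{\beta}_t-\beta^*\|_{\mathcal{I}_t}$. This explains why the final bound is a sum of a $Y_{\max}\kappa_{1,\min}^{-1}$-type term and a $\kappa_{2,\min}^{-1}$-type term.

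Summing the per-step gaps over $t=\tau+1,\dots,T$ and applying the elliptical-potential lemma to each of the two sequences $\|\phi(A_t,X_t)\|_{\mathcal{I}_t^{-1}(\hat{\gamma}_t)}$ and $\|\phi(A_t,X_t)\|_{\mathcal{I}_t^{-1}(\hat{\beta}_t)}$ yields the $\sqrt{2dT\log(T/d)}$ factor; the $\sqrt{\log(KT)}$ factor comes from a union bound over the $K$ arms when converting the posterior tail bound into a uniform concentration of $\tilde{\gamma}_t,\tilde{\beta}_t$ over the candidate maximisers. The constant $D$ absorbs the anti-concentration mass and the $1/4$ sigmoid slope; it will be defined inside the proof. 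The main obstacle, and the source of novelty over Theorem \ref{regret-nb}, is handling the product structure of $h$: one must carefully track how the two independent Gaussian samples interact multiplicatively and verify that the optimism event for the joint reward can still be secured with constant probability despite depending on both $\tilde{\gamma}_t$ and $\tilde{\beta}_t$, and separately propagate the concentration/anti-concentration trade-offs governed by the two distinct $\kappa$-constants.
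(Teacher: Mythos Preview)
Your proposal identifies the right ingredients and would yield the stated bound, but the paper takes a different route for the optimism step. Rather than the Abeille--Lazaric-style split $\Delta_{A_t}=[h(A_t;\tilde\gamma_t,\tilde\beta_t)-h(A_t;\gamma^*,\beta^*)]+[\max_a h-h(A_t;\tilde\gamma_t,\tilde\beta_t)]$, the paper follows the randomized-exploration template of \citet{kveton2020randomized}: it introduces a \emph{jointly} undersampled arm set $\bar S_t=\{a: c^\beta\|\phi(a,X_t)\|_{G_t^{-1}}\ge\tilde\Delta_a^\beta \text{ and } c^\gamma\|\phi(a,X_t)\|_{G_t^{-1}}\ge\tilde\Delta_a^\gamma\}$, sets $J_t=\arg\min_{a\in\bar S_t}\|\phi(a,X_t)\|_{G_t^{-1}}$, bounds $\Delta_{A_t}$ on the concentration events via a mean-value bound on $f(x,y)=\exp(y)/(1+\exp(x))$ (giving $q=\sqrt{2}e=\sup\|\nabla f\|$ under Assumption~\ref{ass1}, in place of your additive Lipschitz split), and then proves $\mathbb{P}_t(A_t\in\bar S_t)\ge p_3^\beta p_3^\gamma-p_2^\beta-p_2^\gamma+p_2^\beta p_2^\gamma$ using the independence of the two Laplace samples. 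This inequality is exactly what produces $D=1+2/(0.0225-2/T+1/T^2)$ and, together with the union of the two $E_{1,t}$ failure events, the $(4-2/T)\Delta_{\max}$ term. The $J_t$ mechanism handles the non-optimism branch directly, which is the part your proposal leaves vague: your second piece is $\le 0$ on optimism (not ``controlled by Lipschitz'' as you wrote), and off optimism one still needs to relate $\|\phi(A_t^*,X_t)\|_{G_t^{-1}}$ back to $\mathbb{E}_t[\|\phi(A_t,X_t)\|_{G_t^{-1}}]$ --- precisely what the $J_t$ construction supplies.

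Two minor corrections. First, the elliptical-potential lemma is applied to the unweighted Gram matrix $G_t$, not to $\mathcal{I}_t(\hat\beta_t)$ or $\mathcal{I}_t(\hat\gamma_t)$; Conditions 2--3 of the Supplement give $\kappa_{\min}G_t\preceq\mathcal{I}_t\preceq\kappa_{\max}G_t$, which is how the $\kappa$-ratios enter. Second, under Assumption~\ref{ass1} one has $\exp(\phi^\top\beta)\le e$, not $Y_{\max}$; the factor $Y_{\max}$ enters only as the sub-Gaussian parameter of the $\beta$-score noise (Lemma~S1.3), which is why it sits in front of $\kappa_{1,\min}^{-1}$ in the bound but not in front of $\kappa_{2,\min}^{-1}$.
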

}

Theorem \ref{regret-zip} also establishes $\tilde{O}(d\sqrt{T\log K})$ frequentist regret bound for TS-ZIP. A comprehensive proof can be found in Section S2.2 of the Supplement. Notably, the frequentist regret bound for TS-ZINB aligns with Theorem \ref{regret-zip}, albeit under slightly different conditions, as detailed in Section S2.2 of the Supplement. 

\xueqing{
\begin{remark}
    The regret bound $\tilde{O}(d\sqrt{T\log K})$ aligns with the regret bounds observed in generalized linear bandits with finite interventions \citep{kveton2020randomized, filippi2010, li2017provably}, up to a factor of $\sqrt{\log K}$. Importantly, \citet{li2017provably} suggests that a better frequentist regret bound for TS-Count can be established by imposing an additional condition on the minimum eigenvalue of $G_t$, that is, $\sum_{\tau=1}^T \lambda_{\min}^{-1/2}(G_t) \leq z \sqrt{T}$. The arguments presented by \citet{li2017provably} are also applicable in this context. 
\end{remark}
}

\section{Simulation Studies}
\label{simu}
In this section, we demonstrate the performance of the proposed TS-Count variants with a normal approximation under a wide range of simulation scenarios. Specifically, we compare the performances of the proposed TS-Poisson, TS-NB, TS-ZIP, and TS-ZINB with the existing Linear TS algorithm proposed by \citet{agrawal2013thompson}. To ensure a fair comparison, we apply a log transformation to the proximal outcome and subsequently utilize the Linear TS algorithm.

\subsection{Setup}
For the simulation setup, we consider a scenario with $K=20$ intervention options over a time horizon $T=1000$. For each intervention $a$ at every time point $t$, the feature vector $\phi(a,X_t)$ is directly sampled from a multivariate normal distribution, specifically $\operatorname{MVN}(\mathbf{0},\mathbf{I}_{4\times 4})$, where $\mathbf{I}_{4\times 4}$ denotes the diagonal unit matrix.  Moreover, we generate the true parameters $\gamma^*$ and $\beta^*$ from a multivariate normal distribution $\operatorname{MVN}(\mathbf{0},\mathbf{I}_{4\times 4})$.  We normalize the feature and true parameters to satisfy $\|\phi(a,X_t)\| \leq 1$, $\|\gamma^*\| \leq 1$ and $\|\beta^*\| \leq 1$. 

Our simulation studies encompass a total of eight settings, with the stochastic proximal outcome generated from (1) a Poisson distribution, (2) an overdispersed Poisson (OP) distribution with a low dispersion level, (3) an OP distribution with a moderate dispersion level, (4) an OP distribution with a high dispersion level, (5) a ZIP distribution, (6) a zero-inflated and overdispersed Poisson (ZIOP) distribution with a low dispersion level, (7) a ZIOP distribution with a moderate dispersion level, and (8) a ZIOP distribution with a high dispersion level.

For Settings (1) - (4), the underlying model for generating count outcomes $Y_t$ follows the structure:
\begin{eqnarray}
\label{gen1}
    Y_t \sim \operatorname{Poisson}(\lambda_t \mu_t), ~\ \lambda_t \sim \operatorname{Gamma}(\omega, 1/\omega),~\ \mu_t = \exp\left(\phi(A_t, X_t)^{\top}\beta^*\right).
\end{eqnarray}
In this case, the outcomes are generated from an OP model, where $\lambda_t$ introduces extra overdispersion, and $\omega$ controls the level of dispersion. We induce high, moderate, and low levels of overdispersion by setting $\omega = 0.25, 1, $ and $25$, respectively. When setting $\lambda_t=1$, we recover the Poisson distribution. 

For Settings (5) - (8), the underlying model for generating count outcomes $Y_t$ is defined as follows:
\begin{eqnarray}
\label{gen2}
    Z_t &\sim& \operatorname{Bernoulli} (1-p_t), ~\ p_t = \operatorname{sigmoid}\left(\phi(A_t, X_t)^{\top}\gamma^*\right), \nonumber\\
    C_t &\sim& \operatorname{Poisson}(\lambda_t \mu_t), ~\ \lambda_t \sim \operatorname{Gamma}(\omega, 1/\omega),~\ \mu_t = \exp\left(\phi(A_t, X_t)^{\top}\beta^*\right), \nonumber\\
    Y_t &=& Z_tC_t.
\end{eqnarray}
Here, the outcomes are generated from a ZIOP model. We also consider scenarios without zero-inflation and overdispersion by setting both $Z_t$ and $\lambda_t$ to $1$. 

In all simulations, we set the tuning parameter $\alpha=1$, which implements the Laplace approximation. We designated the initial random exploration period as $\tau=20$, during which each intervention was selected once to establish the initial estimators. Performance was assessed by averaging frequentist regret over 200 replications.

\xueqing{To further evaluate performance, we compare the proposed algorithms with their TS counterparts, which use MCMC methods to approximate posterior distributions. For MCMC-based TS algorithms, we use priors $\beta^* \sim \mathcal{N}(0,I)$  and  $\gamma^*\sim \mathcal{N}(0,I)$. We assess the total running time and regret over  $200$ replications. The stochastic outcomes are generated under Setting (7). 
Due to computational constraints, we set $T=200$ for these evaluations.}

\subsection{Results}
The results under Settings (1) - (4) are presented in Figure \ref{fig:mainfig1}. Our findings reveal that the TS-Count variants exhibit superior performance compared to the Linear TS algorithm even with log transformation, which demonstrates consistently high and linear regret across all settings. Furthermore, our results suggest that TS-Poisson outperforms other algorithms, including TS-NB, particularly in settings with varying degrees of overdispersion. This is attributed to the reliability of Poisson regression in yielding consistent estimates when the conditional mean model is appropriately specified, meeting the requirement for sublinear regret, as demonstrated in the proof of regret. 

Figure \ref{fig:mainfig2} provides a comprehensive summary of the simulation results for Settings (5) - (8), where the underlying outcome model incorporates zero-inflation. Similarly, Linear TS with log transformation exhibits inferior performance when compared to other algorithms. Notably, our observations indicate that TS-Poisson and TS-NB demonstrate linear regret across various settings due to not addressing zero-inflation. In contrast, TS-ZIP consistently exhibits superior performance in scenarios involving zero-inflated outcomes, regardless of the presence or absence of overdispersion.

\xueqing{In Figure \ref{fig:mainfig3}, we present the comparison results between the proposed algorithms and their MCMC counterparts. Figure \ref{fig:subfig9} demonstrates that the proposed TS-Count algorithms significantly reduce computation time compared to their MCMC counterparts. Figure \ref{fig:subfig10} reveals that the proposed algorithms achieve similar levels of regret as their MCMC versions. Overall, the proposed algorithms offer substantial computational efficiency without compromising performance, making them practically advantageous.
}

\begin{figure}[ht]
    \centering
    \subfloat[No overdispersion]{\includegraphics[width=0.5\textwidth]{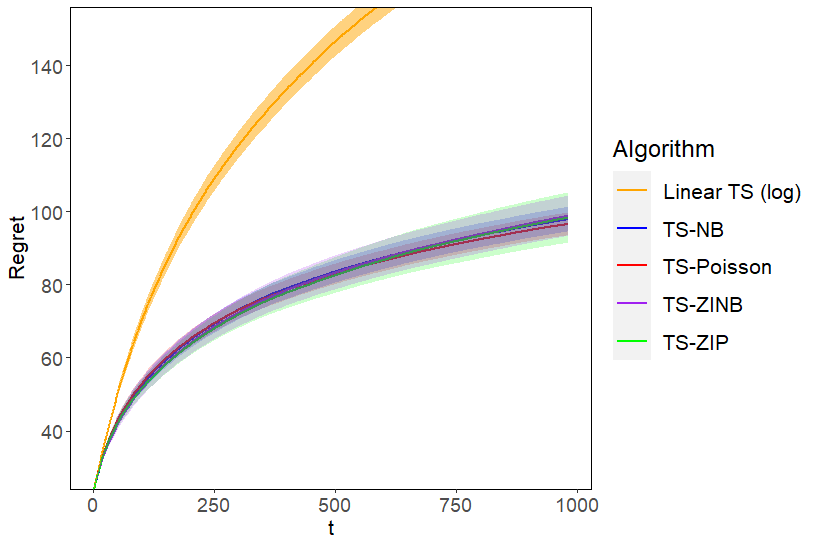}\label{fig:subfig1}}
    \hfill
    \subfloat[Low overdispersion]{\includegraphics[width=0.5\textwidth]{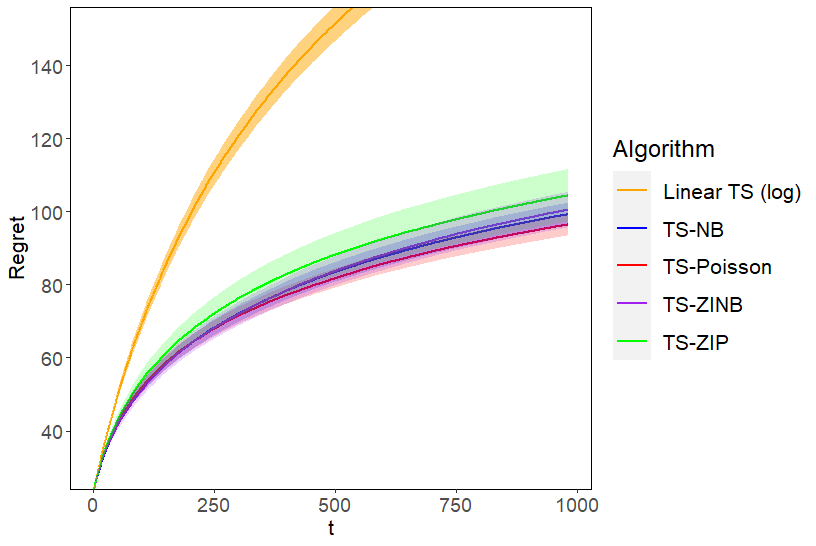}\label{fig:subfig2}}
    
    \subfloat[Moderate overdispersion]{\includegraphics[width=0.5\textwidth]{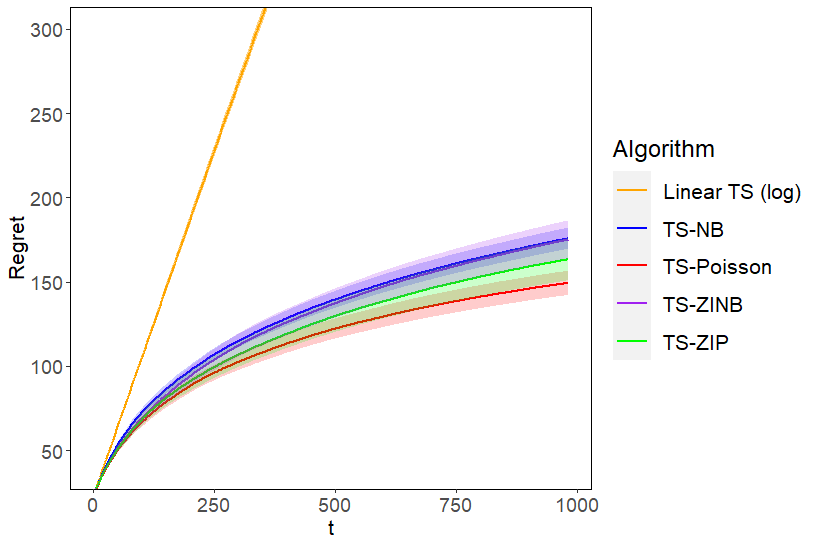}\label{fig:subfig3}}
    \hfill
    \subfloat[High overdispersion]{\includegraphics[width=0.5\textwidth]{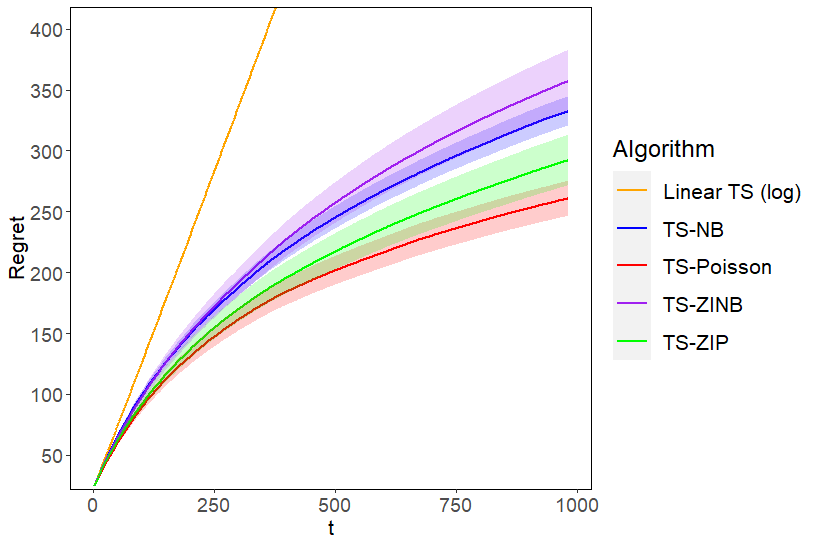}\label{fig:subfig4}}
    
    \caption{Simulation results of the compared algorithms under Settings (1) - (4). The results are calculated from $200$ replications of the experiment. The solid lines indicate the mean
values, while the shaded bands represent the standard error bounds across the independent replications.}
    \label{fig:mainfig1}
\end{figure}

\begin{figure}[ht]
    \centering
    \subfloat[No overdispersion]{\includegraphics[width=0.5\textwidth]{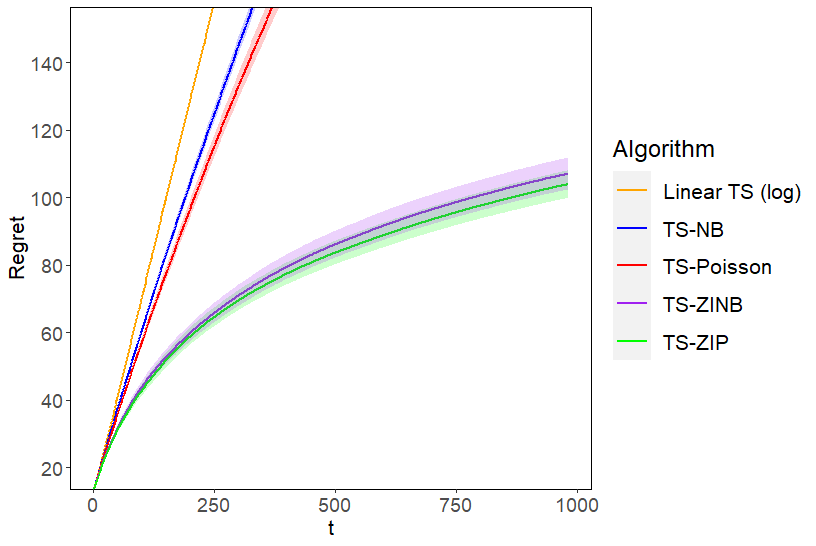}\label{fig:subfig5}}
    \hfill
    \subfloat[Low overdispersion]{\includegraphics[width=0.5\textwidth]{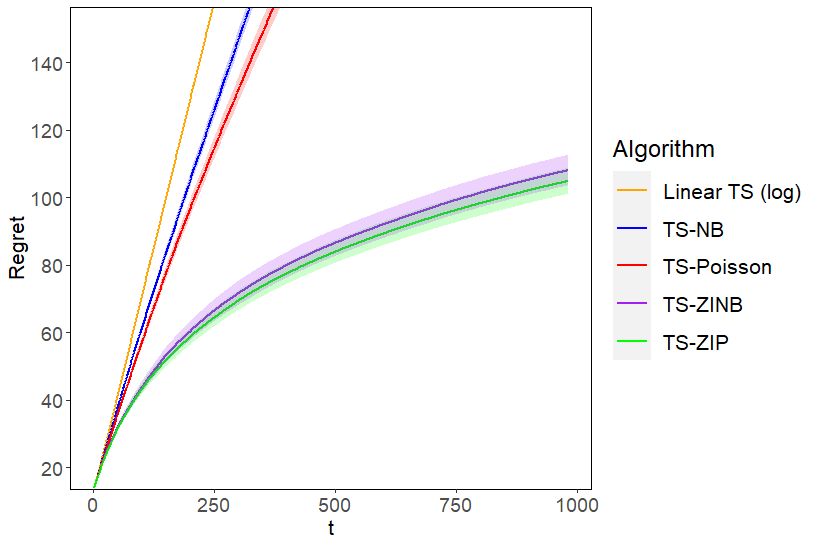}\label{fig:subfig6}}
    
    \subfloat[Moderate overdispersion]{\includegraphics[width=0.5\textwidth]{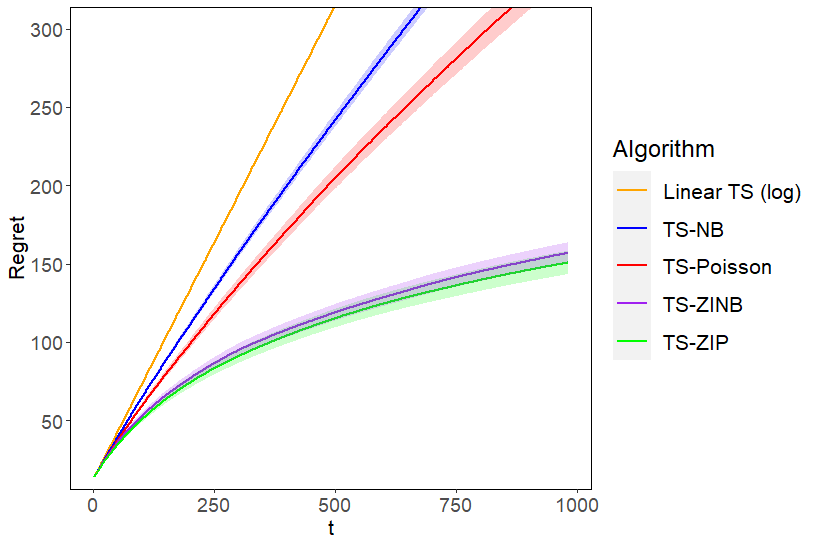}\label{fig:subfig7}}
    \hfill
    \subfloat[High overdispersion]{\includegraphics[width=0.5\textwidth]{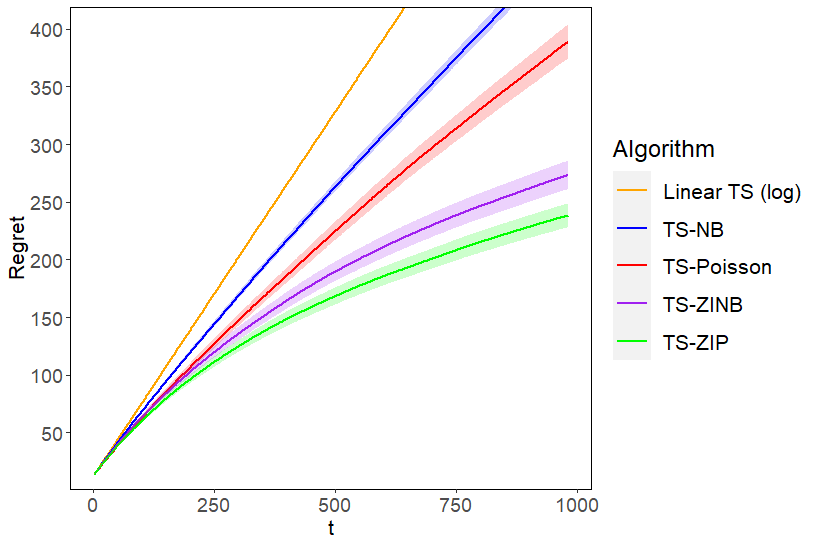}\label{fig:subfig8}}
    
    \caption{Simulation results of the compared algorithms under Settings (5) - (8). The results are calculated from $200$ replications of the experiment. The solid lines indicate the mean
values, while the shaded bands represent the standard error bounds across the independent replications.}
    \label{fig:mainfig2}
\end{figure}

\begin{figure}[ht]
    \centering
    \subfloat[Computation time]{\includegraphics[width=0.4\textwidth]{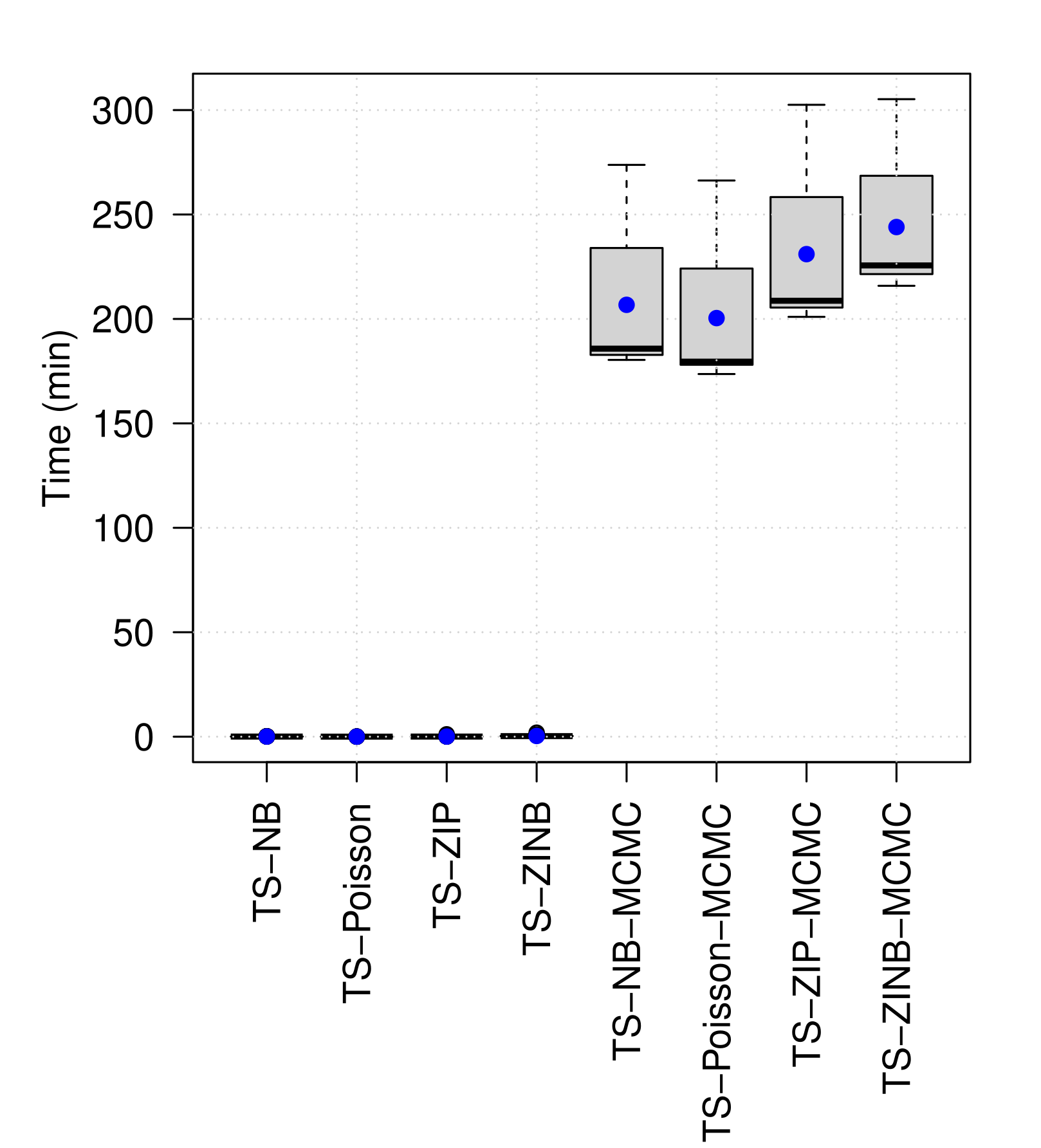}\label{fig:subfig9}}
    \hfill
    \subfloat[Regret]{\includegraphics[width=0.6\textwidth]{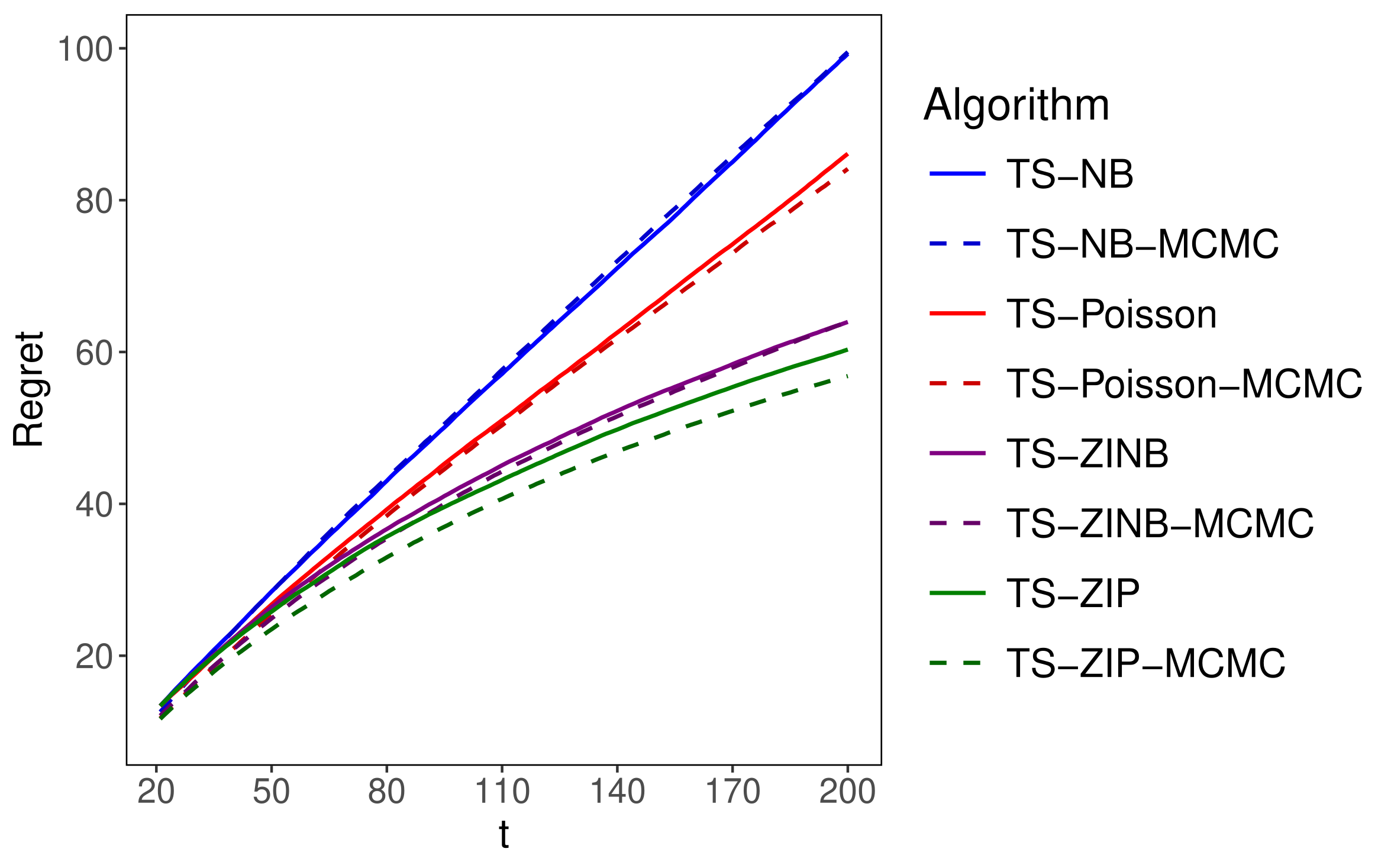}\label{fig:subfig10}} 
    \caption{Comparison of the proposed algorithms and their MCMC counterparts focuses on computation time and regret. The results are calculated from $200$ replications of the experiment.}
    \label{fig:mainfig3}
\end{figure}

\section{Application to the Drink Less Case Study}
\label{app}
In this section, we implement the proposed algorithms in a real case study - the Drink Less MRT - to evaluate and benchmark their performances. We consider not only the TS-Count variants and the Linear TS algorithm that incorporates a logarithmic transformation of the outcome but also the original static strategy from Drink Less, which assigns interventions at a constant probability $0.6$. 
\xueqing{Our application is divided into two parts: (1) a simulation study guided by the Drink Less data, and (2) an off-policy evaluation directly applied to the Drink Less data.}

\subsection{Implementation of Algorithms}
\xueqing{
Up to this point, our discussion has centered on an individual user. However, within the framework of mHealth studies or MRTs, the paradigm typically involves $N$ users, each receiving interventions at $T$ decision times. We introduce the subscript $n$ to denote each user. At any decision time $t$,  we have a collection of historical observations denoted by $D_{n, t} = \{X_{n,i},A_{n,i},Y_{n,i};i=1,\cdots, t-1\}_{n=1}^N \cup \{X_{n,t}\}_{n=1}^N$. The overall objective is to customize interventions to cater to the specific demands of each user. This scenario can be envisioned as $N$ separate contextual bandit problems, where each participant is associated with a unique model parameter $\eta_n^*$. We implement the proposed algorithms individually for each user, consistent with previous research that has shown the effectiveness of personalized approaches \citep{liao2020a,pmlr-v149-yao21a}.

Recall that $\phi(A_{n,t}, X_{n,t})$ represents the feature vector, which transforms the intervention $A_{n,t}$ and the context $X_{n,t}$ as follows:
\begin{align*}
    \phi(A_{n,t}, X_{n,t})^\top =(X_{n,t}^\top, A_{n,t}S_{n,t}^\top), 
\end{align*}
where $X_{n,t}$ is the context containing baseline and time-varying information of user $n$, and $S_{n,t}$ is a subset of $X_{n,t}$ that may interact with intervention (also known as moderators in the causal inference literature). Table S1 in Section S4.1 of the Supplement describes the context available in the Drink Less study.

Another critical aspect of applying the proposed algorithms to MRTs is ensuring robust post-study statistical analyses. It is important to note that algorithms designed solely to minimize regret may not ensure error control or provide sufficient statistical power to draw reliable inferences about treatment effects using standard methods, nor may they effectively support off-policy evaluations \citep{liao2020a,pmlr-v149-yao21a}. To address this issue, the randomization probabilities must remain strictly between $0$ and $1$, avoiding extremes. Specifically, we constrain the randomization probability within bounds $p_{\min}$ and $p_{\max}$, i.e., $\tilde{p}_t(X_{n,t}) = \max(p_{\min}, \min(p_{\max}, p_t(X_{n,t}))$. This method is known as ``clipping'' in mHealth \citep{liao2020a,pmlr-v149-yao21a}. For our simulations, we set $p_{\min} = 0.01$ and $p_{\max} = 0.99$. 

Moreover, we apply the proposed algorithms to the Drink Less Data by incorporating a ridge penalty of the form $\lambda \|\beta_n - \beta_{0}\|_2^2$ and $\lambda \|\gamma_n - \gamma_{0}\|_2^2$, rather than incorporating an explicit initial exploration stage of length $\tau$. This modification is enabled by the availability of parallel data from the Drink Less MRT \citep{bell2020notifications}, allowing us to form prior estimates for $\beta_n^*$ and $\gamma_n^*$. Following the guidelines from Theorems \ref{regret-nb} and \ref{regret-zip}, we choose $\lambda = 1$. For subsequent analyses, $\beta_{0}$ and $\gamma_{0}$ are set based on the estimates from the parallel data. }

\subsection{Clipped Regret Analysis}
\xueqing{
Due to the implementation of clipping, the traditional regret metric is not applicable. Instead, we assess the proposed algorithms against a clipped oracle - an oracle constrained to intervention probabilities within the range $[p_{\min}, p_{\max}]$ \citep{NIPS2017_4fa177df,pmlr-v149-yao21a}. Specifically, the frequentist regret relative to this clipped oracle is defined as follows:
\begin{eqnarray*}
       \mathcal{R}(T,\eta^*) =\sum_{t=1}^T \mathbb{E}\left[\max_{a,p^*}h(\phi(a,X_t)^{\top};\eta^*) - h(\phi(A_t,X_t)^{\top};\eta^*) \Big| \eta^*\right],
\end{eqnarray*}
where $a\in \mathcal{A}$ and $p^* \in [p_{\min}, p_{\max}]$. It can be shown that, with probability clipping, the proposed TS-Count algorithms will maintain the same regret bounds with respect to a clipped oracle. Here, we consider TS-NB as a specific example. Suppose 
Let $\bar{A}_t^*$ represent the optimal non-zero intervention chosen by the oracle and $\bar{A}_t$ is the corresponding choice made by the algorithm. Following \citet{NIPS2017_4fa177df}, the regret for times $t > \tau$ can be decomposed as:
\begin{eqnarray*}
    \operatorname{regret}_t &=& p^*\exp(\phi(\bar{A}_t^*, X_t)^\top \beta^*) + (1-p^*)\exp(\phi(0, X_t)^\top \beta^*)\\
    &&- \tilde{p}_t\exp(\phi(\bar{A}_t, X_t)^\top \beta^*) - (1-\tilde{p}_t)\exp(\phi(0, X_t)^\top \beta^*)\\
    &\leq&(p^*-\tilde{p}_t)[\exp(\phi(\bar{A}_t, X_t)^\top \beta^*) - \exp(\phi(0, X_t)^\top \beta^*)] \\
    &+& [\exp(\phi(\bar{A}_t^*, X_t)^\top \beta^*) - \exp(\phi(\bar{A}_t, X_t)^\top \beta^*)]
\end{eqnarray*}

Under Assumptions \ref{ass1} - \ref{ass3}, the cumulative regret $\mathcal{R}(T,\beta^*)$ can be bounded as follows:
\begin{eqnarray*}
    \mathcal{R}(T,\beta^*) \leq \tau \Delta_{\max} + \sum_{t=\tau+1}^T\mathbb{E}[\operatorname{regret}_t] \leq \tau \Delta_{\max} + \sum_{t=\tau+1}^T\mathbb{E}[T_1 + T_2]
\end{eqnarray*}
where $T_1 = (p^*-\tilde{p}_t)[\exp(\phi(\bar{A}_t, X_t)^\top \beta^*) - \exp(\phi(0, X_t)^\top \beta^*)]$ and $T_2 = [\exp(\phi(\bar{A}_t^*, X_t)^\top \beta^*) - \exp(\phi(\bar{A}_t, X_t)^\top \beta^*)]$. We bound $\mathbb{E}[T_1]$ based on Lemma 1 of \citet{NIPS2017_4fa177df} and bound $\mathbb{E}[T_2]$ using Lemma S2.1 from the Supplement. According to \citet{NIPS2017_4fa177df}, $\mathbb{E}[T_2]$ is the predominant component affecting the overall regret bound. Consequently, the clipped frequentist regret bound for TS-NB aligns with the standard unclipped scenario. This reasoning is similarly applicable to other variants of the TS-Count algorithms.
}

\subsection{Simulation Study Guided by Drink Less}
\xueqing{In this part, we build a simulation environment to mimic the Drink Less MRT and evaluate the candidate algorithms. We show that the proposed algorithms incur lesser user-averaged regret.}

\subsubsection{Simulation Environment}
\xueqing{We consider two different outcome generative models for the Drink Less study: (1) an OP model, as presented in Equation \eqref{gen1}, and (2) a ZIOP model, as presented in Equation \eqref{gen2}. We use the Drink Less MRT data to fit the outcome model. Each model was fitted individually for each user using the maximum a posteriori (MAP) estimation method. We used a prior $\beta_n^* \sim \mathcal{N}(0,I)$ for model (1) and an additional prior $\gamma_n^* \sim \mathcal{N}(0,I)$ for model (2). In our simulations, the context $X_{n,t}$ is directly observed from the data to mimic the practical scenario accurately. }



We generate $100$ independent simulated datasets using the aforementioned models, evaluating the algorithms based on their cumulative regret. In each simulated experiment, we randomly draw $N=349$ users, with each user participating in the study for over $200$ days. \xueqing{Additional simulations on scenarios with $p_{\min} = 0$ and $p_{\max} = 1$ and user-specific performances are included in Section S4.1 of the Supplement.}

\subsubsection{Results}
\xueqing{
Figure \ref{fig:mainfig5} illustrates the cumulative regret averaged over $349$ users and $100$ iterations of six different algorithms. The static strategy exhibits a consistently linear increase in regret, attributed to its lack of responsiveness to user feedback. As shown in Figure \ref{fig:subfig15}, the Linear TS algorithm with a logarithmic transformation initially demonstrates lower regret. However, with the accumulation of data throughout the experiment, the TS-Count variants begin to outperform the Linear TS, indicating superior performance in minimizing regret. This delayed efficacy may be due to the TS-Count variants requiring more data for learning so that their enhanced modeling capabilities become effective. Figure \ref{fig:subfig17} further highlights that in the early stages of the trial (where $T \leq 40$), Linear TS performs better than TS-ZIP, aligning with findings from \citet{trella2022designing} comparing these algorithms.


Overall, the findings suggest that the proposed approaches, particularly TS-Poisson, significantly improve user engagement with the app, particularly when $T$ is relatively large. For scenarios with a small $T$, Linear TS with a logarithmic transformation could still serve as a good learning algorithm.}


\begin{figure}[ht]
    \centering
    \subfloat[OP]{\includegraphics[width=0.5\textwidth]{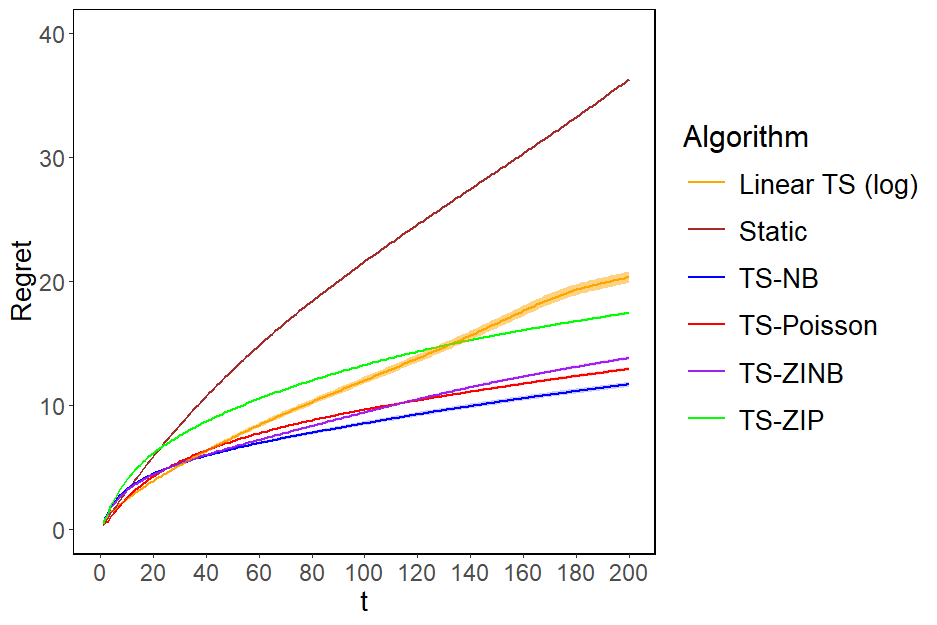}\label{fig:subfig15}}
    \hfill
    \subfloat[ZIOP]{\includegraphics[width=0.5\textwidth]{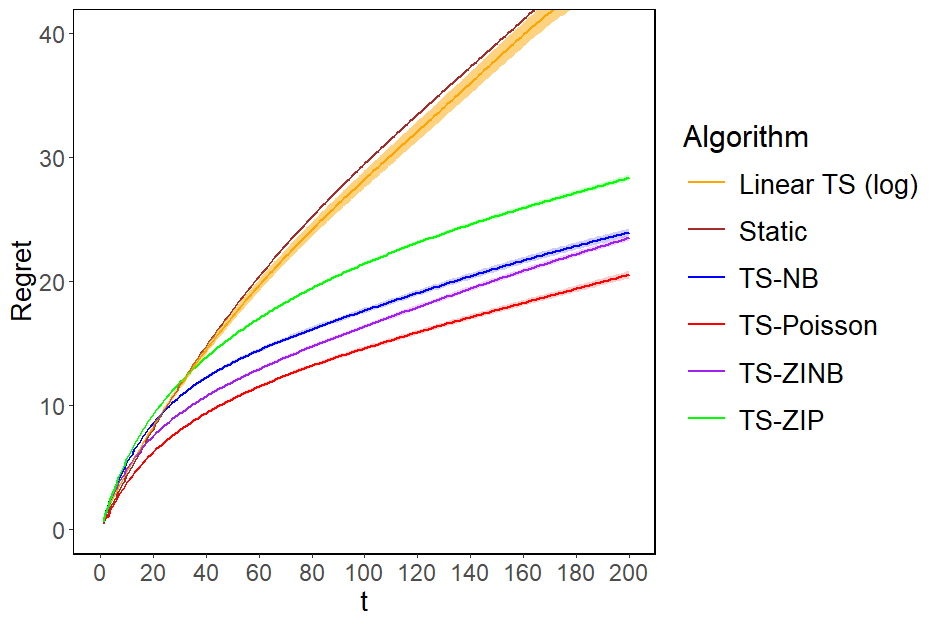}\label{fig:subfig17}}
    
    \caption{Results of the compared algorithms under the Drink Less study with $p_{\min} = 0.01$ and $p_{\max} = 0.99$.  The results are calculated from $100$ replications of the experiment. The solid lines indicate the mean values, while the shaded bands represent the standard error bounds across the independent replications.}
    \label{fig:mainfig5}
\end{figure}

\subsection{Off-Policy Evaluation on Drink Less}
\xueqing{
Next, we use off-policy evaluation methods to evaluate the candidate algorithms directly using real data from the Drink Less MRT. We demonstrate that the proposed algorithms lead to greater user-averaged expected rewards. }
\subsubsection{Methods}
\xueqing{
We applied the proposed algorithms to the real Drink Less MRT data, utilizing the method of \citet{Li2010a}.  Specifically, at each time $t$, given the collected history $D_{t}$, if the intervention selected by the algorithm matches the intervention assigned in Drink Less, the data tuple $(X_t, A_t, Y_t)$ is retained, and the algorithm is updated. Conversely, if the intervention selected by the algorithm differs from the intervention assigned in Drink Less, the data tuple is disregarded, and the algorithm proceeds to the next decision time. 

Using the collected data, we employ the self-normalized inverse probability weighting method to form an unbiased estimate of the per-trial expected reward achieved by each algorithm \citep{huch2023debiased}:
\begin{eqnarray*}
    \widehat{R} = \sum_{t=1}^T\frac{\frac{\pi_t(a_t|X_t)}{p_t(a_t|X_t)}Y_t}{\sum_{t=1}^T\frac{\pi_t(a_t|X_t)}{p_t(a_t|X_t)}},
\end{eqnarray*}
where $\pi_t(a_t|X_t) = 0.6$ represents the actual randomization probability in Drink Less and $p_t(a_t|X_t)$  represents the probability generated by the algorithm. 

We compute the reward improvement, defined as the expected reward of the algorithm minus the expected reward under the pre-specified Drink Less randomization strategy, for each candidate algorithm.  We perform this evaluation using $200$ bootstrap samples, running the algorithms separately for each user in each sample. The final results are averaged across all users and datasets.
}

\subsubsection{Results}
\xueqing{
We present the per-trial expected reward improvement of five algorithms in Figure \ref{fig:mainfig6},
which demonstrates that TS-Poisson outperforms the other algorithms, with TS-NB as the next most effective.
However, the brief duration of the Drink Less MRT data ($T=30$) may compromise the reliability of these expected reward estimates.
}

\begin{figure}[ht]
    \centering
    \includegraphics[width=0.4\textwidth]{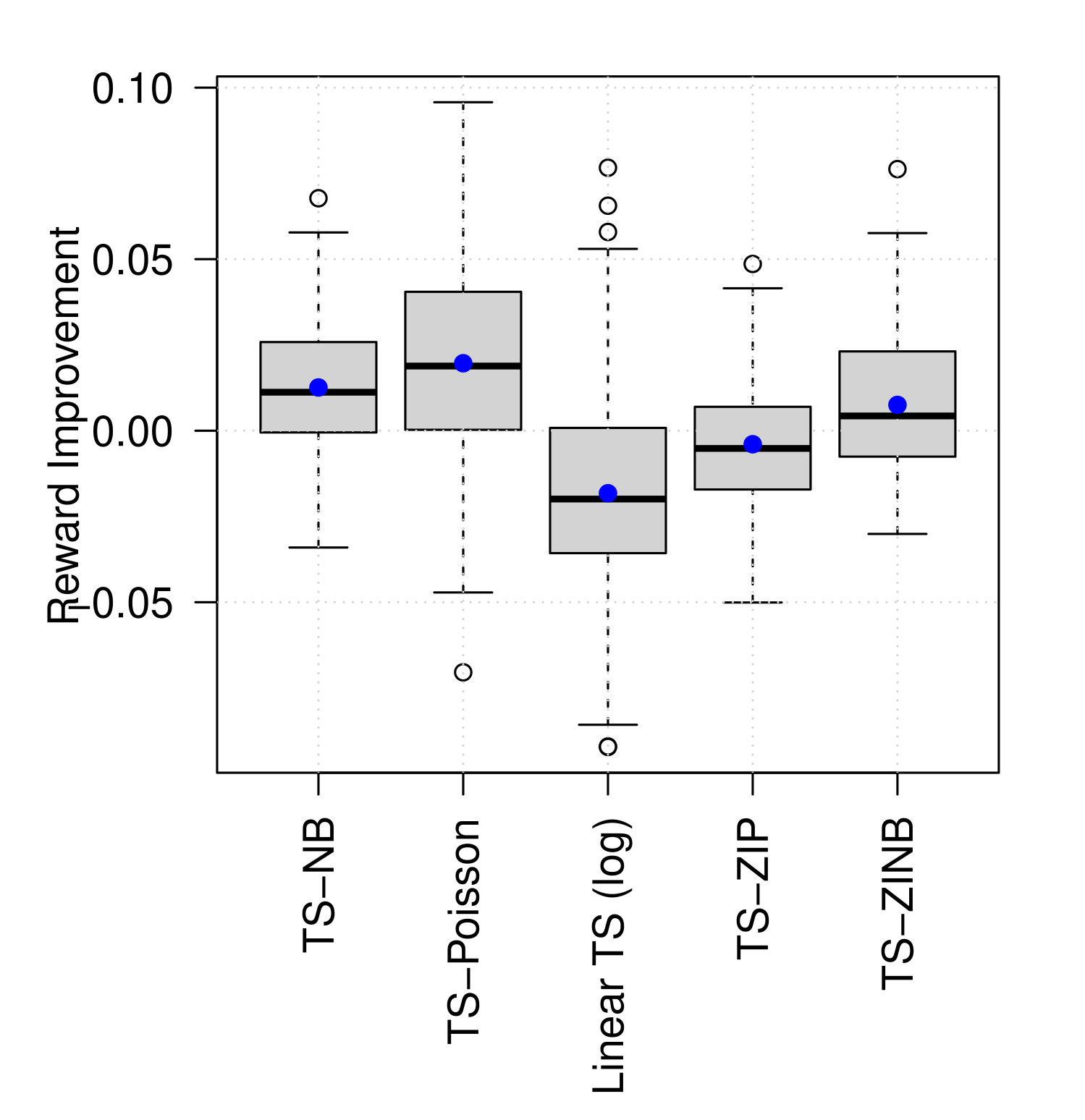}
    
    \caption{Boxplots of the estimates of the reward improvement for five competing algorithms. The results are averaged over $200$ Bootstrap samples. }
    \label{fig:mainfig6}
\end{figure}

\section{Discussion}
\label{conc}

Contextual bandits provide a suitable framework for tailoring mHealth interventions. However, in real-world scenarios, such as the Drink Less study \citep{bell2020notifications}, proximal outcomes are often collected as counts. This paper addresses the challenge of optimizing intervention delivery in mHealth applications with count proximal outcomes. Specifically, we propose methods that combine common count data models with the Thompson sampling strategy. We provide theoretical analyses on both standard regret and clipped regret for the proposed algorithms.

Within this paper, we have conducted comparisons across a suite of count models—including Poisson, NB, ZIP, and ZINB—considering varying degrees of overdispersion and zero-inflation within an online contextual bandit framework. Interestingly, our findings suggest that more complex models like the NB and ZINB regressions, which offer mechanisms to accommodate overdispersion unlike their Poisson counterparts, may not outperform in an online environment. This could be attributed to the limited data typically available online and the complexity introduced by the additional dispersion parameter that necessitates estimation.  Another reason might be that, despite overdispersion, the Poisson model can still yield consistent parameter estimates which are adequate for bounding the regret. 

Previous studies on contextual bandits based on generalized linear models have focused on a generic environment for online decision-making \citep{filippi2010, li2017provably}. However, they left unresolved statistical issues in applying contextual bandits to mHealth studies, including how to model overdispersed and zero-inflated count proximal outcomes in an online setting and whether a proper statistical model can improve the performance of online decision-making. Our work addresses these issues by providing both theoretical guarantees and evaluating their empirical performances. These algorithms, however, are not limited to mHealth studies and can be applied to other studies with a count outcome. 

There are opportunities to extend this work in several directions. For instance, many mHealth studies involve multiple users being intervened at each decision time. By incorporating mechanisms for information borrowing between users, as suggested by \citet{tomkins2021intelligentpooling}, our algorithms could learn more efficiently and accommodate user heterogeneity. Additionally, exploring alternative approximation methods for sampling from posterior distributions presents a substantial opportunity for improvement. Techniques such as stochastic gradient descent \citep{ding2021a} and Langevin Monte Carlo methods \citep{xu2022a} could be adapted for use in zero-inflated models. These methods promise more computationally efficient algorithms, a crucial factor in large-scale and real-time applications. Furthermore, a straightforward extension would be differentiating the feature vectors of the zero component and the non-degenerate distribution component (Poisson or NB) in ZIP and ZINB models. This modification would allow the models to more accurately reflect real-world scenarios where these components are likely to differ.


\begin{acks}[Acknowledgments]
Xueqing Liu is supported by PhD student scholarship from the Duke-NUS Medical School, Singapore. Lauren Bell is supported by a PhD studentship funded by the MRC Network of Hubs for Trials Methodology Research (MR/L004933/2- R18). Bibhas Chakraborty would like to acknowledge support from the grant MOE-T2EP20122-0013 from the Ministry of Education, Singapore. 
\end{acks}
\bibliographystyle{imsart-nameyear} 
\bibliography{Bibliography-MM-MC}       
\newpage
\appendix
\section*{}
This supplementary file includes the proof of the theorems and propositions presented in the main manuscript. \xueqing{Also, we provide additional details on the Drink Less study in S4.}  



To facilitate the proof, we introduce some notations. For $t = 1,\cdots,T$, let $\mathcal{F}_t$ denote the filtration which contains all available information up to time $t$. Let $[n]$ denote the set $\{1,2,\cdots, n\}$. Let $\|x\|_{V}$ represent the weighted Euclidean norm of vector $x$ under a positive semi-definite matrix $V$, i.e., $\sqrt{x^{\top}Vx}$. Let $\nabla f$ denote the gradient vector of a function $f(\cdot)$.
To simplify notation, we define $\mathbb{E}_t$ and $\mathbb{P}_t$ as the conditional expectation and conditional probability, given $\mathcal{F}_t$. For any positive semi-definite (PSD) matrix $M$, $\lambda_{\min}(M) \geq 0$ denotes the minimum eigenvalue of $M$.  For any $n \times n$ PSD matrices $M_1$ and $M_2$, $M_1 \preceq M_2$ if and only if $x^{\top}M_1x \leq x^{\top}M_2x$ for all $x \in \mathbb{R}^d$. In addition, recall that $G_t=\sum_{i=1}^{t-1} \phi(A_i,X_i) \phi(A_i,X_i)^{\top}$ is the unweighted Hessian matrix at decision time $t$. Additionally, $d$ denotes the dimension of the feature vector $\phi(a,X_t)$, $K$ denotes the number of intervention options, and $A_t^*$ denotes the optimal intervention option at time $t$. The expected reward is denoted by $R_t(a,\eta^{\ast})$ and we have that $R_t(a,\eta^{\ast})=\exp\left\{\phi(a,X_t)^{\top} \beta^\ast\right\}$ for TS-NB, and $R_t(a,\eta^{\ast})=\left[1-\operatorname{sigmoid} (\phi(a,X_t)^\top \gamma^*)\right]\exp\left\{\phi(a,X_t)^{\top} \beta^{\ast}\right\}$ for TS-ZIP and TS-ZINB. 

To derive regret bounds, we assume the following conditions hold.

\noindent \textbf{For TS-NB:}
\begin{itemize}
    \item[Condition 1.] 
    Let $$g_t(\beta) = \frac{r\left(r+ Y_t\right) \exp \{\phi(A_t,X_t)^{\top}\beta\} }{\left(r+ \exp \{\phi(A_t,X_t)^{\top} \beta\}\right)^2},$$ where $r$ denotes the inverse dispersion parameter.  There exists $\kappa_{\max} > \kappa_{\min} >0$ such that for any time $t$ 
    \[\kappa_{\min} \leq \inf_{\beta\in \mathbb{R}^d}g_t(\beta) \leq \sup_{\beta\in \mathbb{R}^d}g_t(\beta) \leq  \kappa_{\max}.\]
\end{itemize}

\noindent \textbf{For TS-ZIP:}
\begin{itemize}  
    \item[Condition 2.] 
    Let \[
    g_t(\beta) = I(Y_t=0)\frac{(1-p_t)\mu_t\exp(-\mu_t)}{p_t+(1-p_t)\exp(-\mu_t)}-I(Y_t=0)\frac{p_t(1-p_t)\mu_t^2\exp(-\mu_t)}{(p_t+(1-p_t)\exp(-\mu_t))^2} +I(Y_t>0)\mu_t.
    \]
    There exists $\kappa_{1,\max} > \kappa_{1,\min} >0$ such that for any time $t$
    \[\kappa_{1,\min} \leq \inf_{\beta\in \mathbb{R}^d}g_t(\beta) \leq \sup_{\beta\in \mathbb{R}^d}g_t(\beta) \leq  \kappa_{1,\max}.\]

    \item[Condition 3.]
    Let 
    \begin{align*}
    g_t(\gamma) &= -I(Y_t=0)\frac{(1-\exp(-\mu_t)(1-2p_t)p_t(1-p_t)}{p_t+(1-p_t)\exp(-\mu_t)} + I(Y_t=0)\frac{(1-\exp(-\mu_t)^2p_t^2(1-p_t)^2)}{(p_t+(1-p_t)\exp(-\mu_t)^2)}\\
    &\quad +I(Y_t>0)p_t(1-p_t).
    \end{align*}
    There exists $\kappa_{2,\max} > \kappa_{2,\min} >0$ such that for any time $t$
    \[\kappa_{2,\min} \leq \inf_{\gamma\in \mathbb{R}^d}g_t(\gamma) \leq \sup_{\gamma\in \mathbb{R}^d}g_t(\gamma) \leq  \kappa_{2,\max}.\]

\end{itemize}
    
\noindent \textbf{For TS-ZINB:}
\begin{itemize}
    \item[Condition 4.]  
    Let
    \begin{align*}
        g_t(\beta) &= I(Y_t=0)\frac{\mu_t^2(1-p_t)r\frac{r}{\mu_t+r}^{r-1}\frac{2r}{(\mu_t+r)^4}}{p_t+(1-p_t)\frac{r}{\mu_t+r}^r} +I(Y_t=0)\frac{\mu_t^2(1-p_t)r\frac{r^{2r}}{(\mu_t+r)^{2r+2}}}{(p_t+(1-p_t)\frac{r}{\mu_t+r}^r)^2}\\
        &\quad +I(Y_t=0)\frac{\mu_t(1-p_t)r\frac{r}{\mu_t+r}^{r-1}\frac{r}{(\mu_t+r)^2}}{p_t+(1-p_t)\frac{r}{\mu_t+r}^r} - I(Y_t=0)\frac{(1-p_t)r(r-1)\frac{r}{\mu_t+r}^{r-2}\frac{r^2}{(\mu_t+r)^4}}{p_t+(1-p_t)\frac{r}{\mu_t+r}^r}\\
        &\quad + I(Y_t>0)\frac{r^2+rY_t}{(\mu_t+r)^2}.
    \end{align*}
    There exists $\kappa_{1,\max} > \kappa_{1,\min} >0$ such that for any time $t$
    \[\kappa_{1,\min} \leq \inf_{\beta\in \mathbb{R}^d}g_t(\beta) \leq \sup_{\beta\in \mathbb{R}^d}g_t(\beta) \leq  \kappa_{1,\max}.\]
    
    \item[Condition 5.] 
    Let 
    \begin{align*}
        g_t(\gamma) &= -I(Y_t=0) \frac{(1-\frac{r}{\mu_t+r}^r)(1-2p_t)p_t(1-p_t)}{p_t+(1-p_t)\frac{r}{\mu_t+r}^r} +I(Y=0)\frac{(1-\frac{r}{\mu_t+r}^r)^2p_t^2(1-p_t)^2}{(p_t+(1-p_t)\frac{r}{\mu_t+r}^r)^2}\\
        &\quad +I(Y_t>0)p_t(1-p_t).
    \end{align*}
     There exists $\kappa_{2,\max} > \kappa_{2,\min} >0$ such that for any time $t$
    \[\kappa_{2,\min} \leq \inf_{\gamma\in \mathbb{R}^d}g_t(\gamma) \leq \sup_{\gamma\in \mathbb{R}^d}g_t(\gamma) \leq  \kappa_{2,\max}.\]
\end{itemize}
In essence,  the conditions necessitate the invertibility of the Fisher information matrix, along with adequate exploration, to be compatible with TS-count variants. They are verified under Assumptions 1-3 in the main manuscript.

\section{Bayesian Regret Bounds}
\subsection{Proof of Bayesian regret for TS-NB}

Based on the log-likelihood function of the NB regression (Equation (2) in the main manuscript), its score function at decision point $t$ is given by:
\begin{eqnarray*}
    S_t(\beta) &=& \sum_{i=1}^{t-1} \frac{r(Y_i-\exp(\phi(A_i,X_i)^{\top}\beta))}{\exp(\phi(A_i,X_i)^{\top}\beta) + r}\phi(A_i,X_i).
\end{eqnarray*} 

We introduce the following event to facilitate the proof:
$$E_{1, t}=\left\{\forall a \in \mathcal{A}: \left|\phi(a,X_t)^{\top} \hat{\beta}_t-\phi(a,X_t)^{\top} \beta^*\right| \leq c_1\left\|\phi(a,X_t)\right\|_{G_t^{-1}}\right\}.$$

The proof of the Bayesian regret bound for TS-NB is based on the following two lemmas.
\begin{lemma}
    \label{lem1}
    Let $c_1=Y_{\max} 
\kappa_{\min}^{-1} \sqrt{d \log (T/ d)-2 \log \delta}$ and $\tau$ be any time point such that the corresponding minimum eigenvalue $\lambda_{\min }\left(G_\tau\right) \geq 1$. Under Condition 1, for any $t \geq \tau$ and $\delta > 0$, 
    \begin{eqnarray*}
        \mathbb{P}\left(\bar{E}_{1, t}\right) \leq \delta,
    \end{eqnarray*}
where $\bar{E}_{1, t}$ denotes the complementary event of $E_{1,t}$.
\end{lemma}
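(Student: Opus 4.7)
The plan is to derive the concentration result by combining three ingredients: the MLE optimality condition, a mean value theorem expansion of the score function around $\beta^*$, and a self-normalized martingale concentration inequality applied to the score at $\beta^*$.

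First, since $\hat{\beta}_t$ maximizes the log-likelihood, the first-order condition gives $S_t(\hat{\beta}_t)=0$. Applying the mean value theorem component-wise to $S_t$ produces some $\tilde{\beta}$ on the segment between $\hat{\beta}_t$ and $\beta^*$ such that $S_t(\beta^*) = H_t(\tilde{\beta})(\hat{\beta}_t-\beta^*)$, where by direct differentiation of the NB score, $H_t(\beta) = \sum_{i=1}^{t-1} g_i(\beta)\,\phi(A_i,X_i)\phi(A_i,X_i)^{\top}$, with $g_i$ as in Condition 1. For any $t\geq\tau$, the assumption $\lambda_{\min}(G_\tau)\geq 1$ together with Condition 1 ensures $H_t(\tilde{\beta})$ is invertible, so $\hat{\beta}_t-\beta^* = H_t(\tilde{\beta})^{-1}S_t(\beta^*)$.

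Second, for any $\phi(a,X_t)$, apply Cauchy--Schwarz in the $H_t(\tilde{\beta})^{-1}$-weighted norm:
$$\bigl|\phi(a,X_t)^{\top}(\hat{\beta}_t-\beta^*)\bigr| \leq \|\phi(a,X_t)\|_{H_t(\tilde{\beta})^{-1}}\,\|S_t(\beta^*)\|_{H_t(\tilde{\beta})^{-1}}.$$
Condition 1 yields $H_t(\tilde{\beta}) \succeq \kappa_{\min} G_t$, hence $H_t(\tilde{\beta})^{-1} \preceq \kappa_{\min}^{-1} G_t^{-1}$, which upgrades the bound to $\kappa_{\min}^{-1}\,\|\phi(a,X_t)\|_{G_t^{-1}}\|S_t(\beta^*)\|_{G_t^{-1}}$. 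Since this holds uniformly in $a\in\mathcal{A}$, the union quantifier in $E_{1,t}$ is handled deterministically at no cost.

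Third, at the true parameter, $S_t(\beta^*) = \sum_{i=1}^{t-1}\frac{r}{\mu_i+r}(Y_i-\mu_i)\,\phi(A_i,X_i)$ is a vector-valued martingale sum. The weights $\frac{r}{\mu_i+r}\in(0,1]$ are $\mathcal{F}_i$-measurable, and by Assumption 2 the centered noise $Y_i-\mu_i$ is bounded in absolute value by $Y_{\max}$, hence sub-Gaussian with proxy at most $Y_{\max}$ (this will be the content of the auxiliary Lemma S1.1 referenced in the excerpt). Applying the self-normalized tail bound for vector-valued martingales (Theorem~1 of Abbasi-Yadkori, P\'al and Szepesv\'ari, 2011), together with the determinant-trace inequality $\det(G_t)\leq (t/d)^d\leq (T/d)^d$ under Assumption~1, yields with probability at least $1-\delta$,
$$\|S_t(\beta^*)\|_{G_t^{-1}} \leq Y_{\max}\sqrt{d\log(T/d)-2\log\delta}.$$
Combining with the previous step produces the stated $c_1$ and completes the proof.

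The main obstacle is not any single step but the bookkeeping required to align constants and logarithmic factors with the self-normalized bound's natural $\log(\det(G_t)^{1/2}/\delta)$ form, and justifying that the weighted noise $\frac{r}{\mu_i+r}(Y_i-\mu_i)$ inherits the sub-Gaussian constant $Y_{\max}$ from Assumption 2 without any spurious factor; a secondary subtlety is ensuring the mean value theorem is applied legitimately (e.g., by integrating $\nabla S_t$ along the segment, which is standard since $S_t$ is continuously differentiable and the segment lies in a convex domain).
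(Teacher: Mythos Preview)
Your proposal is correct and follows essentially the same approach as the paper: MLE first-order condition, mean value theorem on the NB score, Condition~1 to sandwich $H_t$ by $\kappa_{\min}G_t$, Cauchy--Schwarz, and then the self-normalized martingale bound of Abbasi-Yadkori et al.\ together with the determinant-trace inequality. The only cosmetic difference is that the paper applies Cauchy--Schwarz in the $G_t$-norm and then bounds $\|S_t(\beta^*)\|_{G_t^{-1}}\leq \|Z_t\|_{G_t^{-1}}$ via the weight factor $r/(r+\mu_i)<1$ before invoking the self-normalized bound on the unweighted sum $Z_t$, whereas you apply the self-normalized bound directly to the weighted martingale $S_t(\beta^*)$; your route is arguably cleaner and avoids justifying that norm comparison.
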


\begin{lemma}
    \label{lem2}
    For any two parameters $u_t < M$ and $u_t^{\prime} < M$, where $M > 0$, the exponential function is locally Lipschitz continuous with constant $s$, i.e., 
    \begin{eqnarray*}
    |\exp(u_t) - \exp(u_t^{\prime})|  \leq s\left|u_{t}-u_{t}^{\prime}\right|.
    \end{eqnarray*}
    In particular, if $u_t > u_t^{\prime}$, we have
    \begin{eqnarray*}
    \exp(u_t) - \exp(u_t^{\prime})  \leq s(u_{t}-u_{t}^{\prime}).
    \end{eqnarray*}
\end{lemma}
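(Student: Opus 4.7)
The plan is to establish local Lipschitz continuity of the exponential function on the half-line $(-\infty, M)$ via an elementary application of the mean value theorem, identifying the Lipschitz constant explicitly as $s = e^M$. This is a standard calculus fact, but it is worth recording cleanly here because the constant $s$ will propagate into the concentration arguments used later (see Theorem \ref{bayes-nb} and Theorem \ref{regret-nb}), where it multiplies terms such as $Y_{\max}\kappa_{\min}^{-1}$ in the regret bound.

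First, I would apply the mean value theorem to $f(x) = e^x$ on the closed interval with endpoints $u_t$ and $u_t'$ (taking whichever ordering makes the interval nontrivial; if $u_t = u_t'$ the claim is trivial with both sides zero). This yields a point $\xi$ strictly between $u_t$ and $u_t'$ satisfying
\begin{equation*}
    \exp(u_t) - \exp(u_t') = \exp(\xi)\bigl(u_t - u_t'\bigr).
\end{equation*}
Since $\xi$ lies between $u_t$ and $u_t'$, and both are strictly less than $M$ by hypothesis, we have $\xi < M$, and monotonicity of the exponential gives $\exp(\xi) < e^M$. Setting $s := e^M$ and taking absolute values of both sides immediately delivers
\begin{equation*}
    \bigl|\exp(u_t) - \exp(u_t')\bigr| \leq s \,\bigl|u_t - u_t'\bigr|,
\end{equation*}
which is the stated Lipschitz inequality. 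The ``in particular'' clause follows with no further argument: when $u_t > u_t'$, both sides of the inequality are nonnegative and the absolute value signs may be dropped, yielding $\exp(u_t) - \exp(u_t') \leq s(u_t - u_t')$.

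There is essentially no obstacle here; the only care required is bookkeeping on the constant. The constant $s = e^M$ depends only on the uniform upper bound $M$, which in the intended applications will be taken as the maximum possible value of $\phi(a,X_t)^\top \beta$ over the feasible parameter range. Under Assumption \ref{ass1} (with $\|\phi(a,X_t)\| \leq 1$ and $\|\beta^*\| \leq 1$), along with analogous bounds on the sampled $\tilde{\beta}_t$ and the MLE $\hat{\beta}_t$ guaranteed by the concentration inequalities from Lemma \ref{lem1}, one can take $M$ to be a fixed constant (e.g., $M = 1$ gives $s = e$, up to the sampled parameter's bound). This explicit identification is what allows $s$ to appear as a finite multiplicative constant in the subsequent regret decompositions rather than as a quantity growing with $T$.
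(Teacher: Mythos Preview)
Your proof is correct and follows essentially the same approach as the paper: both apply the mean value theorem to obtain a point between $u_t$ and $u_t'$ at which the derivative $\exp(\cdot)$ is evaluated, then bound this by $e^M$ using the upper bound $M$. Your version is slightly more explicit in identifying $s = e^M$ and in connecting the constant to Assumption~\ref{ass1}, but the argument is the same.
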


\begin{proof}
In TS-NB, $\tilde{\beta}_t$ is independently drawn from a posterior distribution $Q_t(\beta)$ while the posterior belief in the true parameter $\beta^*$ is also represented by $Q_t(\beta)$ conditional on $\mathcal{F}_t$.  Hence, $\tilde{\beta}_t$ and $\beta^*$ are i.i.d. conditional on $\mathcal{F}_t$. Similarly, the optimal intervention $A^*_t$ and the selected intervention $A_t$ are also i.i.d. conditional on $\mathcal{F}_t$.

We define the upper confidence bound for the expected reward as
\begin{eqnarray*}
U_t(a, \hat{\beta}_t) =  \exp \left\{\phi(a,X_t)^{\top} \hat{\beta}_t+c_1\left\|\phi(a,X_t)\right\|_{G_t^{-1}}\right\},
\end{eqnarray*}
where $c_1$ is the confidence radius specified in Lemma \ref{lem1}. Then, we decompose the immediate regret using $U_t$.
\begin{eqnarray*}
\mathbb{E}_t[\Delta_{A_t}] &=& \mathbb{E}_t[R_t(A_t^{\ast},\beta^{\ast})-R_t(A_t,\beta^{\ast})]\\
&=& \mathbb{E}_t\left[R_t\left(A_t^*, \beta^*\right)-U_t\left(A_t^*, \hat{\beta}_t\right) \right]+\mathbb{E}_t\left[U_t\left(A_t^*, \hat{\beta}_t\right)-U_t\left(A_t, \hat{\beta}_t\right) \right]\\
&+& \mathbb{E}_t\left[U_t\left(A_t, \hat{\beta}_t\right)-R_t\left(A_t, \beta^*\right) \right].
\end{eqnarray*}

Note that $\mathbb{E}_t\left[U_t\left(A_t^*, \hat{\beta}_t\right)-U_t\left(A_t, \hat{\beta}_t\right)\right]=0$ since $U_t$ is a deterministic function and $A_t$ and $A^*$ are i.i.d. conditional on $\mathcal{F}_t$. Hence, we only need to bound the two quantities $\mathcal{R}_{\text {Bayes }}^{(1)}(T)$ and $\mathcal{R}_{\text {Bayes }}^{(2)}(T)$ for the Bayesian cumulative regret, as shown below:    
\begin{eqnarray*}
    \sum_{t=1}^T\mathbb{E}_t[\Delta_{A_t}] &=& \sum_{t=1}^{\tau}\mathbb{E}_t[\Delta_{A_t}] + \sum_{t=\tau+1}^T\mathbb{E}_t[\Delta_{A_t}] \\
    &\leq& \tau + \sum_{t=\tau+1}^T \mathbb{E}_t\left[\Delta_{A_t}\right]\\
    &=&\tau  + \underbrace{\sum_{t=\tau+1}^T \mathbb{E}_t\left[R_t\left(A_t^*, \beta^*\right)-U_t\left(A_t^*, \hat{\beta}_t\right)\right]}_{\mathcal{R}_{\text {Bayes }}^{(1)}(T)}+\underbrace{\sum_{t=\tau+1}^T \mathbb{E}_t\left[U_t\left(A_t, \hat{\beta}_t\right)-R_t\left(A_t, \beta^*\right) \right]}_{\mathcal{R}_{\text {Bayes}}^{(2)}(T)}.
    \end{eqnarray*}

In what follows, we present the upper bounds for $\mathcal{R}_{\text {Bayes }}^{(1)}(T)$ and $\mathcal{R}_{\text {Bayes }}^{(2)}(T)$, and then combine these results to establish the Bayesian regret bound for TS-NB.

\textbf{Bounding $\mathcal{R}_{\text {Bayes }}^{(1)}(T)$.} According to Lemma \ref{lem1}, we have $\left|\phi(a,X_t)^{\top} \hat{\beta}_t-\phi(a,X_t)^{\top} \beta^*\right| \leq c_1\left\|\phi(a,X_t)\right\|_{G_t^{-1}}$ with probability at least $1-1/t^2$ for all $a$ (setting $\delta$ to $1/t^2$). Hence, it follows that 
\begin{eqnarray*}
        \phi(a,X_t)^{\top} \beta^* - \left(\phi(a,X_t)^{\top} \hat{\beta}_t + c_1\left\|\phi(a,X_t)\right\|_{G_t^{-1}}\right) \leq 0
    \end{eqnarray*}
for all $a$ with probability at least $1-1/t^2$. Further, due to the monotonicity of $\exp(\cdot)$, we have 
\begin{eqnarray*}
        \sum_{t=\tau+1}^T \mathbb{E}_t\left[R_t\left(A_t^*, \beta^*\right)-U_t\left(A_t^*, \hat{\beta}_t\right)\right] \leq 0+ \sum_{t=\tau+1}^T \mathcal{O}(t^{-2}) =  \mathcal{O}(1).
\end{eqnarray*}

\textbf{Bounding $\mathcal{R}_{\text {Bayes }}^{(2)}(T)$.} In what follows, we first use Lemma \ref{lem2} to upper bound $\mathcal{R}_{\text{Bayes}}^{(2)}(T)$ by the expected difference in linear predictors. Then, we use Lemma \ref{lem1}, together with Cauchy-Schwarz inequality, to bound $\mathcal{R}_{\text{Bayes}}^{(2)}(T)$ by Equation \eqref{equ-r2}: 
\begin{eqnarray}
\sum_{t=\tau+1}^T \mathbb{E}_t\left[U_t\left(A_t, \hat{\beta}_t\right)-R_t\left(A_t, \beta^*\right) \right] & \leq& s\sum_{t=\tau+1}^T\mathbb{E}_t\left[\phi(A_t,X_t)^\top\hat{\beta}_t+c_1\left\|\phi(A_t,X_t)\right\|_{G_t^{-1}}-\phi(A_t,X_t)^\top \beta^* \right] \nonumber \\
& \leq& 2 s c_1 \sum_{t=\tau+1}^T \mathbb{E}_t\left[\left\|\phi(A_t,X_t)\right\|_{G_t^{-1}} \right]  + \mathcal{O}(1), 
\label{equ-r2}
\end{eqnarray}
where $\mathcal{O}(1)$ comes from the failure event of the concentration of $\hat{\beta}_t$ in Lemma \ref{lem1}. 
According to Lemma 2 in \citet{li2017provably}, as Assumption 1 in the main manuscript and $\lambda_
{\min}(G_{\tau}) \geq 1$ are satisfied, we have 
\begin{eqnarray*}
\sum_{t=\tau+1}^T \left\|\phi(A_t,X_t)\right\|_{G_t^{-1}} \leq \sqrt{2 d T \log \left(\frac{T}{d}\right)}.
\end{eqnarray*}

Combining the results, we can derive
\begin{eqnarray*}
\sum_{t=\tau+1}^T \mathbb{E}_t\left[U_t\left(A_t, \hat{\beta}_t\right)-R_t\left(A_t, \beta^*\right) \right] \leq 2s c_1\sqrt{2 d T \log \left(\frac{T}{d}\right)} + \mathcal{O}(1). 
\end{eqnarray*}
where $c_1  =  Y_{\max} 
\kappa_{\min}^{-1} \sqrt{d \log (T/ d)+4 \log T}$ by setting $\delta = 1/T^2$ since $T \geq t$.

\textbf{Combining $\mathcal{R}_{\text {Bayes }}^{(1)}(T)$ and $\mathcal{R}_{\text {Bayes }}^{(2)}(T)$.} Combining the bounds for $\mathcal{R}_{\text{Bayes}}^{(1)}(T)$ and $\mathcal{R}_{\text{Bayes}}^{(2)}(T)$, we have
\begin{eqnarray*}
\mathcal{R}_{\text{Bayes}}(T) \leq \tau  + \mathcal{O}(1) + \left[2s Y_{\max}
\kappa_{\min}^{-1} \sqrt{d \log (T/ d)+4 \log T}\right]\sqrt{2 d T \log \left(\frac{T}{d }\right)}.
\end{eqnarray*}
For completeness, we set $\tau = \sqrt{d}$ to derive the regret bound presented in Theorem 5.1. Since Algorithm 1 does not incorporate the initial exploration state for parameter estimation, it is feasible to optimize the choice of $\tau$ within the regret bound. 
\end{proof}

\subsection{Proof of Bayesian regret for TS-ZIP}
First, we introduce the following events:
\begin{align*}
    &E_{1, t}^{\beta}=\left\{\forall a \in \mathcal{A}:\left|\phi(a,X_t)^{\top} \hat{\beta}_t-\phi(a,X_t)^{\top} \beta^*\right| \leq c_1^{\beta}\left\|\phi(a,X_t)\right\|_{G_t^{-1}}\right\}\\
    &E_{1, t}^{\gamma}=\left\{\forall a \in \mathcal{A}:\left|\phi(a,X_t)^{\top} \hat{\gamma}_t-\phi(a,X_t)^{\top} \gamma^*\right| \leq c_1^{\gamma}\left\|\phi(a,X_t)\right\|_{G_t^{-1}}\right\}
\end{align*}

Notice that the score function of ZIP is
    \begin{eqnarray*}
        S_t(\beta) = \frac{\partial \ln L^{\operatorname{ZIP}}(\beta,\gamma)}{\partial \beta} &=&  \sum_{i=1}^{t-1}I(Y_i = 0) \frac{-(1-p_{i})\exp(-\mu_{i})}{p_{i}+(1-p_{i})\exp(-\mu_{i})} \mu_{i}\phi(A_i,X_i) \\
        && + \sum_{i=1}^{t-1}I(Y_i > 0) (Y_i-\mu_{i}) \phi(A_i,X_i)
    \end{eqnarray*}
    \begin{eqnarray*}
       S_t(\gamma) =  \frac{\partial \ln L^{\operatorname{ZIP}}(\beta,\gamma)}{\partial \gamma} &=& \sum_{i=1}^{t-1}I(Y_i=0) \frac{1-\exp(-\mu_{i})}{p_{i}+(1-p_{i})\exp(-\mu_{i})}p_{i}(1-p_{i})\phi(A_i,X_i) \\
        &&+ \sum_{i=1}^{t-1}I(Y_i > 0) \frac{1}{p_{i}-1}p_{i}(1-p_{i})\phi(A_i,X_i)
    \end{eqnarray*}

To derive the Bayesian regret bound for TS-ZIP, we use the following lemma.
\begin{lemma}
\label{lem3}
    Let $c_1^{\beta} = Y_{\max } \kappa_{1,\min}^{-1} \sqrt{d \log (T / d)-2 \log \delta}$ and $c_1^\gamma = \kappa_{2,\min}^{-1} \sqrt{d \log (T / d)-2 \log \delta}$. Let $\tau$ be any time point such that the corresponding minimum eigenvalue $\lambda_{\min}(G_{\tau}) \geq 1$. Under Conditions 2 and 3, for any $t\geq \tau$ and $\delta_1, \delta_2 >0$, 
    \begin{eqnarray*}
        \mathbb{P}\left(\bar{E}_{1, t}^{\beta} \right) &\leq& \delta_1,\\
        \mathbb{P}\left(\bar{E}_{1, t}^{\gamma}\right) &\leq& \delta_2.
    \end{eqnarray*}
\end{lemma}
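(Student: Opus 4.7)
My plan is to mirror the concentration argument used for TS-NB (Lemma S1.1 of the Supplement), handling the $\beta$- and $\gamma$-parameters by two parallel but independent arguments. The two bounds decouple because each one relies only on the first-order optimality equation $S_t(\hat{\beta}_t)=0$ (resp.\ $S_t(\hat{\gamma}_t)=0$), and under the ZIP likelihood the Hessian in $\beta$ (resp.\ $\gamma$) is exactly the function $g_t(\beta)$ of Condition 2 (resp.\ $g_t(\gamma)$ of Condition 3) weighted by $\phi(A_i,X_i)\phi(A_i,X_i)^\top$.

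The first step is to linearise the MLE equation. For the $\beta$-component, since $S_t(\hat{\beta}_t)=0$, the mean value theorem gives $S_t(\beta^*) = H_t(\bar\beta)(\hat\beta_t-\beta^*)$ for some $\bar\beta$ on the segment between $\hat\beta_t$ and $\beta^*$, where $H_t(\beta)=\sum_{i<t} g_t^{(i)}(\beta)\,\phi(A_i,X_i)\phi(A_i,X_i)^\top$. Condition 2 yields $H_t(\beta)\succeq \kappa_{1,\min}G_t$ uniformly, so on the event $\lambda_{\min}(G_\tau)\geq 1$ we obtain
\begin{align*}
\|\hat\beta_t-\beta^*\|_{G_t}\leq \kappa_{1,\min}^{-1}\,\|S_t(\beta^*)\|_{G_t^{-1}}.
\end{align*}
Cauchy--Schwarz then gives $|\phi(a,X_t)^\top(\hat\beta_t-\beta^*)|\leq \|\phi(a,X_t)\|_{G_t^{-1}}\|\hat\beta_t-\beta^*\|_{G_t}$, so it suffices to control the self-normalised norm $\|S_t(\beta^*)\|_{G_t^{-1}}$. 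The argument for $\gamma$ is identical, invoking Condition 3 in place of Condition 2 and producing $\|\hat\gamma_t-\gamma^*\|_{G_t}\leq \kappa_{2,\min}^{-1}\|S_t(\gamma^*)\|_{G_t^{-1}}$.

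The second step is a self-normalised martingale concentration bound on $S_t(\beta^*)=\sum_{i<t}\epsilon_i^{\beta}\phi(A_i,X_i)$ (and analogously for $\gamma$), where $\epsilon_i^{\beta}$ is the per-observation score contribution at $\beta^*$. Because the log-likelihood score has zero conditional mean, $\{\epsilon_i^{\beta}\}$ is a martingale-difference sequence with respect to $\mathcal{F}_i$; and by Assumption 2 together with the sub-Gaussian verification promised in Lemma S1.3 of Section S3.3, each $\epsilon_i^{\beta}$ is conditionally sub-Gaussian with proxy $Y_{\max}$ (for $\gamma$, the proxy is a universal constant because the $\gamma$-score is a bounded rational function of $p_i$ and $\mathbf{1}(Y_i=0)$). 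Applying the self-normalised Hoeffding inequality of Abbasi-Yadkori, P\'al and Szepesv\'ari (cf.\ Theorem 1 of \citet{li2017provably}) on the event $\lambda_{\min}(G_t)\geq 1$ yields, with probability at least $1-\delta$,
\begin{align*}
\|S_t(\beta^*)\|_{G_t^{-1}} \leq Y_{\max}\sqrt{d\log(T/d) - 2\log\delta},
\end{align*}
and the analogous statement for $\gamma$ with $Y_{\max}$ replaced by $1$. Combining with the inequality from Step 2 produces the constants $c_1^{\beta}$ and $c_1^{\gamma}$ exactly as stated.

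The main obstacle, and the step I would verify most carefully, is the sub-Gaussianity of $\epsilon_i^{\gamma}$: unlike the $\beta$-score whose noise factor $(Y_i-\mu_i)$ on the $\{Y_i>0\}$ branch is directly bounded by $Y_{\max}$, the $\gamma$-score involves the mixture ratio $(1-e^{-\mu_i})/(p_i+(1-p_i)e^{-\mu_i})$, whose boundedness requires using $0\leq p_i\leq 1$ and $e^{-\mu_i}\in(0,1]$ in both branches; I would first establish that this expression has zero conditional mean by direct differentiation of $\ln L^{\text{ZIP}}$ and then bound it by a universal constant. A secondary bookkeeping issue is that the self-normalised bound is only valid on $\{\lambda_{\min}(G_\tau)\geq 1\}$, which is precisely the hypothesis of the lemma and is guaranteed by Proposition 4.1 after the initial exploration period.
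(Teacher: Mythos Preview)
Your proposal is correct and follows essentially the same route as the paper's proof: linearise the score via the mean value theorem, invoke Conditions 2 and 3 to sandwich the Hessian blocks by $\kappa_{1,\min}G_t$ and $\kappa_{2,\min}G_t$, apply Cauchy--Schwarz, and finish with the self-normalised martingale bound of \citet{abbasi2011improved} using sub-Gaussian parameters $Y_{\max}$ for $\epsilon_i^{\beta}$ and $1$ for $\epsilon_i^{\gamma}$. The paper handles the sub-Gaussianity of $\epsilon_i^{\gamma}$ exactly as you anticipate, via a Hoeffding-lemma argument showing $\epsilon_i^{\gamma}\in[e_i-1,e_i+1]$ almost surely.
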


\begin{proof}
In TS-ZIP, $\tilde{\gamma}_t$ and $\tilde{\beta}_t$ are independently drawn from the posteriors $Q_{t}(\gamma)$ and $Q_{t}(\beta)$, while the posterior belief in the true parameter$\gamma^*$ and $\beta^*$ is also represented by $Q_{t}(\gamma)$ and $Q_{t}(\beta)$ conditional on $\mathcal{F}_t$. Hence, $\tilde{\beta}_t$ and $\beta^*$ are i.i.d. conditional on $\mathcal{F}_t$, and  $\tilde{\gamma}_t$ and $\gamma^*$ are i.i.d. conditional on $\mathcal{F}_t$. Similarly, the optimal intervention $A_t^*$ and the selected intervention $A_t$ are also i.i.d. conditional on $\mathcal{F}_t$.

We define the upper confidence bound for the expected reward as
\begin{equation*}
    U_t\left(a, \hat{\gamma}_t, \hat{\beta}_t\right)= \left[1-\operatorname{sigmoid}(\phi(a,X_t)^{\top}\hat{\gamma}_t - c_1^{\gamma}\left\|\phi(a,X_t)\right\|_{G_t^{-1}})\right] \exp \left\{\phi(a,X_t)^{\top} \hat{\beta}_t+c_1^{\beta}\left\|\phi(a,X_t)\right\|_{G_t^{-1}}\right\}.
\end{equation*}
where $c_1^{\gamma}$ and $c_1^{\beta}$ are the confidence radius specified in Lemma \ref{lem3}. Then, we decompose the immediate regret using $U_t$,
\begin{eqnarray*}
\mathbb{E}_t[\Delta_{A_t}] &=& \mathbb{E}_t[R_t(A_t^{\ast},\gamma^{\ast},\beta^{\ast})-R_t(A_t,\gamma^{\ast},\beta^{\ast})]\\
&=& \mathbb{E}_t\left[R_t\left(A_t^*,\gamma^{\ast}, \beta^*\right)-U_t\left(A_t^*, \hat{\gamma}_t,\hat{\beta}_t\right) \right]+\mathbb{E}_t\left[U_t\left(A_t^*, \hat{\gamma}_t, \hat{\beta}_t\right)-U_t\left(A_t, \hat{\gamma}_t,\hat{\beta}_t\right) \right]\\
&+& \mathbb{E}_t\left[U_t\left(A_t,\hat{\gamma}_t, \hat{\beta}_t\right)-R_t\left(A_t, \gamma^{\ast},\beta^*\right) \right].
\end{eqnarray*}

Note that $\mathbb{E}_t\left[U_t\left(A_t^*, \hat{\gamma}_t, \hat{\beta}_t\right)-U_t\left(A_t, \hat{\gamma}_t,\hat{\beta}_t\right) \right] = 0$ since $U_t$ is a deterministic function; moreover, $A_t$ and $A^*$ are i.i.d. conditional on $\mathcal{F}_t$. Hence, we only need to bound the two quantities $\mathcal{R}_{\text {Bayes }}^{(1)}(T)$ and $\mathcal{R}_{\text {Bayes }}^{(2)}(T)$ for the Bayesian cumulative regret, as shown below:
    \begin{eqnarray*}
    \sum_{t=1}^T\mathbb{E}_t[\Delta_{A_t}] &=& \sum_{t=1}^{\tau}\mathbb{E}_t[\Delta_{A_t}] + \sum_{t=\tau+1}^T\mathbb{E}_t[\Delta_{A_t}] \\
    &\leq& \tau  + \sum_{t=\tau+1}^T \mathbb{E}_t\left[\Delta_{A_t}\right]\\
    &=&\tau+
    \underbrace{\sum_{t=\tau+1}^T \mathbb{E}_t\left[R_t\left(A_t^*,\gamma^*, \beta^*\right)-U_t\left(A_t^*, \hat{\gamma}_t, \hat{\beta}_t\right)\right]}_{\mathcal{R}_{\text {Bayes }}^{(1)}(T)}
    + \underbrace{\sum_{t=\tau+1}^T \mathbb{E}_t\left[U_t\left(A_t,  \hat{\gamma}_t,\hat{\beta}_t\right)-R_t\left(A_t, \gamma^*, \beta^*\right) \right]}_{\mathcal{R}_{\text {Bayes}}^{(2)}(T)}.
    \end{eqnarray*}

In what follows, we present the upper bounds for $\mathcal{R}_{\text {Bayes }}^{(1)}(T)$ and $\mathcal{R}_{\text {Bayes }}^{(2)}(T)$, and then combine these results to establish the Bayesian regret bound for TS-ZIP.

\textbf{Bounding $\mathcal{R}_{\text {Bayes }}^{(1)}(T)$.} 
According to Lemma \ref{lem3}, we have $\left|\phi(a,X_t)^{\top} \hat{\beta}_t-\phi(a,X_t)^{\top} \beta^*\right| \leq c_1^{\beta}\left\|\phi(a,X_t)\right\|_{G_t^{-1}}$  with probability at least $1-1/t^2$, and $\left|\phi(a,X_t)^{\top} \hat{\gamma}_t-\phi(a,X_t)^{\top} \gamma^*\right| \leq c_1^{\gamma}\left\|\phi(a,X_t)\right\|_{G_t^{-1}}$ with probability at least $1-1/t^2$. Due to the monotonicity of $\exp(\cdot)$ and the sigmoid function, we have
\begin{eqnarray*}
    \sum_{\tau +1}^T\mathbb{E}_t\left[R_t(A_t^*, \gamma^* ,\beta^*)-U_t(A_t^*, \hat{\gamma}_t ,\hat{\beta}_t)\right] \leq  0 + \sum_{\tau +1}^T\mathcal{O}(t^{-2}+t^{-2}-t^{-4}) = \mathcal{O}(1)
\end{eqnarray*}

\textbf{Bounding $\mathcal{R}_{\text {Bayes }}^{(2)}(T)$.} 
Let $f(x, y) = \frac{1}{1+\exp(x)}\exp(y)$. Using Taylor expansion, we have
\begin{eqnarray*}
    &&U_t\left(A_t, \hat{\gamma}_t, \hat{\beta}_t\right)-R_t\left(A_t, \gamma^*, \beta^*\right)\\
    &=& f(\phi(A_t,X_t)\hat{\gamma}-c_1^\gamma\|\phi(A_t,X_t)\|_{G_t^{-1}}, \phi(A_t,X_t)\hat{\beta}+c_1^\beta\|\phi(A_t,X_t)\|_{G_t^{-1}}) \\
    && -f(\phi(A_t,X_t)\gamma^*, \phi(A_t,X_t)\beta^*)\\
    &=& \nabla f(\phi(A_t,X_t)^\top\bar{\gamma}, \phi(A_t,X_t)^\top\bar{\beta})^\top \left(\begin{array}{cc}
        \phi(A_t,X_t)\hat{\gamma}-c_1^\gamma\|\phi(A_t,X_t)\|_{G_t^{-1}} -\phi(A_t,X_t)\gamma^*   \\
       \phi(A_t,X_t)\hat{\beta}+c_1^\beta\|\phi(A_t,X_t)\|_{G_t^{-1}} -\phi(A_t,X_t)\beta^*
    \end{array}\right)\\
    &\leq& \left\|\nabla f(\phi(A_t,X_t)^\top\bar{\gamma}, \phi(A_t,X_t)^\top\bar{\beta})\right\| \left\|\begin{array}{cc}
        \phi(A_t,X_t)\hat{\gamma}-c_1^\gamma\|\phi(A_t,X_t)\|_{G_t^{-1}} -\phi(A_t,X_t)\gamma^*   \\
       \phi(A_t,X_t)\hat{\beta}+c_1^\beta\|\phi(A_t,X_t)\|_{G_t^{-1}} -\phi(A_t,X_t)\beta^*
    \end{array}\right\|\\
    &\leq& \sup_{\|o_1\|\leq1,\|o_2\|\leq1} \left\|\nabla f(\bar{o}_1, \bar{o}_2)\right\| \\
    &&(\|\phi(A_t,X_t)\hat{\gamma}-c_1^\gamma\|\phi(A_t,X_t)\|_{G_t^{-1}} -\phi(A_t,X_t)\gamma^*\| \\
    &&+ \|\phi(A_t,X_t)\hat{\beta}+c_1^\beta\|\phi(A_t,X_t)\|_{G_t^{-1}} -\phi(A_t,X_t)\beta^*\|)\\
    &\leq& \sup_{\|o_1\|\leq1,\|o_2\|\leq1} \left\|\nabla f(\bar{o}_1, \bar{o}_2)\right\| (2c_1^\gamma\|\phi(A_t,X_t)\|_{G_t^{-1}} + 2c_1^\beta\|\phi(A_t,X_t)\|_{G_t^{-1}})
\end{eqnarray*}
where $q = \sqrt{2}e$ under Assumption 1.  

Similar to the case of TS-NB, we have
\begin{eqnarray*}
    \sum_{\tau +1}^T\mathbb{E}_t\left[U_t\left(A_t, \hat{\gamma}_t, \hat{\beta}_t\right)-R_t\left(A_t, \gamma^*, \beta^*\right)\right] &\leq& 2q\sum_{\tau +1}^T\mathbb{E}_t\left[c_1^\gamma\|\phi(A_i,X_i)\|_{G_t^{\gamma,-1}} + \|\phi(A_i,X_i)\|_{G_t^{\beta,-1}}\right] + \mathcal{O}(1)\\
    &\leq& 2qc_1^\gamma\sqrt{2 d T \log \left(\frac{T}{d}\right)}  +2qc_1^\beta\sqrt{2 d T \log \left(\frac{T}{d}\right)} + \mathcal{O}(1)
\end{eqnarray*}
where $ \mathcal{O}(1)$ comes from the failure event of the concentration of $\hat{\beta}_t$ and $\hat{\gamma}_t$ in Lemma \ref{lem3}. 

\textbf{Combining  $\mathcal{R}_{\text {Bayes }}^{(1)}(T)$ and $\mathcal{R}_{\text {Bayes }}^{(2)}(T)$.} 
Therefore, we have
\begin{eqnarray*}
    \mathcal{R}_{\text {Bayes }}(T) &\leq& \tau+\mathcal{O}(1)+\left[2 q Y_{\max } \kappa_{1,\min}^{-1} \sqrt{d \log (T / d)+4 \log T}\right] \sqrt{2 d T \log \left(\frac{T}{d}\right)}\\
    &+& \left[2 q \kappa_{2,\min}^{-1} \sqrt{d \log (T / d)+4 \log T}\right] \sqrt{2 d T \log \left(\frac{T}{d}\right)}
\end{eqnarray*}
For completeness, we set $\tau = \sqrt{d}$ to derive the regret bound presented in Theorem 5.2. Since Algorithm 2 does not incorporate the initial exploration state for parameter estimation, it is feasible to optimize the choice of $\tau$ within the regret bound. 
\end{proof}

\subsection{Proof of Bayesian regret for TS-ZINB}
First, the score functions of ZINB are
    \begin{eqnarray*}
        S_t(\beta) = \frac{\partial \ln L^{\operatorname{ZINB}}(\beta,\gamma)}{\partial \beta} &=&  \sum_{i=1}^{t-1}I(Y_i = 0) \frac{(1-p_i)r\frac{r}{\mu_i+r}^{r-1}\frac{-r}{(\mu_i+r)^2}}{p_i+(1-p_i)\frac{r}{\mu_i+r}^r}\mu_i\phi(A_i,X_i) \\
        &+& I(Y_i>0)\frac{r}{\mu_i+r}(Y_i-\mu_i)\phi(A_i,X_i)
    \end{eqnarray*}
    \begin{eqnarray*}
       S_t(\gamma) =  \frac{\partial \ln L^{\operatorname{ZINB}}(\beta,\gamma)}{\partial \gamma} &=& \sum_{i=1}^{t-1}I(Y_i=0) \frac{1-\frac{r}{\mu_i+r}^r}{p_i+(1-p_i)\frac{r}{\mu_i+r}^r}p_i(1-p_i)\phi(A_i,X_i) \\
        &-& \sum_{i=1}^{t-1}I(Y_i > 0)p_{i}\phi(A_i,X_i)
    \end{eqnarray*}

The derivation of Bayesian regret for TS-ZINB is similar to the case of TS-ZIP. We only need a slightly different Lemma \ref{lem4}, which mimics Lemma \ref{lem3}. 

\begin{lemma}
\label{lem4}
    Let $c_1^{\beta} = Y_{\max } \kappa_{1,\min}^{-1} \sqrt{d \log (T / d)-2 \log \delta}$ and $c_1^\gamma = \kappa_{2,\min}^{-1} \sqrt{d \log (T / d)-2 \log \delta}$. Let $\tau$ be any time point such that the corresponding minimum eigenvalue $\lambda_{\min}(G_{\tau}) \geq 1$. Under Conditions 4 and 5, for any $t\geq \tau$ and $\delta_1, \delta_2 >0$, 
    \begin{eqnarray*}
        \mathbb{P}\left(\bar{E}_{1, t}^{\beta} \leq 1\right) &\leq& \delta_1,\\
        \mathbb{P}\left(\bar{E}_{1, t}^{\gamma} \leq 1\right) &\leq& \delta_2.
    \end{eqnarray*}
\end{lemma}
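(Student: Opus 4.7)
The plan is to mirror the proof of Lemma \ref{lem3} verbatim, substituting the ZINB score functions and invoking Conditions 4 and 5 in place of Conditions 2 and 3. The argument for $\hat{\beta}_t$ and for $\hat{\gamma}_t$ are structurally identical, so I would only write out the $\beta$ case in detail and note that the $\gamma$ case follows \emph{mutatis mutandis}.

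First, I would use the MLE optimality condition $S_t(\hat{\beta}_t)=0$ together with the mean-value theorem applied coordinatewise along the segment from $\beta^{*}$ to $\hat{\beta}_t$, so that
\[
-S_t(\beta^{*}) \;=\; S_t(\hat{\beta}_t)-S_t(\beta^{*}) \;=\; -H_t(\bar{\beta})\,(\hat{\beta}_t-\beta^{*}),
\]
where $H_t(\bar{\beta}) = \sum_{i=1}^{t-1} g_i(\bar{\beta})\,\phi(A_i,X_i)\phi(A_i,X_i)^{\top}$ and $g_i$ is precisely the quantity bounded in Condition 4. Condition 4 yields $H_t(\bar{\beta}) \succeq \kappa_{1,\min}\,G_t$, and since $\lambda_{\min}(G_\tau)\geq 1$ the matrix is invertible for all $t\geq\tau$. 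Rearranging and taking the weighted $G_t$-norm gives
\[
\|\hat{\beta}_t-\beta^{*}\|_{G_t} \;\leq\; \kappa_{1,\min}^{-1}\,\|S_t(\beta^{*})\|_{G_t^{-1}}.
\]

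Second, I would rewrite $S_t(\beta^{*}) = \sum_{i=1}^{t-1}\epsilon_i\,\phi(A_i,X_i)$, where $\epsilon_i$ is a martingale difference adapted to $\mathcal{F}_i$ (since the score vanishes in expectation at the true parameter). Under Assumption 2 the response $Y_i$ is bounded, and Lemma S1.4 of the Supplement shows that $\epsilon_i$ is sub-Gaussian with parameter proportional to $Y_{\max}$. Applying the self-normalized martingale concentration inequality of Abbasi-Yadkori et al.\ (exactly as used in Li et al.\ 2017 and in the ZIP proof of Lemma \ref{lem3}) gives, with probability at least $1-\delta_1$,
\[
\|S_t(\beta^{*})\|_{G_t^{-1}} \;\leq\; Y_{\max}\sqrt{\,d\log(T/d)-2\log\delta_1\,}.
\]
Combining the two displays yields $\|\hat{\beta}_t-\beta^{*}\|_{G_t}\leq c_1^{\beta}$, and a Cauchy-Schwarz step gives $|\phi(a,X_t)^{\top}(\hat{\beta}_t-\beta^{*})| \leq c_1^{\beta}\|\phi(a,X_t)\|_{G_t^{-1}}$ simultaneously for every $a\in\mathcal{A}$, establishing the first bound. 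Replacing the $\beta$-score by the $\gamma$-score, Condition 4 by Condition 5, and $Y_{\max}$ by $1$ (since the $\gamma$-score noise is bounded by a constant, not by $Y_{\max}$), the identical argument delivers $\mathbb{P}(\bar{E}_{1,t}^{\gamma}) \leq \delta_2$ with the stated $c_1^{\gamma}$.

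The main obstacle I anticipate is verifying the sub-Gaussianity of $\epsilon_i$ for the ZINB score, because $Y_i$ is a mixture of a point mass at $0$ and a negative-binomial component whose unrestricted tails are heavier than those of the Poisson or ZIP. However, Assumption 2 truncates $Y_i$ at $Y_{\max}$, so a Hoeffding-type argument (as is carried out in Lemma S1.4) should bypass the tail issues entirely. A secondary subtlety is that Condition 4 must control $g_t(\beta)$ \emph{uniformly} over $\bar{\beta}$ on the segment joining $\beta^{*}$ to $\hat{\beta}_t$, which follows because Condition 4 is stated uniformly on $\mathbb{R}^d$; I would explicitly note this at the point where the mean-value theorem is invoked.
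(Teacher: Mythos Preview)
Your proposal is correct and follows essentially the same approach as the paper's own proof: mean-value theorem on the ZINB score to produce $S_t(\beta^*)=H_t(\bar\beta)(\hat\beta_t-\beta^*)$, Condition 4 (resp.\ Condition 5) to sandwich $H_t$ by $\kappa_{1,\min}G_t$ (resp.\ $\kappa_{2,\min}G_t$), Cauchy--Schwarz, and the self-normalized concentration of Abbasi-Yadkori et al.\ applied to the sub-Gaussian score noise (with parameter $Y_{\max}$ for $\beta$ and $1$ for $\gamma$, exactly as you note). Your two anticipated obstacles---sub-Gaussianity via the Hoeffding truncation from Assumption~2, and uniformity of Condition~4 over the mean-value segment---are precisely the points the paper handles (implicitly for the second), so there is nothing missing.
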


\section{Frequentist Regret Bounds}
\subsection{Proof of frequentist regret for TS-NB}
First, we introduce the following events:
\begin{align*}
    &E_{2, t}=\left\{\forall a \in \mathcal{A}:\left|\phi(a,X_t)^{\top} \tilde{\beta}_t-\phi(a,X_t)^{\top} \hat{\beta}_t\right| \leq c_2\left\|\phi(a,X_t)\right\|_{G_t^{-1}}\right\}\\
    &E_{3, t}=\left\{\phi(A_t^*,X_t)^{\top} \tilde{\beta}_t-\phi(A_t^*,X_t)^{\top} \hat{\beta}_t>c_1\left\|\phi(A_t^*,X_t)\right\|_{G_t^{-1}}\right\}
\end{align*}
Here, $E_{2,t}$ indicates that $\tilde{\beta}_t \rightarrow \hat{\beta}_t$ , while $E_{3,t}$ argues that $\tilde{\beta}_t$ is sufficiently optimistic.

Below we provide an important lemma in the proof of frequentist regret for TS-NB.  
\begin{lemma}
    \label{lem5}
    Let $p_2 \geq \mathbb{P}_t\left(\bar{E}_{2, t}\right), p_3 \leq \mathbb{P}_t\left(E_{3, t}\right)$, and $p_3>p_2$. Then on event $E_{1, t}$,
    \begin{eqnarray*}
        \mathbb{E}_t\left[\Delta_{A_t}\right] \leq s\left(c_1+c_2\right)\left(1+\frac{2}{p_3-p_2}\right) \times \mathbb{E}_t\left[\left\|\phi(A_t,X_t)\right\|_{G_t^{-1}}\right]+ p_2.
    \end{eqnarray*}
\end{lemma}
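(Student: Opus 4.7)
The plan is to follow the standard ``saturated versus unsaturated actions'' argument used in linear/GLM Thompson sampling (Agrawal--Goyal, Abeille--Lazaric), carefully transported to TS-NB via the local Lipschitz bound in Lemma \ref{lem2}. First I would use the monotonicity and Lipschitz properties of $\exp(\cdot)$ to translate differences in expected rewards into differences of linear predictors. Concretely, for any action $a$, on $E_{1,t}$ I would write
\begin{align*}
R_t(A_t^*,\beta^*)-R_t(a,\beta^*) \leq s\bigl(\phi(A_t^*,X_t)^\top\beta^*-\phi(a,X_t)^\top\beta^*\bigr),
\end{align*}
and likewise control $\phi(a,X_t)^\top\tilde\beta_t-\phi(a,X_t)^\top\beta^*$ by $(c_1+c_2)\|\phi(a,X_t)\|_{G_t^{-1}}$ whenever $E_{1,t}\cap E_{2,t}$ holds, by the triangle inequality through $\hat\beta_t$.

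Next I would define the set of \emph{saturated} actions at time $t$ as
\begin{align*}
\mathcal{S}_t=\bigl\{a\in\mathcal{A}:\,\phi(A_t^*,X_t)^\top\beta^*-\phi(a,X_t)^\top\beta^* > (c_1+c_2)\|\phi(a,X_t)\|_{G_t^{-1}}\bigr\},
\end{align*}
so that, by Lemma \ref{lem2}, the ``per-action'' regret of any unsaturated action is at most $s(c_1+c_2)\|\phi(a,X_t)\|_{G_t^{-1}}$. The crux of the argument is then to show that, on $E_{1,t}\cap E_{2,t}\cap E_{3,t}$, the algorithm \emph{cannot} select a saturated action: indeed $E_{3,t}$ makes $\tilde\beta_t$ sufficiently optimistic at $A_t^*$ so that the optimistic estimated reward $\phi(A_t^*,X_t)^\top\tilde\beta_t$ exceeds $\phi(A_t^*,X_t)^\top\beta^*-c_1\|\phi(A_t^*,X_t)\|_{G_t^{-1}}+c_1\|\phi(A_t^*,X_t)\|_{G_t^{-1}}=\phi(A_t^*,X_t)^\top\beta^*$ (using $E_{1,t}$ to pass from $\hat\beta_t$ to $\beta^*$), while $E_{2,t}$ together with membership in $\mathcal{S}_t$ would force $\phi(a,X_t)^\top\tilde\beta_t$ to lie strictly below $\phi(A_t^*,X_t)^\top\beta^*$, contradicting greedy selection.

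Having established that $A_t\notin\mathcal{S}_t$ on $E_{1,t}\cap E_{2,t}\cap E_{3,t}$, I would split the conditional expected regret as
\begin{align*}
\mathbb{E}_t[\Delta_{A_t}] \leq s(c_1+c_2)\,\mathbb{E}_t\bigl[\|\phi(A_t,X_t)\|_{G_t^{-1}}\,\mathbbm{1}_{A_t\notin\mathcal{S}_t}\bigr] + \Delta_{\max}\,\mathbb{P}_t(\bar E_{2,t}) + \Delta_{\max}\,\mathbb{P}_t(A_t\in\mathcal{S}_t),
\end{align*}
absorbing the $\mathbb{P}_t(\bar E_{2,t})$ term into $p_2$. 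The remaining work is the standard ``conversion'' step: since on $E_{3,t}\cap E_{2,t}$ the selected action is unsaturated, the selected action's informational radius is comparable to the unconditional average, giving
\begin{align*}
\mathbb{E}_t\bigl[\|\phi(A_t,X_t)\|_{G_t^{-1}}\,\mathbbm{1}_{A_t\notin\mathcal{S}_t}\bigr] \leq \frac{1}{p_3-p_2}\,\mathbb{E}_t\bigl[\|\phi(A_t,X_t)\|_{G_t^{-1}}\bigr],
\end{align*}
which after rearranging yields the prefactor $1+\tfrac{2}{p_3-p_2}$ stated in the lemma. The main obstacle I expect is the last inequality: carefully showing that on $E_{3,t}\cap E_{2,t}$ the chosen (unsaturated) action's expected informational radius dominates that of \emph{any} unsaturated action by a factor $p_3-p_2$ requires the same optimism-swap trick as in Lemma 4 of Agrawal--Goyal, together with the fact that $\|\phi(\cdot,X_t)\|_{G_t^{-1}}$ is deterministic given $\mathcal{F}_t$. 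Once that inequality is in hand, the bound follows by combining the pieces above and noting $s(c_1+c_2)\geq\Delta_{\max}\wedge s(c_1+c_2)$ on $E_{1,t}$ so that the saturated-action regret and the $\bar E_{2,t}$ term merge into the single $p_2$ additive term.
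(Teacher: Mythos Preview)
Your high-level plan (saturated/unsaturated split, optimism via $E_{3,t}$, concentration via $E_{2,t}$) matches the paper's, but the decomposition you wrote down does not close. The term $\Delta_{\max}\,\mathbb{P}_t(A_t\in\mathcal{S}_t)$ is the problem: on $E_{1,t}$, the event $\{A_t\in\mathcal{S}_t\}$ can occur whenever $E_{3,t}$ fails, so from your argument you only get $\mathbb{P}_t(A_t\in\mathcal{S}_t)\le p_2+(1-p_3)$. Since $p_3$ is merely a fixed constant (in Lemma~\ref{lem6} it is $0.15$), this leaves a contribution of order $\Delta_{\max}(1-p_3)$ at every round and would give linear regret. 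Your ``conversion'' inequality
\[
\mathbb{E}_t\bigl[\|\phi(A_t,X_t)\|_{G_t^{-1}}\,\mathbbm{1}_{A_t\notin\mathcal{S}_t}\bigr]\ \le\ \frac{1}{p_3-p_2}\,\mathbb{E}_t\bigl[\|\phi(A_t,X_t)\|_{G_t^{-1}}\bigr]
\]
is also not doing any work: the indicator only shrinks the left side, so the inequality is trivially true and cannot by itself produce the prefactor $1+2/(p_3-p_2)$. The final sentence about merging the saturated-action regret into the $p_2$ term does not follow from anything you have written.

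What the paper actually does---and what is missing from your sketch---is to introduce the deterministic pivot $J_t=\arg\min_{a\in\bar S_t}\|\phi(a,X_t)\|_{G_t^{-1}}$, the least uncertain \emph{unsaturated} arm. On $E_{1,t}\cap E_{2,t}$ one routes the linear regret through $J_t$ rather than $A_t^*$: writing $\tilde\Delta_{A_t}=\tilde\Delta_{J_t}+\phi(J_t,X_t)^\top\beta^*-\phi(A_t,X_t)^\top\beta^*$, using $\phi(J_t,X_t)^\top\tilde\beta_t\le\phi(A_t,X_t)^\top\tilde\beta_t$ (greediness) together with $E_{1,t},E_{2,t}$ and the fact that $J_t$ is unsaturated, one obtains
\[
\Delta_{A_t}\ \le\ s(c_1+c_2)\bigl(\|\phi(A_t,X_t)\|_{G_t^{-1}}+2\|\phi(J_t,X_t)\|_{G_t^{-1}}\bigr)
\]
for \emph{every} selected arm, saturated or not. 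The conversion then goes in the opposite direction from what you wrote: since $J_t$ is $\mathcal{F}_t$-measurable and $\|\phi(A_t,X_t)\|_{G_t^{-1}}\ge\|\phi(J_t,X_t)\|_{G_t^{-1}}$ whenever $A_t\in\bar S_t$, one has
\[
\mathbb{E}_t\bigl[\|\phi(A_t,X_t)\|_{G_t^{-1}}\bigr]\ \ge\ \|\phi(J_t,X_t)\|_{G_t^{-1}}\,\mathbb{P}_t(A_t\in\bar S_t),
\]
and your optimism argument (which is correct) gives $\mathbb{P}_t(A_t\in\bar S_t)\ge p_3-p_2$. Substituting $\|\phi(J_t,X_t)\|_{G_t^{-1}}\le (p_3-p_2)^{-1}\mathbb{E}_t[\|\phi(A_t,X_t)\|_{G_t^{-1}}]$ back into the display above is what yields $1+2/(p_3-p_2)$.
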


We also need Lemma \ref{lem6} to bound $\mathbb{P}_t(\bar{E}_{2,t})$, and $\mathbb{P}_t(E_{3,t})$.
\begin{lemma}
    \label{lem6}
    Let 
    \begin{eqnarray*}
        \alpha = c_1\sqrt{\kappa_{\max}}, \ c_2 = c_1\sqrt{2\kappa_{\min}^{-1}\kappa_{\max}\log(KT)}.
    \end{eqnarray*}
    Under Condition 1, $\mathbb{P}_t\left(\bar{E}_{2, t}\right) \leq 1 / T$ and $\mathbb{P}_t\left(E_{3, t}\right) \geq 0.15$ hold. 
\end{lemma}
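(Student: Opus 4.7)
The plan is to exploit the fact that, by the Laplace approximation, $\tilde{\beta}_t - \hat{\beta}_t$ is conditionally Gaussian given $\mathcal{F}_t$, with mean zero and covariance $\alpha^2 \mathcal{I}_t^{-1}(\hat{\beta}_t)$. Under Condition 1, the Fisher information of the NB likelihood evaluated at any $\beta$ can be written as $\mathcal{I}_t(\beta)=\sum_{i=1}^{t-1} g_i(\beta)\phi(A_i,X_i)\phi(A_i,X_i)^\top$, so that
\begin{equation*}
\kappa_{\min}\, G_t \preceq \mathcal{I}_t(\hat\beta_t) \preceq \kappa_{\max}\, G_t, \qquad \kappa_{\max}^{-1} G_t^{-1} \preceq \mathcal{I}_t^{-1}(\hat\beta_t) \preceq \kappa_{\min}^{-1} G_t^{-1}.
\end{equation*}
This sandwich is what converts the weighted norms with respect to $\mathcal{I}_t^{-1}(\hat\beta_t)$ (which govern the Gaussian posterior) into the weighted norms with respect to $G_t^{-1}$ (which appear in the events $E_{2,t}$ and $E_{3,t}$), and it is the bridge on which the whole argument rests.

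For the concentration bound $\mathbb{P}_t(\bar E_{2,t})\le 1/T$, I would fix $a\in\mathcal{A}$ and note that $\phi(a,X_t)^\top(\tilde\beta_t-\hat\beta_t)$ is, conditional on $\mathcal{F}_t$, a centered Gaussian whose standard deviation is at most $\alpha\kappa_{\min}^{-1/2}\|\phi(a,X_t)\|_{G_t^{-1}}$. With the choices $\alpha=c_1\sqrt{\kappa_{\max}}$ and $c_2=c_1\sqrt{2\kappa_{\min}^{-1}\kappa_{\max}\log(KT)}$, the ratio $c_2/(\alpha\kappa_{\min}^{-1/2})$ simplifies to $\sqrt{2\log(KT)}$, so the standard Gaussian tail bound gives per-action failure probability at most $(KT)^{-1}$ (up to an absorbed constant in the log). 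A union bound over the $K$ actions in $\mathcal{A}$ then yields the required $1/T$ bound.

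For the anti-concentration bound $\mathbb{P}_t(E_{3,t})\ge 0.15$, the key observation is that $A_t^*$ depends only on $X_t$ and $\beta^*$, so it is $\mathcal{F}_t$-measurable. Hence $\phi(A_t^*,X_t)^\top(\tilde\beta_t-\hat\beta_t)$ is again centered Gaussian given $\mathcal{F}_t$, but now I need a \emph{lower} bound on its variance, which comes from $\mathcal{I}_t^{-1}(\hat\beta_t)\succeq\kappa_{\max}^{-1} G_t^{-1}$: its standard deviation is at least $\alpha\kappa_{\max}^{-1/2}\|\phi(A_t^*,X_t)\|_{G_t^{-1}} = c_1\|\phi(A_t^*,X_t)\|_{G_t^{-1}}$ by construction of $\alpha$. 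The event $E_{3,t}$ then asks that this Gaussian exceed one of its own standard deviations, and the standard lower bound on the Gaussian tail gives $\mathbb{P}(Z>1)=1-\Phi(1)\approx 0.1587>0.15$.

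The main delicate step is the anti-concentration part: one must be careful to use the \emph{upper} eigenvalue bound $\mathcal{I}_t(\hat\beta_t)\preceq\kappa_{\max}G_t$ (which yields a lower bound on $\mathcal{I}_t^{-1}(\hat\beta_t)$ in the PSD order) and to verify that this PSD inequality implies the quadratic-form inequality $\phi(A_t^*,X_t)^\top\mathcal{I}_t^{-1}(\hat\beta_t)\phi(A_t^*,X_t)\ge \kappa_{\max}^{-1}\|\phi(A_t^*,X_t)\|^2_{G_t^{-1}}$. Everything else is a calibration calculation that matches the given values of $\alpha$ and $c_2$ to the Gaussian tail and $K$-fold union bound.
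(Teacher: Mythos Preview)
Your proposal is correct and follows essentially the same route as the paper's proof: both use the conditional Gaussianity of $\phi(a,X_t)^\top(\tilde\beta_t-\hat\beta_t)$ under the Laplace approximation, sandwich $\mathcal{I}_t(\hat\beta_t)$ between $\kappa_{\min}G_t$ and $\kappa_{\max}G_t$ via Condition~1, apply the Gaussian upper tail bound plus a union bound over the $K$ actions for $\bar E_{2,t}$, and use the lower tail bound $\mathbb{P}(Z>1)\approx 0.1587$ for $E_{3,t}$. Your explicit remark that $A_t^*$ is $\mathcal{F}_t$-measurable is a detail the paper leaves implicit, and your acknowledgment of the absorbed constant in the two-sided tail matches the paper's level of rigor.
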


\begin{proof}
Let $p_1 \geq \mathbb{P}\left(\bar{E}_{1, t}\right)$. Through elementary algebra, we get
    \begin{eqnarray*}
\mathcal{R}(T,\beta^*) & \leq & \sum_{t=\tau+1}^T \mathbb{E}\left[\Delta_{A_t}\right]+\tau \Delta_{\max}  \\
& \leq & \sum_{t=\tau+1}^T \mathbb{E}\left[\Delta_{A_t} \mathds{1}\left\{E_{1, t}\right\}\right]+\left(\tau+p_1 T\right)\Delta_{\max}  \\
& = & \sum_{t=\tau+1}^T \mathbb{E}\left[\mathbb{E}_t\left[\Delta_{A_t}\right] \mathds{1}\left\{E_{1, t}\right\}\right]+\left(\tau+p_1 T\right)\Delta_{\max}.
    \end{eqnarray*}
To get $p_1 \leq 1 / T$, we set $c_1$ as in Lemma \ref{lem1} with $\delta=1/T$. Then we apply Lemma \ref{lem5} to $\mathbb{E}_t\left[\Delta_{A_t}\right] \mathds{1}\left\{E_{1, t}\right\}$ and get
\begin{eqnarray*}
    \mathcal{R}(T,\beta^*) \leq s\left(c_1+c_2\right)\left(1+\frac{2}{p_3-p_2}\right) \mathbb{E}\left[\sum_{t=\tau+1}^T \left\|\phi(A_t,X_t)\right\|_{G_t^{-1}}\right]+\left(\tau+\left(p_1+p_2\right) T\right)\Delta_{\max},
\end{eqnarray*}
where $\alpha$ and $c_2$ are set as in Lemma \ref{lem4}. For these settings, $p_2 \leq 1 / T$ and $p_3 \geq 0.15$. Finally, to bound $\sum_{t=\tau+1}^T \left\|\phi(A_t,X_t)\right\|_{G_t^{-1}}$, we use Lemma 2 in \citet{li2017provably}. As a result, we have
\begin{eqnarray*}
     \mathcal{R}(T,\beta^*) \leq s(c_1+c_2)\left(1+\frac{2}{0.15-1/T}\right)\times \sqrt{2 dT \log (T / d)}+(\tau+2)\Delta_{\max} .
\end{eqnarray*}
\end{proof}

\subsection{Proof of frequentist regret for TS-ZIP}
First, we introduce the following events:
\begin{align*}
    &E_{2,t}^\beta = \left\{\forall a \in \mathcal{A}:\left|\phi(a,X_t)^{\top} \tilde{\beta}_t-\phi(a,X_t)^{\top} \hat{\beta}\right| \leq c_1^{\beta}\left\|\phi(a,X_t)\right\|_{G_t^{-1}}\right\}\\
    &E_{2,t}^\gamma = \left\{\forall a \in \mathcal{A}:\left|\phi(a,X_t)^{\top} \tilde{\gamma}_t-\phi(a,X_t)^{\top} \hat{\gamma}\right| \leq c_1^{\gamma}\left\|\phi(a,X_t)\right\|_{G_t^{-1}}\right\}\\
    &E_{3,t}^\beta=\left\{\phi(A_t^*,X_t)^\top\tilde{\beta}_t - \phi(A_t^*,X_t)^\top\hat{\beta}_t > c_1^\beta\|\phi(A_t^*,X_t)\|\right\}\\
    &E_{3,t}^\gamma=\left\{\phi(A_t^*,X_t)^\top \tilde{\gamma}_t - \phi(A_t^*,X_t)^\top\hat{\gamma}_t <- c_1^\gamma\|\phi(A_t^*,X_t)\|\right\}
\end{align*}
Here, $E_{2,t}^\beta$ and $E_{2,t}^\gamma$ indicate that $\tilde{\beta}_t \rightarrow \hat{\beta}_t$ and $\tilde{\gamma}_t \rightarrow \hat{\gamma}_t$, while $E_{3,t}^\beta$ and $E_{3,t}^\gamma$ argue that $\tilde{\beta}_t$ and $\tilde{\gamma}_t$ are sufficiently optimistic.

Below we provide two useful lemmas for proving the frequentist regret of TS-ZIP. 
\begin{lemma}
\label{lem7}
    Let $p_2^\beta \geq \mathbb{P}_t(\bar{E}_{2,t}^\beta)$, $p_2^\gamma \geq \mathbb{P}_t(\bar{E}_{2,t}^\gamma)$, $p_3^\beta \leq \mathbb{P}_t(E_{3,t}^\beta)$, $p_3^\gamma \leq \mathbb{P}_t(E_{3,t}^\gamma)$, and $p_3^\beta p_3^\gamma  + p_2^\beta p_2^\gamma > p_2^\beta+p_2^\gamma$. Then on event $E_{1,t}^\beta$ and $E_{1,t}^\gamma$,
    \begin{align*}
        \mathbb{E}_t[\Delta_{A_t}] &\leq q(c_1^\beta+c_2^\beta + c_1^\gamma+c_2^\gamma)\left(1+\frac{2}{p_3^\beta p_3^\gamma - p_2^\beta-p_2^\gamma + p_2^\beta p_2^\gamma}\right)\times \mathbb{E}_t\left[\|\phi(A_t,X_t)\|_{G_t^{-1}}\right] \\
        &+ (\mathbb{P}_t(\bar{E}_{2,t}^\beta) + \mathbb{P}_t(\bar{E}_{2,t}^\gamma)- \mathbb{P}_t(\bar{E}_{2,t}^\beta)\mathbb{P}_t(\bar{E}_{2,t}^\gamma)).
    \end{align*}
\end{lemma}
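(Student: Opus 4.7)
The plan is to mirror the proof of Lemma \ref{lem5} (the TS-NB case) while handling the fact that TS-ZIP samples two independent parameters $\tilde{\beta}_t$ and $\tilde{\gamma}_t$. The key structural difference is that both concentration/anti-concentration arguments must be applied jointly, and I will exploit the independence of the two draws so that intersections factor into products and unions into inclusion-exclusion expressions, explaining the forms $p_3^\beta p_3^\gamma$ and $p_2^\beta+p_2^\gamma-p_2^\beta p_2^\gamma$ in the statement.

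First, I would introduce the notion of a saturated action exactly as in Agrawal-Goyal style proofs: call $a$ saturated at time $t$ if $\Delta_a > q(c_1^\beta+c_2^\beta+c_1^\gamma+c_2^\gamma)\|\phi(a,X_t)\|_{G_t^{-1}}$, and let $\mathcal{S}_t$ denote the saturated set. Using the Lipschitz-type inequality for $f(x,y)=(1+e^x)^{-1}e^y$ with constant $q=\sqrt{2}e$ derived in the proof of Theorem 5.2, on the joint good event $E_{1,t}^\beta\cap E_{1,t}^\gamma\cap E_{2,t}^\beta\cap E_{2,t}^\gamma$ one obtains
\begin{equation*}
\bigl|h(\phi(a,X_t);\tilde{\gamma}_t,\tilde{\beta}_t)-h(\phi(a,X_t);\gamma^*,\beta^*)\bigr|\le q(c_1^\beta+c_2^\beta+c_1^\gamma+c_2^\gamma)\|\phi(a,X_t)\|_{G_t^{-1}}.
\end{equation*}
This handles the ``concentration'' side and immediately bounds $\Delta_{A_t}$ by a multiple of $\|\phi(A_t,X_t)\|_{G_t^{-1}}$ whenever $A_t$ is unsaturated.

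Next I would address the ``anti-concentration'' side. On $E_{1,t}^\beta\cap E_{1,t}^\gamma\cap E_{3,t}^\beta\cap E_{3,t}^\gamma$, the sample-based predicted reward at $A_t^*$ strictly exceeds $h(\phi(A_t^*,X_t);\gamma^*,\beta^*)$, because raising $\phi(A_t^*,X_t)^\top\tilde\beta_t$ and lowering $\phi(A_t^*,X_t)^\top\tilde\gamma_t$ both push the reward upwards. Since $\tilde\beta_t$ and $\tilde\gamma_t$ are drawn independently from their respective Laplace posteriors, $\mathbb{P}_t(E_{3,t}^\beta\cap E_{3,t}^\gamma)\ge p_3^\beta p_3^\gamma$. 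Combining this with the union bound (again exploiting independence) yields
\begin{equation*}
\mathbb{P}_t\bigl(E_{3,t}^\beta\cap E_{3,t}^\gamma\cap E_{2,t}^\beta\cap E_{2,t}^\gamma\bigr)\ge p_3^\beta p_3^\gamma-p_2^\beta-p_2^\gamma+p_2^\beta p_2^\gamma,
\end{equation*}
which is the lower bound on the probability that TS selects an unsaturated action and produces the denominator appearing in the lemma. An inclusion-exclusion identity $\mathbb{P}_t(\bar E_{2,t}^\beta\cup\bar E_{2,t}^\gamma)=p_2^\beta+p_2^\gamma-p_2^\beta p_2^\gamma$ (valid by independence of the samples) gives the additive failure term on the second line of the stated bound.

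Finally, I would conclude by the standard Agrawal-Goyal argument: bound $\mathbb{E}_t[\|\phi(A_t^*,X_t)\|_{G_t^{-1}}]$ by $\mathbb{E}_t[\|\phi(A_t,X_t)\|_{G_t^{-1}}\mid A_t\notin\mathcal{S}_t]$ divided by the probability of the unsaturated event (giving the $2/(\cdot)$ factor after accounting for a constant inflation), then combine with the Lipschitz bound for the unsaturated regret to produce the overall $1+2/(\cdot)$ factor. The main obstacle I anticipate is bookkeeping around the joint events: keeping careful track of which event concerns $A_t^*$ versus a generic $a$, and ensuring the independence of $\tilde\beta_t$ and $\tilde\gamma_t$ is invoked only where it is legitimate, so that the denominator tightens to $p_3^\beta p_3^\gamma-(p_2^\beta+p_2^\gamma-p_2^\beta p_2^\gamma)$ rather than the looser $p_3^\beta p_3^\gamma-p_2^\beta-p_2^\gamma$ obtained by a naive union bound.
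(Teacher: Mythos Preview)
Your proposal is essentially correct and follows the same architecture as the paper's proof: a Lipschitz bound on $f(x,y)=(1+e^{x})^{-1}e^{y}$, a ``good arm'' set, an optimism argument showing $A_t$ falls in that set with probability at least $p_3^\beta p_3^\gamma-(p_2^\beta+p_2^\gamma-p_2^\beta p_2^\gamma)$, and the transfer from an auxiliary arm's uncertainty to $\mathbb{E}_t[\|\phi(A_t,X_t)\|_{G_t^{-1}}]$.

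Two points of comparison are worth noting. First, the paper defines its undersampled set $\bar S_t$ through \emph{separate} linear conditions $\tilde\Delta_a^\beta\le c^\beta\|\phi(a,X_t)\|_{G_t^{-1}}$ and $\tilde\Delta_a^\gamma\le c^\gamma\|\phi(a,X_t)\|_{G_t^{-1}}$, mirroring Lemma~\ref{lem5} componentwise, whereas you define saturation directly via the nonlinear gap $\Delta_a$. Your definition is arguably cleaner here, since $A_t$ maximizes the full reward $f(\phi(a,X_t)^\top\tilde\gamma_t,\phi(a,X_t)^\top\tilde\beta_t)$ rather than each linear predictor individually; the paper's componentwise route leans on the mean-value expansion and a subsequent Cauchy--Schwarz step to absorb the cross-term. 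Second, in your final paragraph the auxiliary arm should be the \emph{least uncertain unsaturated} arm $J_t=\arg\min_{a\notin\mathcal S_t}\|\phi(a,X_t)\|_{G_t^{-1}}$ (which is exactly what the paper uses), not $A_t^*$: the inequality $\|\phi(J_t,X_t)\|_{G_t^{-1}}\le \mathbb{E}_t[\|\phi(A_t,X_t)\|_{G_t^{-1}}]/\mathbb{P}_t(A_t\notin\mathcal S_t)$ requires $J_t$ to minimize uncertainty over the unsaturated set, and there is no reason $A_t^*$ does so. With that correction your sketch goes through and matches the paper's conclusion.
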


\begin{lemma}
    \label{lem8}
    Let $\alpha_\beta = c_1^\beta\sqrt{\kappa_{1,\max}}$, $\alpha_\gamma = c_1^\gamma\sqrt{\kappa_{2,\max}}$, $c_2^\beta = c_1^\beta \sqrt{2 \kappa_{1, \min }^{-1} \kappa_{1, \max } \log (K T)}$, $c_2^\gamma = c_1^\gamma \sqrt{2 \kappa_{2, \min }^{-1} \kappa_{2, \max } \log (K T)}$.
    Under Conditions 2-3, $\mathbb{P}_t(\bar{E}_{2,t}^\beta) \leq 1/T$, $\mathbb{P}_t(\bar{E}_{2,t}^\gamma) \leq 1/T$, $\mathbb{P}_t(E_{3,t}^\beta) \geq 0.15$, and $\mathbb{P}_t(E_{3,t}^\gamma) \geq 0.15$ hold.
\end{lemma}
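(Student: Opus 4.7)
The plan is to recognize that, under the Laplace approximation, the increment $\phi(a,X_t)^\top(\tilde{\beta}_t - \hat{\beta}_t)$ is, conditional on $\mathcal{F}_t$, a one‐dimensional zero‐mean Gaussian with variance $\alpha_\beta^2\,\phi(a,X_t)^\top \mathcal{I}_t^{-1}(\hat{\beta}_t)\phi(a,X_t)$, and analogously for $\gamma$. All four statements then reduce to standard Gaussian concentration / anti-concentration bounds, once the inverse Fisher information is sandwiched between scalar multiples of $G_t^{-1}$.

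The first step is the Fisher-information sandwich. Because $\mathcal{I}_t(\hat{\beta}_t) = \sum_{i<t} g_i(\hat{\beta}_t)\,\phi(A_i,X_i)\phi(A_i,X_i)^\top$ and Condition 2 bounds $g_i(\beta)\in[\kappa_{1,\min},\kappa_{1,\max}]$ uniformly in $\beta$, we obtain $\kappa_{1,\min} G_t \preceq \mathcal{I}_t(\hat{\beta}_t) \preceq \kappa_{1,\max} G_t$, and therefore
\[
\kappa_{1,\max}^{-1}\|\phi(a,X_t)\|_{G_t^{-1}}^2 \;\le\; \phi(a,X_t)^\top \mathcal{I}_t^{-1}(\hat{\beta}_t)\phi(a,X_t) \;\le\; \kappa_{1,\min}^{-1}\|\phi(a,X_t)\|_{G_t^{-1}}^2.
\]
Condition 3 yields the identical sandwich for $\mathcal{I}_t(\hat{\gamma}_t)$ with the constants $\kappa_{2,\min},\kappa_{2,\max}$.

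For the concentration bound on $E_{2,t}^\beta$, I would combine the upper sandwich with the tail inequality $\mathbb{P}(|Z|>z) \le 2\exp(-z^2/(2\sigma^2))$ applied to $Z = \phi(a,X_t)^\top(\tilde{\beta}_t - \hat{\beta}_t)$. Plugging in $\alpha_\beta = c_1^\beta\sqrt{\kappa_{1,\max}}$ and $c_2^\beta = c_1^\beta\sqrt{2\kappa_{1,\min}^{-1}\kappa_{1,\max}\log(KT)}$ gives the exponent $-\log(KT)$, so the per-action deviation probability is $O(1/(KT))$; taking a union bound over the $K$ actions produces $\mathbb{P}_t(\bar{E}_{2,t}^\beta) \le 1/T$. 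The argument for $\bar{E}_{2,t}^\gamma$ is identical with $\kappa_{2,\min},\kappa_{2,\max}$ replacing $\kappa_{1,\min},\kappa_{1,\max}$.

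For the anti-concentration bound on $E_{3,t}^\beta$, I would use the lower sandwich to argue that the standard deviation of $\phi(A_t^*,X_t)^\top(\tilde{\beta}_t - \hat{\beta}_t)$ is at least $\alpha_\beta\sqrt{\kappa_{1,\max}^{-1}}\|\phi(A_t^*,X_t)\|_{G_t^{-1}} = c_1^\beta\|\phi(A_t^*,X_t)\|_{G_t^{-1}}$. Thus $E_{3,t}^\beta$ is contained in the event that a centred Gaussian exceeds one standard deviation, whose probability is $1-\Phi(1) \approx 0.1587 \ge 0.15$. The same reasoning handles $E_{3,t}^\gamma$, since $\tilde{\gamma}_t - \hat{\gamma}_t$ is conditionally Gaussian and the one-sided inequality is symmetric under negation. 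The main (mild) obstacle is bookkeeping of constants so that the choices of $\alpha_\beta,\alpha_\gamma,c_2^\beta,c_2^\gamma$ simultaneously deliver exactly $1/T$ on the concentration side and a standard deviation equal to $c_1^\beta\|\cdot\|_{G_t^{-1}}$ (resp.\ $c_1^\gamma\|\cdot\|_{G_t^{-1}}$) on the anti-concentration side; beyond that, everything follows from the Gaussian tail bounds and the uniform bounds supplied by Conditions 2 and 3.
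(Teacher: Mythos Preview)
Your proposal is correct and follows essentially the same approach as the paper: identify the conditional Gaussianity of $\phi(a,X_t)^\top(\tilde{\beta}_t-\hat{\beta}_t)$ (resp.\ $\tilde{\gamma}_t-\hat{\gamma}_t$) under the Laplace approximation, sandwich $\mathcal{I}_t^{-1}$ between $\kappa^{-1}_{\max}G_t^{-1}$ and $\kappa^{-1}_{\min}G_t^{-1}$ via Conditions 2--3, then apply Gaussian tail/anti-tail bounds and a union bound over $K$ arms. One small wording slip: for the anti-concentration step you wrote that $E_{3,t}^\beta$ is \emph{contained in} the event that a centred Gaussian exceeds one standard deviation, but the inclusion goes the other way (exceeding one standard deviation implies $E_{3,t}^\beta$, since the threshold $c_1^\beta\|\phi(A_t^*,X_t)\|_{G_t^{-1}}$ is at most one standard deviation); your conclusion $\ge 0.15$ is nevertheless the right one.
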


\begin{proof}
    Fix $\tau \in [T]$. Let $p_1 = 1-\mathbb{P}(E_{1,t}^\beta, E_{1,t}^\gamma)$. Through some trivial algebra, we have
    \begin{align*}
        \mathcal{R}(T,\gamma^*, \beta^*) &\leq \sum_{t=\tau+1}^T\mathbb{E}[\Delta_{A_t}] + \tau\Delta_{\max} \\
        &\leq \sum_{t=\tau+1}^T\mathbb{E}[\Delta_{A_t}\mathbbm{1}\left\{E_{1,t}^\beta, E_{1,t}^\gamma\right\}] + (\tau+p_1T)\Delta_{\max} \\
        &=\sum_{t=\tau+1}^T\mathbb{E}[\mathbb{E}_t[\Delta_{A_t}]\mathbbm{1}\left\{E_{1,t}^\beta, E_{1,t}^\gamma\right\}] + (\tau+p_1T) \Delta_{\max}
    \end{align*}
By setting $c_1^\beta$ and $c_1^\gamma$ as in Lemma \ref{lem3} with $\delta = 1/T$, we have $p_1 \leq 1-(1-1/T)^2 = 2/T-1/T^2$. Then, by applying Lemma \ref{lem7} to $\mathbb{E}_t[\Delta_{A_t}]\mathbbm{1}\left\{E_{1,t}^\beta, E_{1,t}^\gamma\right\}$, we have
\begin{align*}
     \mathcal{R}(T,\gamma^*, \beta^*) &\leq q(c_1^\beta+c_2^\beta + c_1^\gamma+c_2^\gamma)\left(1+\frac{2}{p_3^\beta p_3^\gamma - p_2^\beta-p_2^\gamma + p_2^\beta p_2^\gamma}\right)\times \mathbb{E}_t\left[\sum_{\tau+1}^T\|\phi(A_t,X_t)\|_{G_t^{-1}}\right]\\
     &+(\tau + (p_1 + p_2^\beta + p_2^\gamma - p_2^\beta p_2^\gamma)T)\Delta_{\max}
\end{align*}
By setting $\alpha^\beta$, $\alpha^\gamma$, $c_2^\beta$, and $c_2^\gamma$ as in Lemma \ref{lem8}, we have $p_2^\beta \leq 1/T$, $p_2^\gamma \leq 1/T$, $p_3^\beta \geq 0.15$, and $p_3^\gamma \geq 0.15$. Finally, to bound $\sum_{\tau+1}^T\|\phi(A_t,X_t)\|_{G_t^{-1}}$, we use Lemma 2 in \citet{li2017provably}. As a result, we have
\begin{align*}
    \mathcal{R}(T,\gamma^*, \beta^*) &\leq q(c_1^\beta+c_2^\beta + c_1^\gamma + c_2^\gamma) \left(1+\frac{2}{0.0225 - 2/T + 1/T^2}\right) \times \sqrt{2dTlog(T/d)}\\
    &+ (\tau + 4 - 2/T)\Delta_{\max}.
\end{align*}
    
\end{proof}

The derivation of the frequentist regret for TS-ZINB is similar to that of TS-ZIP, except that we substitute Conditions 2-3 with Conditions 4-5, respectively. Hence, we omit the proof.

\section{Technical Lemmas}
\subsection{Proof of Lemma \ref{lem1}}
\begin{proof}
 Let $\epsilon_t = (Y_t-\exp(\phi(A_t,X_t)^{\top}\beta^*))$ denote the noise at time $t$ corresponding to an intervention $A_t$.  We assume that $\epsilon_t$ is conditionally sub-Gaussian in the sense that there exists some $\sigma > 0$ such that for any $\gamma \geq 0 $, $t\geq 1$, 
\begin{eqnarray*}
    \mathbb{E}\left[\exp \left(\gamma \epsilon_t\right) \mid \mathcal{F}_{t}\right] \leq \exp \left(\frac{\gamma^2 \sigma^2}{2}\right) \quad \text { a.s. }
\end{eqnarray*}
It can be easily verified that $\epsilon_t$ is sub-Gaussian. Under Assumption 2 in the main manuscript, $\epsilon_t \in [e_t-Y_{\max}, e_t+Y_{\max}]$ holds almost surely for some $\mathcal{F}_t-$measurable random variable $e_t$. By applying Hoeffding's lemma, we obtain for all $\gamma \in \mathbb{R}$,
\begin{eqnarray*}
    \mathbb{E}\left[\exp \left\{\gamma \epsilon_t\right\} \mid \mathcal{F}_t\right] \leq \exp \left\{\gamma \mathbb{E}\left[\epsilon_t \mid \mathcal{F}_t\right]\right\} \exp \left\{\frac{4 Y_{\max }^2 \gamma^2}{8}\right\}=\exp \left\{\frac{\gamma^2 Y_{\max }^2}{2}\right\}.
\end{eqnarray*}
Therefore, $\epsilon_t$ satisfies the sub-Gaussian conditions with $\sigma = Y_{\max}$. In particular, this holds if $|\epsilon_t| \leq Y_{\max}$ almost surely.

Further, let $Z_t = \sum_{i=1}^{t-1}(Y_i - \exp(\phi(A_i,X_i)^{\top}\beta^*))\phi(A_i,X_i) = \sum_{i=1}^{t-1}\epsilon_i \phi(A_i,X_i)$. 
 Notice that
    \begin{eqnarray*}
        S_t(\beta^*) = \sum_{i=1}^{t-1} \frac{r\epsilon_i}{\exp(\phi(A_i,X_i)^{\top}\beta) + r}\phi(A_i,X_i),
    \end{eqnarray*}
    and $S_t(\hat{\beta}_t)=0$ as $\hat{\beta}_t$ is the minimizer of the log-likelihood function. 
    For any $\beta_1, \beta_2 \in \mathbb{R}^d$, we have that 
    \begin{eqnarray*}
S_t\left(\beta_1\right) - S_t\left(\beta_2\right) & =& \sum_{i=1}^{t-1}\left[\frac{r\phi(A_i,X_i)\left(Y_i - \exp (\phi(A_i,X_i)^{\top} \beta_1)\right)}{r+ \exp (\phi(A_i,X_i)^{\top} \beta_1)}-\frac{r\phi(A_i,X_i)\left(Y_i - \exp (\phi(A_i,X_i)^{\top} \beta_2)\right)}{r+ \exp (\phi(A_i,X_i)^{\top} \beta_2)}\right] \\
& =&\sum_{i=1}^{t-1}\left[\nabla \frac{r\left(Y_i - \exp (\phi(A_i,X_i)^{\top} \bar{\beta})\right)}{r+\exp (\phi(A_i,X_i)^{\top} \bar{\beta})}\right] \phi(A_i,X_i)\left(\beta_1-\beta_2\right) \\
& =&\left[-\sum_{i=1}^{t-1} \frac{r\left(r+ Y_i\right) \exp (\phi(A_i,X_i)^{\top} \bar{\beta}) }{\left(r+ \exp (\phi(A_i,X_i)^{\top} \bar{\beta})\right)^2}\phi(A_i,X_i) \phi(A_i,X_i)^{\top}\right]\left(\beta_1-\beta_2\right)\\
& \coloneqq & H_t \left(\beta_2-\beta_1\right). 
    \end{eqnarray*}
where $\bar{\beta} = c\beta_1+(1-c)\beta_2$ for $c\in [0,1]$ and $H_t$ is the Hessian matrix of the negative NB log-likelihood and is convex. The second equality holds because of the mean value theorem.

Now let us consider $\hat{\beta}_t$ and $\beta^*$. Using the preceding equation, we obtain
\begin{eqnarray*}
    S_t(\beta^*)-S_t(\hat{\beta}_t) =S_t(\beta^*)= H_t(\hat{\beta}_t-\beta^*).
\end{eqnarray*}
Under Condition 1, we have $\kappa_{\min} G_t \preceq H_t$. Fixing intervention $a$, we have
\begin{eqnarray*}
    \left|\phi(a,X_t)^{\top}\hat{\beta}_t -\phi(a,X_t)^{\top}\beta^* \right| &\leq& \|\hat{\beta}_t-\beta^*\|_{G_t} \|\phi(a,X_t)\|_{G_t^{-1}} = \sqrt{(\hat{\beta}_t-\beta^*)^{\top}G_t(\hat{\beta}_t-\beta^*)} \|\phi(a,X_t)\|_{G_t^{-1}}\\
    &=& \sqrt{S_t(\beta^*)^{\top}H_t^{-1}G_t H_t^{-1} S_t(\beta^*)} \|\phi(a,X_t)\|_{G_t^{-1}}\\
    &\leq& \kappa_{\min}^{-1}\|S_t(\beta^*)\|_{G_t^{-1}} \|\phi(a,X_t)\|_{G_t^{-1}},
\end{eqnarray*}
Here, the first inequality follows from the Cauchy-Schwarz inequality and others from the above argument. 

Using the definition of $S_t(\beta^*)$ and the fact that $\frac{r}{r+\exp(X_{i,A_i}^{\top}\beta^*)} < 1$, we can show that $\|S_t(\beta^*)\|_{G_t^{-1}} \leq \|Z_t\|_{G_t^{-1}}$. To bound $\|Z_t\|_{G_t^{-1}}$, we can use Theorem 1 in \citet{abbasi2011improved}, which states that if the noise is sub-Gaussian with parameter $\sigma$ (in our case, $\sigma = Y_{\max}$),  then with probability at least $1-\delta$, we have
\begin{eqnarray*}
    \|Z_t\|_{G_t^{-1}}^2 \leq 2\sigma^2 \log(\det(G_t)^{1/2}\det(G_{\tau}^{-1/2})/\delta).
\end{eqnarray*}

By Lemma 11 in \citet{abbasi2011improved} and Assumption 1 from the main manuscript, we have $\log \det(G_t) \leq d\log(T/d)$. By the choice of $\tau$, $\det(G_{\tau})^{-1} \leq 1$. It follows that
\begin{eqnarray*}
    \|S_t(\beta^*)\|_{G_t^{-1}}^2 \leq \|Z_t\|_{G_t^{-1}}^2  \leq Y_{\max}^2(d\log(T/d)-2\log \delta)
\end{eqnarray*}
for any $t \geq \tau$ with probability at least $1-\delta$. In this case, event $E_{1,t}$ is guaranteed to occur when $c_1 = Y_{\max}
\kappa_{\min}^{-1} \sqrt{d \log (T/ d)-2 \log \delta}$.  
\end{proof}

\subsection{Proof of Lemma \ref{lem2}}
\begin{proof}
    According to the mean value theorem, there exists $\bar{u}_t \in [u_t, u_t^{\prime}]$ such that 
    \begin{eqnarray*}
        \frac{\exp(u_t)-\exp(u_t^{\prime})}{u_t-u_t^{\prime}} = \exp(\bar{u}_t).
    \end{eqnarray*}
    As $u_t$ and $u_t^{\prime}$ are bounded by $M$, we have $\exp(\bar{u}_t) < \exp(M)$. Hence, there exists a constant $c$ such that $|\exp(u_t)-\exp(u_t^{\prime})| \leq s|u_t-u_t^{\prime}|$. If $u_t > u_t^{\prime}$, it is easy to see that $\exp(u_t) - \exp(u_t^{\prime})  \leq s(u_{t}-u_{t}^{\prime})$.
\end{proof}
\subsection{Proof of Lemma \ref{lem3}}
\begin{proof}
Recall that $p_{t} = \operatorname{sigmoid}(\phi(A_t,X_t)^\top\gamma)$ and $\mu_{t} = \exp \left(\phi(A_t,X_t)^{\top} \beta\right)$.
Let $\epsilon^{\beta}_t = I(Y_t = 0) \frac{-(1-p_{t})\exp(-\mu_{t})}{p_{t}+(1-p_{t})\exp(-\mu_{t})} \mu_{t} + I(Y_t > 0) (Y_t-\mu_{t})$ and $\epsilon^{\gamma}_t = I(Y_t=0) \frac{1-\exp(-\mu_{t})}{p_{t}+(1-p_{t})\exp(-\mu_{t})}p_{t}(1-p_{t}) - I(Y_t > 0)p_t$. Now we verify that $\epsilon^{\beta}_t$ and $\epsilon^{\gamma}_t$ are sub-Gaussian. Under Assumption 2, $\epsilon^{\beta}_t \in [e_t-Y_{\max}, e_t+Y_{\max}]$ holds almost surely for some $\mathcal{F}_t$-measurable random variable $e_t$. By applying Hoeffding's Lemma, we obtain for all $q \in \mathbb{R}$,
\begin{eqnarray*}
    \mathbb{E}[\exp\{q\epsilon^{\beta}_t\}|\mathcal{F}_t] \leq \exp\{q\mathbb{E}[\epsilon^{\beta}_t|\mathcal{F}_t]\} \exp\left\{\frac{Y_{\max}^2q^2}{2}\right\} = \exp\left\{\frac{Y_{\max}^2q^2}{2}\right\}.
\end{eqnarray*}
Hence, $\epsilon^{\beta}_t$ satisfies the sub-Gaussian conditions with $\sigma = Y_{\max}$. Similarly, we have $\epsilon^{\gamma}_t \in [e_t-1, e_t+1]$ holds almost surely for some $\mathcal{F}_t$-measurable random variable $e_t$. By applying Hoeffding's Lemma, we obtain for all $q \in \mathbb{R}$,
\begin{eqnarray*}
    \mathbb{E}[\exp\{q\epsilon^{\gamma}_t\}|\mathcal{F}_t] \leq \exp\{q\mathbb{E}[\epsilon^{\gamma}_t|\mathcal{F}_t]\} \exp\left\{\frac{q^2}{2}\right\} = \exp\left\{\frac{q^2}{2}\right\}.
\end{eqnarray*}
Hence, $\epsilon^{\gamma}_t$ satisfies the sub-Gaussian conditions with $\sigma = 1$.

Notice that $S_t(\beta) = \sum_{i=1}^{t-1}\epsilon^{\beta}_i \phi(A_i,X_i)$ and $S_t(\gamma) = \sum_{i=1}^{t-1}\epsilon^{\gamma}_i\phi(A_i,X_i)$. For any $\beta_1, \beta_2 \in \mathbb{R}^d$, we have
\begin{eqnarray*}
    S_t(\beta_1) - S_t(\beta_2) &=& \sum_{i=1}^{t-1}I(Y_i = 0) \frac{-(1-p_{i})\exp(-\mu_{i,1})}{p_{i}+(1-p_{i})\exp(-\mu_{i,1})} \mu_{i,1}\phi(A_i,X_i) +I(Y_i > 0) (Y_i-\mu_{i,1}) \phi(A_i,X_i)\\
    &-& I(Y_i = 0) \frac{-(1-p_{i})\exp(-\mu_{i,2})}{p_{i}+(1-p_{i})\exp(-\mu_{i,2})} \mu_{i,2}\phi(A_i,X_i) - I(Y_i > 0) (Y_i-\mu_{i,2}) \phi(A_i,X_i)\\
    &=&\sum_{i=1}^{t-1} \left[\nabla_{\beta = \bar{\beta}} I(Y_i = 0) \frac{-(1-p_{i})\exp(-\mu_{i})}{p_{i}+(1-p_{i})\exp(-\mu_{i})} \mu_{i} + I(Y_i > 0) (Y_i-\mu_{i})  \right]\phi(A_i,X_i)(\beta_1-\beta_2)\\
    &=& \sum_{i=1}^{t-1}  \left[\left(I(Y_i=0)\frac{-(1-p_i)\mu_i\exp(-\mu_i)}{p_i+(1-p_i)\exp(-\mu_i)}+I(Y_i=0)\frac{p_i(1-p_i)\mu_i^2\exp(-\mu_i)}{(p_i+(1-p_i)\exp(-\mu_i))^2} \right. \right.\\
    &-& \left. \left.I(Y_i>0)\mu_i\right)\phi(A_i,X_i)\phi(A_i,X_i)^{\top}\right](\beta_1-\beta_2)\\
    &=&H_t(\beta)(\beta_2-\beta_1)
\end{eqnarray*}
where $H_t(\beta)$ is the left upper block $d\times d$ entry of the Hessian matrix of the negative ZIP log-likelihood.

Then, we have
\begin{eqnarray*}
    S_t(\beta^*) - S_t(\hat{\beta}_t) = S_t(\beta^*) = H_t(\beta)\cdot(\hat{\beta}_t - \beta^*).
\end{eqnarray*}
Under Condition 2, we have $\kappa_{1,\min}G_t \preceq H_t$. Fixing intervention $a$, we have
\begin{eqnarray*}
    \left|\phi(a,X_t)^{\top} \hat{\beta}_t-\phi(a,X_t)^{\top} \beta^*\right| &\leq& \left\|\hat{\beta}_t-\beta^*\right\|_{G_t}\left\|\phi(a,X_t)\right\|_{G_t^{ -1}}=\sqrt{\left(\hat{\beta}_t-\beta^*\right)^{\top} G_t\left(\hat{\beta}_t-\beta^*\right)}\left\|\phi(a,X_t)\right\|_{G_t^{-1}}\\
    &=& \sqrt{S_t\left(\beta^*\right)^{\top} H_t^{-1}(\beta) G_t H_t^{-1}(\beta) S_t\left(\beta^*\right)}\left\|\phi(a,X_t)\right\|_{G_t^{-1}}\\
    &\leq& \kappa_{1,\min}^{-1} \left\|S_t\left(\beta^*\right)\right\|_{G_t^{-1}}\left\|\phi(a,X_t)\right\|_{G_t^{-1}}
\end{eqnarray*}
where the first inequality follows from the Cauchy-Schwarz inequality and others from the
above argument.

For any $\eta_1,\eta_2 \in \mathbb{R}^d$, we have
\begin{eqnarray*}
    S_t(\eta_1) - S_t(\eta_2) &=& \sum_{i=1}^{t-1}I(Y_i=0) \frac{1-\exp(-\mu_{i})}{p_{i,1}+(1-p_{i,1})\exp(-\mu_{i})}p_{i,1}(1-p_{i,1})\phi(A_i,X_i) - I(Y_i > 0) p_{i,1}\phi(A_i,X_i) \\
    &-& I(Y_i=0)\frac{1-\exp(-\mu_{i})}{p_{i,2}+(1-p_{i,2})\exp(-\mu_{i})}p_{i,2}(1-p_{i,2})\phi(A_i,X_i) + I(Y_i > 0) p_{i,2}\phi(A_i,X_i)\\
    &=& \sum_{i=1}^{t-1} \left[\nabla_{\eta = \bar{\eta}}I(Y_i=0) \frac{1-\exp(-\mu_{i})}{p_{i}+(1-p_{i})\exp(-\mu_{i})}p_{i}(1-p_{i}) - I(Y_i > 0) p_{i}\right]\phi(A_i,X_i)(\eta_1-\eta_2)\\
    &=&\sum_{i=1}^{t-1} \left[\left(I(Y_i=0)\frac{(1-\exp(-\mu_i)(1-2p_i)p_i(1-p_i)}{p_i+(1-p_i)exp(-\mu_i)} - I(Y_i=0)\frac{(1-\exp(-\mu_i)^2p_i^2(1-p_i)^2)}{(p_i+(1-p_i)\exp(-\mu_i)^2)}\right.\right.\\
    &-&\left.\left.I(Y_i=0)p_i(1-p_i)\right) \phi(A_i,X_i)\phi(A_i,X_i)^\top\right]\\
    &=&H_t(\gamma)(\beta_2-\beta_1)
\end{eqnarray*}
Similarly, we have $ S_t(\gamma^*) - S_t(\hat{\gamma}_t) = S_t(\gamma^*) = H_t(\gamma)\cdot(\hat{\gamma}_t - \gamma^*)$. Under Condition 3, we have
$\left|\phi(a,X_t)^{\top} \hat{\gamma}_t-\phi(a,X_t)^{\top} \beta^*\right| \leq \kappa_{2,\min}^{-1}\left\|S_t \left(\gamma^*\right)\right\|_{G_t^{-1}} \left\|\phi(a,X_t)\right\|_{G_t^{-1}}$. 

Following the same argument in the proof for TS-NB, we have 
\begin{eqnarray*}
    \left\|S_t\left(\beta^*\right)\right\|_{G_t^{-1}} &\leq& Y_{\max }^2(d \log (T / d)-2 \log \delta)\\
    \left\|S_t \left(\gamma^*\right)\right\|_{G_t^{-1}} &\leq& (d \log (T / d)-2 \log \delta)
\end{eqnarray*}
for any $t \geq \tau$ with probability a least $1-\delta$. In this case, event $E_{1,t}^\beta$ is guaranteed to occur when $c_1^\beta = Y_{\max } \kappa_{1,\min}^{-1} \sqrt{d \log (T / d)-2 \log \delta}$, and  event $E_{1,t}^\gamma$ is guaranteed to occur when $c_1^\gamma = \kappa_{2,\min}^{-1} \sqrt{d \log (T / d)-2 \log \delta}$.
\end{proof}

\subsection{Proof of Lemma \ref{lem4}}
We prove Lemma \ref{lem4} in a similar way to Lemma \ref{lem3}. 
\begin{proof}
Let $\epsilon^{\beta}_t = I(Y_t = 0) \frac{(1-p_t)r\frac{r}{\mu_t+r}^{r-1}\frac{-r}{(\mu_t+r)^2}}{p_t+(1-p_t)\frac{r}{\mu_t+r}^r}\mu_t + I(Y_t>0)\frac{r}{\mu_t+r}(Y_t-\mu_t)$ and $\epsilon^{\gamma}_t = I(Y_t=0)\frac{1-\frac{r}{\mu_t+r}^r}{p_t+(1-p_t)\frac{r}{r+\mu_t}^r}p_t(1-p_t) - I(Y_t > 0)p_t$. Similarly, we can verify that $\epsilon^{\beta}_t$ and $\epsilon^{\gamma}_t$ are sub-Gaussian. Specifically, $\epsilon^{\beta}_t$ satisfies the sub-Gaussian conditions with $\sigma = Y_{\max}$, and $\epsilon^{\gamma}_t$ satisfies the sub-Gaussian conditions with $\sigma = 1$.

Hence, $S_t(\beta) = \sum_{i=1}^{t-1}\epsilon^{\beta}_i \phi(A_i,X_i)$ and $S_t(\gamma) = \sum_{i=1}^{t-1}\epsilon^{\gamma}_i\phi(A_i,X_i)$. For any $\beta_1, \beta_2 \in \mathbb{R}^d$, we have
\begin{eqnarray*}
    S_t(\beta_1) - S_t(\beta_2) = H_t(\beta)(\beta_2-\beta_1)
\end{eqnarray*}
where $H_t(\beta)$ is the left upper block $d\times d$ entry of the Hessian matrix of the negative ZINB log-likelihood.

Then, we have
\begin{eqnarray*}
    S_t(\beta^*) = H_t(\beta)\cdot(\hat{\beta}_t - \beta^*).
\end{eqnarray*}
Under Condition 4, we have $\kappa_{1,\min} G_t \preceq H_t$. Fixing intervention $a$, we have
\begin{eqnarray*}
    \left|\phi(a,X_t)^{\top} \hat{\beta}_t-\phi(a,X_t)^{\top} \beta^*\right| &\leq& \left\|\hat{\beta}_t-\beta^*\right\|_{G_t}\left\|\phi(a,X_t)\right\|_{G_t^{ -1}}=\sqrt{\left(\hat{\beta}_t-\beta^*\right)^{\top} G_t\left(\hat{\beta}_t-\beta^*\right)}\left\|\phi(a,X_t)\right\|_{G_t^{-1}}\\
    &=& \sqrt{S_t\left(\beta^*\right)^{\top} H_t^{-1}(\beta) G_t H_t^{-1}(\beta) S_t\left(\beta^*\right)}\left\|\phi(a,X_t)\right\|_{G_t^{-1}}\\
    &\leq& \kappa_{1,\min}^{-1} \left\|S_t\left(\beta^*\right)\right\|_{G_t^{-1}}\left\|\phi(a,X_t)\right\|_{G_t^{-1}}
\end{eqnarray*}
where the first inequality follows from the Cauchy-Schwarz inequality and others from the
above argument.

For any $\eta_1,\eta_2 \in \mathbb{R}^d$, we have
\begin{eqnarray*}
    S_t(\eta_1) - S_t(\eta_2) 
    =H_t(\gamma)(\beta_2-\beta_1)
\end{eqnarray*}
Similarly, we have $ S_t(\gamma^*) = H_t(\gamma)\cdot(\hat{\gamma}_t - \gamma^*)$. Under Condition 5, we have
$\left|\phi(a,X_t)^{\top} \hat{\gamma}_t-\phi(a,X_t)^{\top} \gamma^*\right| \leq \kappa_{2,\min}^{-1}\left\|S_t \left(\gamma^*\right)\right\|_{G_t^{-1}} \left\|\phi(a,X_t)\right\|_{G_t^{-1}}$. 

Following the same argument in the proof for TS-NB, we have 
\begin{eqnarray*}
    \left\|S_t\left(\beta^*\right)\right\|_{G_t^{-1}} &\leq& Y_{\max }^2(d \log (T / d)-2 \log \delta)\\
    \left\|S_t \left(\gamma^*\right)\right\|_{G_t^{-1}} &\leq& (d \log (T / d)-2 \log \delta)
\end{eqnarray*}
for any $t \geq \tau$ with probability a least $1-\delta$. In this case, event $E_{1,t}^\beta$ is guaranteed to occur when $c_1^\beta = Y_{\max } \kappa_{1,\min}^{-1} \sqrt{d \log (T / d)-2 \log \delta}$, and  event $E_{1,t}^\gamma$ is guaranteed to occur when $c_1^\gamma = \kappa_{2,\min}^{-1} \sqrt{d \log (T / d)-2 \log \delta}$.
\end{proof}

\subsection{Proof of Lemma \ref{lem5}}
\begin{proof}
    Let $\tilde{\Delta}_a=\phi(A_t^*, X_t)^{\top} \beta^*- \phi(a, X_t)^{\top} \beta^*$ and $c=c_1+c_2$. Let 
    \begin{eqnarray*}
        \bar{S}_t=\left\{a \in \mathcal{A}: c\left\|\phi(a, X_t)\right\|_{G_t^{-1}} \geq \tilde{\Delta}_a\right\}
    \end{eqnarray*}
    be the set of undersampled arms at time point $t$. Note that $A_t^* \in \bar{S}_t$ by definition. We define the set of sufficiently sampled arms as $S_t=\mathcal{A} \backslash \bar{S}_t$. Let $J_t=\arg \min _{a \in \bar{S}_t}\left\|\phi(a, X_t)\right\|_{G_t^{-1}}$ be the least uncertain undersampled arm at time point $t$. 

    In what follows, we assume that event $E_{1, t}$ occurs. At time point $t$ on event $E_{2, t}$,
    \begin{eqnarray*}
    \Delta_{A_t} & \leq & s \tilde{\Delta}_{A_t}=s\left(\tilde{\Delta}_{J_t}+\phi(J_t, X_t)^{\top} \beta^*-\phi(A_t, X_t)^{\top} \beta^*\right)\\
    &\leq& s\left(\tilde{\Delta}_{J_t}+\phi(J_t, X_t)^{\top} \tilde{\beta}_t-\phi(A_t, X_t)^{\top} \tilde{\beta}_t+c\left(\left\|\phi(A_t, X_t)\right\|_{G_t^{-1}}+\left\|\phi(J_t, X_t)\right\|_{G_t^{-1}}\right)\right) \\ & \leq& s c\left(\left\|\phi(A_t, X_t)\right\|_{G_t^{-1}}+2\left\|\phi(J_t, X_t)\right\|_{G_t^{-1}}\right),
    \end{eqnarray*}
    where the first inequality follows from the mean value theorem, the second follows from the definition of events $E_{1, t}$ and $E_{2, t}$, and the last follows from the definitions of $A_t$ and $J_t$.

    We take the expectation on both sides and get
    \begin{eqnarray*}
    \mathbb{E}_t\left[\Delta_{A_t}\right]&=&\mathbb{E}_t\left[\Delta_{A_t} \mathds{1}\left\{E_{2, t}\right\}\right]+\mathbb{E}_t\left[\Delta_{A_t} \mathds{1}\left\{\bar{E}_{2, t}\right\}\right]\\
        &\leq& s c \mathbb{E}_t\left[\left\|\phi(A_t, X_t)\right\|_{G_t^{-1}}+2\left\|\phi(J_t, X_t)\right\|_{G_t^{-1}}\right]+\Delta_{\max } \mathbb{P}_t\left(\bar{E}_{2, t}\right).
    \end{eqnarray*}
    Then, we can replace $\mathbb{E}_t\left[\left\|\phi(J_t, X_t)\right\|_{G_t^{-1}}\right]$ with $\mathbb{E}_t\left[\left\|\phi(A_t, X_t)\right\|_{G_t^{-1}}\right]$ by the observation below:
    \begin{eqnarray*}
        \mathbb{E}_t\left[\left\|\phi(A_t, X_t)\right\|_{G_t^{-1}}\right] \geq \mathbb{E}_t\left[\left\|\phi(A_t, X_t)\right\|_{G_t^{-1}} \mid A_t \in \bar{S}_t\right] \mathbb{P}_t\left(A_t \in \bar{S}_t\right) \geq\left\|\phi(J_t, X_t)\right\|_{G_t^{-1}} \mathbb{P}_t\left(A_t \in \bar{S}_t\right),
    \end{eqnarray*}
    where the last inequality follows from the definition of $J_t$. Further, on event $E_{1, t}$, we have
    \begin{eqnarray*}
\mathbb{P}_t\left(A_t \in \bar{S}_t\right) & \geq& \mathbb{P}_t\left(\exists a \in \bar{S}_t: \phi(a, X_t)^{\top} \tilde{\beta}_t>\max _{b \in S_t} \phi(b, X_t)^{\top} \tilde{\beta}_t\right) \geq \mathbb{P}_t\left(\phi(A_t^*, X_t)^{\top} \tilde{\beta}_t > \max_{b \in S_t} \phi(b, X_t)^{\top} \tilde{\beta}_t\right) \\
& \geq& \mathbb{P}_t\left(\phi(A_t^*, X_t)^{\top} \tilde{\beta}_t > \max_{b \in S_t} \phi(b, X_t)^{\top} \tilde{\beta}_t, E_{2, t} \text { occurs }\right)\\
& \geq& \mathbb{P}_t\left(\phi(A_t^*, X_t)^{\top} \tilde{\beta}_t > \phi(A_t^*, X_t)^{\top} \beta^*, E_{2, t} \text { occurs }\right) \\
& \geq& \mathbb{P}_t\left(\phi(A_t^*, X_t)^{\top} \tilde{\beta}_t>\phi(A_t^*, X_t)^{\top} \beta^*\right)-\mathbb{P}_t\left(\bar{E}_{2, t}\right)\\
& \geq& \mathbb{P}_t\left(\phi(A_t^*, X_t)^{\top} \tilde{\beta}_t-\phi(A_t^*, X_t)^{\top} \hat{\beta}_t>c_1\left\|\phi(A_t^*, X_t)\right\|_{G_t^{-1}}\right)-\mathbb{P}_t\left(\bar{E}_{2, t}\right) .
    \end{eqnarray*}
The fourth inequality holds because on event $E_{1, t} \cap E_{2, t}$,
$$
\phi(b, X_t)^{\top} \tilde{\beta}_t \leq \phi(b, X_t)^{\top} \beta^*+c\left\|\phi(b, X_t)\right\|_{G_t^{-1}} < \phi(b, X_t)^{\top} \beta^*+\tilde{\Delta}_b = \phi(A_t^*, X_t)^{\top} \beta^*
$$
holds for any $b \in S_t$. The last inequality holds because $\phi(A_t^*, X_t)^{\top} \beta^* \leq \phi(A_t^*, X_t)^{\top} \hat{\beta}_t+c_1\left\|\phi(A_t^*, X_t)\right\|_{G_t^{-1}}$ holds on event $E_{1, t}$. Finally, we use the definitions of $p_2$ and $p_3$ to complete the proof.
\end{proof}

\subsection{Proof of Lemma \ref{lem6}}
\begin{proof}
    By the design of TS-NB,
    \begin{eqnarray*}
        \phi(a,X_t)^{\top} \tilde{\beta}_t-\phi(a,X_t)^{\top} \hat{\beta}_t \sim \mathcal{N}\left(0, \alpha^2\|\phi(a,X_t)\|_{\mathcal{I}_t^{-1}(\hat{\beta}_t)}^2\right)
    \end{eqnarray*}
    for any $a \in \mathcal{A}$ and any $t\in [T]$, where matrix $\mathcal{I}_t(\hat{\beta}_t) = \mathbb{E}(H_t(\hat{\beta}_t))$ is the Fisher information at time point $t$. 
    
    Let $U = \phi(a,X_t)^{\top} \tilde{\beta}_t-\phi(a,X_t)^{\top} \hat{\beta}_t$. Since $U \sim \mathcal{N}\left(0, \alpha^2\|\phi(a,X_t)\|_{\mathcal{I}_t^{-1}(\hat{\beta}_t)}^2\right)$ is a normal random variable, we have
    \begin{eqnarray*}
&  \mathbb{P}_t\left(U \geq \alpha\|\phi(a,X_t)\|_{\mathcal{I}_t^{-1}(\hat{\beta}_t)}\right)& \geq 0.15, \\
& \mathbb{P}_t\left(U \geq \pi\|\phi(a,X_t)\|_{\mathcal{I}_t^{-1}(\hat{\beta}_t)}\right)& \leq \exp \left[-\frac{\pi^2}{2 \alpha^2}\right],
    \end{eqnarray*}
    for any $c > 0$.

    According to Condition 1, $\mathcal{I}_t(\hat{\beta}_t) \preceq \kappa_{\max}G_t$. Thus, we have
    \begin{eqnarray*}
0.15 & \leq& \mathbb{P}_t\left(U \geq \alpha\|\phi(a,X_t)\|_{\mathcal{I}_t^{-1}(\hat{\beta}_t)}\right) \\
& \leq& \mathbb{P}_t\left(U \geq \alpha \sqrt{\kappa_{\max }^{-1}}\|\phi(a,X_t)\|_{G_t^{-1}}\right).
    \end{eqnarray*}
    For $\alpha = c_1\sqrt{\kappa_{\max}}$ and $a = A_t^*$, we get that event $E_{3,t}$ occurs with probability at least $0.15$.

    According to Condition 1, $\mathcal{I}_t(\hat{\beta}_t) \succeq \kappa_{\min} G_t$. Therefore,
    \begin{eqnarray*}
\exp \left[-\frac{\pi^2}{2 \alpha^2}\right] & \geq& \mathbb{P}_t\left(U \geq \pi\|\phi(a,X_t)\|_{\mathcal{I}_t^{-1}(\hat{\beta}_t)}\right) \\
& \geq& \mathbb{P}_t\left(U \geq \pi \sqrt{\kappa_{\min }^{-1}}\|\phi(a,X_t)\|_{G_t^{-1}}\right) .
    \end{eqnarray*}
    We set $\pi = \alpha\sqrt{2\log(KT)}$ and $a = A_t^*$. Hence, we have
    \begin{eqnarray*}
        \mathbb{P}_t\left(U \geq c_1\sqrt{2\kappa_{\min}^{-1}\kappa_{\max}\log(KT)}\|\phi(a,X_t)\|_{G_t^{-1}}\right) \leq \frac{1}{KT}.
    \end{eqnarray*}
        By the union bound over all K arms, we get that event $\bar{E}_{2, t}$ occurs with probability at most $1/T$ when $c_2 = c_1\sqrt{2\kappa_{\min}^{-1}\kappa_{\max}\log(KT)}$. 
\end{proof}

\subsection{Proof of Lemma \ref{lem7}}
\begin{proof}
Recall that
$\Delta_{a}=R_t(A_t^*, \gamma^*, \beta^*) - R_t(a,\gamma^*,\beta^*)=[1-\operatorname{sigmoid}(\phi(A_t^*,X_t)^\top\gamma^*)]\\\exp\{\phi(A_t^*,X_t)^\top\beta^*\} - [1-\operatorname{sigmoid}(\phi(a,X_t)^\top\gamma^*)]\exp\{\phi(a,X_t)^\top\beta^*\}$.

Let $\tilde{\Delta}_a^\beta = \phi(A_t^*,X_t)^\top\beta^* - \phi(a,X_t)^\top\beta^*$ and $\tilde{\Delta}_a^\gamma =  \phi(a,X_t)^\top\gamma^* - \phi(A_t^*,X_t)^\top\gamma^*$. Denote $c^\beta = c_1^\beta+c_2^\beta$ and $c^\gamma = c_1^\gamma + c_2^\gamma$. Let 
\begin{align*}
    \bar{S}_t = \left\{a\in\mathcal{A}:c^\beta\|\phi(a,X_t)\|_{G_t^{-1}}\geq \tilde{\Delta}_a^\beta \operatorname{and} c^\gamma\|\phi(a,X_t)\|_{G_t^{-1}}\geq \tilde{\Delta}_a^\gamma\right\}
\end{align*}
be the set of undersampled actions at time point $t$. Note that $A_t^* \in \bar{S}_t$ by definition. We define the set of sufficiently sampled arms as $S_t=\mathcal{A} \backslash \bar{S}_t$. Let $J_t = \arg \min_{a \in \bar{S}_t} \|\phi(a,X_t)\|_{G_t^{-1}}$ be the least uncertain undersampled action at time point $t$. 

In what follows, we assume that event $E_{1,t}^\beta$ and $E_{1,t}^\gamma$ occurs. Let $f(a,b) = \frac{1}{1+\exp(a)}\exp(b)$.
At time point $t$ on event $E_{2,t}^\beta$ and $E_{2,t}^\gamma$,
\begin{align*}
    \Delta_{A_t} &=  \nabla f(\bar{\phi}^\top\beta^*,\bar{\phi}^\top\gamma^*) \left(\begin{array}{cc}
    \tilde{\Delta}_{A_t}^\beta  \\
     - \tilde{\Delta}_{A_t}^\gamma
    \end{array}
    \right) =  \nabla f(\bar{\phi}^\top\beta^*,\bar{\phi}^\top\gamma^*) \left(\begin{array}{cc}
    \tilde{\Delta}_{J_t}^\beta + \phi(J_t,X_t)^\top\beta^* -\phi(A_t,X_t)^\top\beta^* \\
      \tilde{\Delta}_{J_t}^\gamma + \phi(J_t,X_t)^\top\gamma^* -\phi(A_t,X_t)^\top\gamma^*
    \end{array}
    \right)\\
    &\leq \nabla f(\bar{\phi}^\top\beta^*,\bar{\phi}^\top\gamma^*) 
    \left(\begin{array}{cc}
    \tilde{\Delta}_{J_t}^\beta + \phi(J_t,X_t)^\top\tilde{\beta}_t -\phi(A_t,X_t)^\top \tilde{\beta}_t + c^\beta(\|\phi(A_t,X_t)\|_{G_t^{-1}} + \|\phi(J_t,X_t)\|_{G_t^{-1}}) \\
      \tilde{\Delta}_{J_t}^\gamma + \phi(J_t,X_t)^\top\tilde{\gamma}_t -\phi(A_t,X_t)^\top\tilde{\gamma}_t + c^\gamma(\|\phi(A_t,X_t)\|_{G_t^{-1}} + \|\phi(J_t,X_t)\|_{G_t^{-1}}) 
    \end{array}
    \right)\\
    &\leq  \nabla f(\bar{\phi}^\top\beta^*,\bar{\phi}^\top\gamma^*) 
    \left(\begin{array}{cc}
    c^\beta(\|\phi(A_t,X_t)\|_{G_t^{-1}} + 2\|\phi(J_t,X_t)\|_{G_t^{-1}}) \\
    c^\gamma(\|\phi(A_t,X_t)\|_{G_t^{-1}} + 2\|\phi(J_t,X_t)\|_{G_t^{-1}}) 
    \end{array}
    \right)\\
    &\leq \sup_{\|o_1\|\leq1,\|o_2\|\leq1} \|\nabla f(o_1,o_2)\| \left\|\begin{array}{cc}
    c^\beta(\|\phi(A_t,X_t)\|_{G_t^{-1}} + 2\|\phi(J_t,X_t)\|_{G_t^{-1}}) \\
    c^\gamma(\|\phi(A_t,X_t)\|_{G_t^{-1}} + 2\|\phi(J_t,X_t)\|_{G_t^{-1}}) 
    \end{array}\right\|\\
    &\leq q \left[ c^\beta(\|\phi(A_t,X_t)\|_{G_t^{-1}} + 2\|\phi(J_t,X_t)\|_{G_t^{-1}}) + c^\gamma(\|\phi(A_t,X_t)\|_{G_t^{-1}} + 2\|\phi(J_t,X_t)\|_{G_t^{-1}}) \right]
\end{align*}
where $q = \sqrt{2}e$. Here, the first equality follows from the mean value theorem, the first inequality follows from the definition of events $E_{1,t}^\beta$, $E_{1,t}^\gamma$, $E_{2,t}^\beta$, and $E_{2,t}^\gamma$, the second inequality follows from the definitions of $A_t$ and $J_t$, and the rest follows from the Cauchy-Schwarz inequality. 

By taking expectations on both sides, we have
\begin{align*}
    \mathbb{E}_t(\Delta_{A_t}) &= \mathbb{E}_t\left[\Delta_{A_t} \mathbbm{1}\left\{E_{2, t}^\beta\cap E_{2, t}^\gamma\right\}\right]+\mathbb{E}_t\left[\Delta_{A_t} \mathbbm{1}\left\{\overline{E_{2, t}^\beta \cap E_{2, t}^\gamma}\right\}\right]\\
    &\leq s \left[ c^\beta(\|\phi(A_t,X_t)\|_{G_t^{-1}} + 2\|\phi(J_t,X_t)\|_{G_t^{-1}}) + c^\gamma(\|\phi(A_t,X_t)\|_{G_t^{-1}} + 2\|\phi(J_t,X_t)\|_{G_t^{-1}}) \right] \\
    &+ \Delta_{\max}(1-\mathbb{P}_t(E_{2, t}^\beta)\mathbb{P}_t(E_{2, t}^\gamma)).
\end{align*}

Further, we can replace $\mathbb{E}_t[\|\phi(J_t,X_t)\|_{G_t^{-1}}]$ with $\mathbb{E}[\|\phi(A_t,X_t)\|_{G_t^{-1}}]$ by the observation below:
\begin{align*} \mathbb{E}_t\left[\left\|\phi(A_t,X_t)\right\|_{G_t^{-1}}\right] \geq \mathbb{E}_t\left[\left\|\phi(A_t,X_t)\right\|_{G_t^{-1}} \mid A_t \in \bar{S}_t\right] \mathbb{P}_t\left(A_t \in \bar{S}_t\right) \geq\left\|\phi(J_t,X_t)\right\|_{G_t^{-1}} \mathbb{P}_t\left(A_t \in \bar{S}_t\right)
\end{align*}
where the last inequality follows from the definition of $J_t$. Further, on event $E_{1,t}^\beta$ and $E_{1,t}^\gamma$, we have
\begin{align*}
    &\mathbb{P}_t\left(A_t \in \bar{S}_t\right)\\
    &\geq \mathbb{P}_t\left(\exists a \in \bar{S}_t: \left[\frac{1}{1+\exp(\phi(a,X_t)^{\top} \tilde{\gamma}_t)}\right]\exp(\phi(a,X_t)^{\top} \tilde{\beta}_t)>\max _{b \in S_t} \left[\frac{1}{1+\exp(\phi(b,X_t)^{\top} \tilde{\gamma}_t)}\right]\exp(\phi(b,X_t)^{\top} \tilde{\beta}_t)\right)\\
    &\geq \mathbb{P}_t\left(\left[\frac{1}{1+\exp(\phi(A_t^*,X_t)^{\top} \tilde{\gamma}_t)}\right]\exp(\phi(A_t^*,X_t)^{\top} \tilde{\beta}_t)>\max _{b \in S_t} \left[\frac{1}{1+\exp(\phi(b,X_t)^{\top} \tilde{\gamma}_t)}\right]\exp(\phi(b,X_t)^{\top} \tilde{\beta}_t)\right)\\
    &\geq \mathbb{P}_t\left(\left[\frac{1}{1+\exp(\phi(A_t^*,X_t)^{\top} \tilde{\gamma}_t)}\right]\exp(\phi(A_t^*,X_t)^{\top} \tilde{\beta}_t)>\max _{b \in S_t} \left[\frac{1}{1+\exp(\phi(b,X_t)^{\top} \tilde{\gamma}_t)}\right]\exp(\phi(b,X_t)^{\top} \tilde{\beta}_t)\right., \\
    & \left.E_{2,t}^\beta \operatorname{and} E_{2,t}^\gamma \operatorname{occurs}\right)\\
    &\geq \mathbb{P}_t\left(\left[\frac{1}{1+\exp(\phi(A_t^*,X_t)^{\top} \tilde{\gamma}_t)}\right]\exp(\phi(A_t^*,X_t)^{\top} \tilde{\beta}_t)> \left[\frac{1}{1+\exp(\phi(A_t^*,X_t)^{\top} \tilde{\gamma}_t)}\right]\exp(\phi(A_t^*,X_t)^{\top} \tilde{\beta}_t)\right., \\
    & \left.E_{2,t}^\beta \operatorname{and} E_{2,t}^\gamma \operatorname{occurs}\right)\\
    & \geq \mathbb{P}_t\left(\left[\frac{1}{1+\exp(\phi(A_t^*,X_t)^{\top} \tilde{\gamma}_t)}\right]\exp(\phi(A_t^*,X_t)^{\top} \tilde{\beta}_t)> \left[\frac{1}{1+\exp(\phi(A_t^*,X_t)^{\top} \gamma_t^*)}\right]\exp(\phi(A_t^*,X_t)^{\top} \beta_t^*)\right)\\
    &- (1-\mathbb{P}_t(E_{2,t}^\beta)\mathbb{P}_t(E_{2,t}^\gamma)) \\
    &\geq \mathbb{P}_t(\phi(A_t^*,X_t)^{\top} \tilde{\beta}_t > \phi(A_t^*,X_t)^{\top} \beta_t^*) \times \mathbb{P}_t(\phi(A_t^*,X_t)^{\top} \tilde{\gamma}_t < \phi(A_t^*,X_t)^{\top} \gamma_t^*) \\
    &-1+ \mathbb{P}_t(E_{2,t}^\beta)\mathbb{P}_t(E_{2,t}^\gamma)\\
    &\geq \mathbb{P}_t(\phi(A_t^*,X_t)^{\top} \tilde{\beta}_t > \phi(A_t^*,X_t)^{\top} \hat{\beta}_t + c_1^\beta\|\phi(A_t^*,X_t)\|_{G_t^{-1}}) \times\\
    & \   \mathbb{P}_t(\phi(A_t^*,X_t)^{\top} \tilde{\gamma}_t < \phi(A_t^*,X_t)^{\top} \hat{\gamma}_t - c_1^\gamma \|\phi(A_t^*,X_t)\|_{G_t^{-1}}) - \mathbb{P}_t(\bar{E}_{2,t}^\beta) - \mathbb{P}_t(\bar{E}_{2,t}^\gamma) + \mathbb{P}_t(\bar{E}_{2,t}^\beta)\mathbb{P}_t(\bar{E}_{2,t}^\gamma)
\end{align*}

The fourth inequality holds because on event $E_{1,t}^\beta$, $E_{1,t}^\gamma$, $E_{2,t}^\beta$, and $E_{2,t}^\gamma$,
\begin{align*}
 &\phi(b,X_t)^{\top} \tilde{\beta}_t \leq \phi(b,X_t)^{\top} \beta_t^* + c^\beta\|\phi(b,X_t)^{\top}\|_{G_t^{-1}} \leq \phi(b,X_t)^{\top} \beta_t^* + \tilde{\Delta}_b^\beta = \phi(A_t^*,X_t)^{\top} \beta_t^*,\\
 &\phi(b,X_t)^{\top} \tilde{\gamma}_t \geq \phi(b,X_t)^{\top} \gamma_t^* - c^\gamma\|\phi(b,X_t)^{\top}\|_{G_t^{-1}} \geq \phi(b,X_t)^{\top} \gamma_t^* - \tilde{\Delta}_b^\gamma = \phi(A_t^*,X_t)^{\top} \gamma_t^*,
\end{align*}
holds for any $b \in S_t$. The last inequality holds because 
\begin{align*}
    &\phi(A_t^*,X_t)^{\top} \beta_t^* \leq \phi(A_t^*,X_t)^{\top} \hat{\beta}_t + c_1^\beta\|\phi(A_t^*,X_t)\|_{G_t^{-1}}\\
    &\phi(A_t^*,X_t)^{\top} \gamma_t^* \geq \phi(A_t^*,X_t)^{\top} \hat{\gamma}_t - c_1^\gamma \|\phi(A_t^*,X_t)\|_{G_t^{-1}}
\end{align*}
hold on event $E_{1,t}^\beta$ and $E_{1,t}^\gamma$. Finally, we use the definitions of $p_2^\beta$, $p_2^\gamma$, $p_3^\beta$, and $p_3^\gamma$ to finish the proof.
\end{proof}

\subsection{Proof of Lemma \ref{lem8}}
\begin{proof}
    By the design of TS-ZIP,
    \begin{align*}
        &\phi(a,X_t)^\top\tilde{\beta}_t - \phi(a,X_t)^\top\hat{\beta}_t \sim \mathcal{N}(0, {\alpha_{\beta}}^2\|\phi(a,X_t)^2\|_{\mathcal{I}_t^{-1}(\hat{\beta}_t)})\\
        &\phi(a,X_t)^\top\tilde{\gamma}_t - \phi(a,X_t)^\top\hat{\gamma}_t \sim \mathcal{N}(0, {\alpha_{\gamma}}^2\|\phi(a,X_t)^2\|_{\mathcal{I}_t^{-1}(\hat{\gamma}_t)})
    \end{align*}
    for any $a \in \mathcal{A}$ and any $t \in [T]$, where $\mathcal{I}_t(\hat{\beta}_t) = \mathbb{E}(H_t(\hat{\beta}_t))$ and $\mathcal{I}_t(\hat{\gamma}_t) = \mathbb{E}(H_t(\hat{\gamma}_t))$ are the fisher information at time point $t$.

    Let $U^\beta = \phi(a,X_t)^\top\tilde{\beta}_t - \phi(a,X_t)^\top\hat{\beta}_t$. Since $U^\beta \sim \mathcal{N}(0,\alpha_{\beta}^2\|\phi(a,X_t)^2\|_{\mathcal{I}_t^{-1}(\hat{\beta}_t)})$ is a normal random variable, we have
    \begin{align*}
        &\mathbb{P}_t(U^\beta\geq \alpha_\beta\|\phi(a,X_t)\|_{\mathcal{I}_t^{-1}(\hat{\beta}_t)}) \geq 0.15\\
        &\mathbb{P}_t(U^\beta\geq \pi\|\phi(a,X_t)\|_{\mathcal{I}_t^{-1}(\hat{\beta}_t)}) \leq \exp\left[-\frac{\pi^2}{2\alpha_{\beta}^2}\right]\\
    \end{align*}
    for any $\pi > 0$. 

    According to Condition 2, $\mathcal{I}_t(\hat{\beta}_t) \preceq \kappa_{1,\max}G_t$, we have
    \begin{align*}
        \mathbb{P}_t(U^\beta \geq \alpha_\beta \sqrt{\kappa_{1,\max}^{-1}}\|\phi(a,X_t)\|_{G_t^{-1}}) \geq 0.15.
    \end{align*}
    For $\alpha_\beta = c_1^\beta\sqrt{\kappa_{1,\max}}$ and $a=A_t^*$, we have that event $E_{3,t}^\beta$ occurs with probability at least $0.15$. 

    Furthermore, according to Condition 2, $\mathcal{I}_t(\hat{\beta}_t) \succeq \kappa_{1,\min}G_t$. Hence,
    \begin{align*}
        \mathbb{P}_t(U^\beta \geq \pi\sqrt{\kappa_{1,\min}^{-1}}\|\phi(a,X_t)\|_{G_t^{-1}}) \leq \exp\left[-\frac{\pi^2}{2\alpha_{\beta}^2}\right].
    \end{align*}
    By setting $\pi = \alpha_\beta\sqrt{2\log(KT)}$ and $a = A_t^*$, we have
    \begin{align*}
        \mathbb{P}_t\left(U \geq c_1^\beta \sqrt{2 \kappa_{1,\min }^{-1} \kappa_{1,\max } \log (K T)}\left\|\phi(a,X_t)\right\|_{G_t^{-1}}\right) \leq \frac{1}{K T}
    \end{align*}
    By the union bound over all $K$ arms, we get that event $\bar{E}_{2,t}^\beta$ occurs with probability at most $1/T$, with $c_2^\beta = c_1^\beta\sqrt{2 \kappa_{1,\min }^{-1} \kappa_{1,\max } \log (K T)}$.

    Similarly, under Condition 3, we have
    \begin{align*}
        &\mathbb{P}_t(U^\gamma \leq -\alpha^\gamma \sqrt{\kappa_{2,\max}^{-1}}\|\phi(a,X_t)\|_{G_t^{-1}}) \geq 0.15.\\
        &\mathbb{P}_t(U^\gamma \geq c_1^\gamma \sqrt{2 \kappa_{2,\min }^{-1} \kappa_{2,\max } \log (K T)}\|\phi(a,X_t)\|_{G_t^{-1}}) \leq \frac{1}{K T}.
    \end{align*}
     For $\alpha^\gamma = c_1^\gamma\sqrt{\kappa_{2,\max}}$ and $a=A_t^*$, we have that event $E_{3,t}^\gamma$ occurs with probability at least $0.15$. 
    By the union bound over all $K$ arms, we get that event $\bar{E}_{2,t}^\gamma$ occurs with probability at most $1/T$, with $c_2^\gamma = c_1^\gamma\sqrt{2 \kappa_{2,\min }^{-1} \kappa_{2,\max } \log (K T)}$.
\end{proof}

\section{Additional Details for the Drink Less Study}
\xueqing{
In this section, we provide additional details on the Drink Less study, as well as additional results on different settings. }
\subsection{Covariates Information}
\xueqing{
In the algorithms and environment models, we use a subset of the baseline and contextual data that are associated with the outcome. Table \ref{tab:variables} shows the features available in the Drink Less MRT dataset. 
\begin{table}[ht!]
\centering
\caption{Summary of variables used in the Drink Less study. The features used in $X_{n,t}$ and $S_{n,t}$ are denoted via a $\checkmark$ in the corresponding column, otherwise X. }
\label{tab:variables}
\begin{tabular}{@{}llcc@{}}
\toprule
Feature              & Description            & $X_{n,t}$ & $S_{n,t}$ \\ \midrule
Intercept           & Constant term          & $\checkmark$     & $\checkmark$    \\
Age                 & Age in years           & $\checkmark$     & X     \\
Gender              & Male (0), Female (1)   & $\checkmark$     & X     \\
AUDIT score         & Range 0--40            & $\checkmark$    & X     \\
Days since download & Range 0--30 days       & $\checkmark$    & $\checkmark$     \\ \bottomrule
\end{tabular}
\end{table}

Here, $S_{n,t}$ includes variables that interact with the intervention, potentially modifying the effect of sending a notification. We have included ``Days since download" based on insights from another study \citep{liu2023incorporating}.}

\subsection{Additional results}
\xueqing{
As stated in the manuscript, we conducted simulations to evaluate the algorithms with  $p_{\min} = 0$ and $p_{\max} = 1$. Figure \ref{fig:mainfig4} reveals that without probability clipping, the proposed TS-Count algorithms, notably TS-Poisson and TS-NB, outperform others in terms of regret. The comparison shows minimal differences in regret between scenarios with and without probability clipping. This suggests that, with probability clipping, the proposed algorithms maintain consistent regret bounds when compared to a clipped oracle.

\begin{figure}[ht!]
    \centering
    \subfloat[OP]{\includegraphics[width=0.5\textwidth]{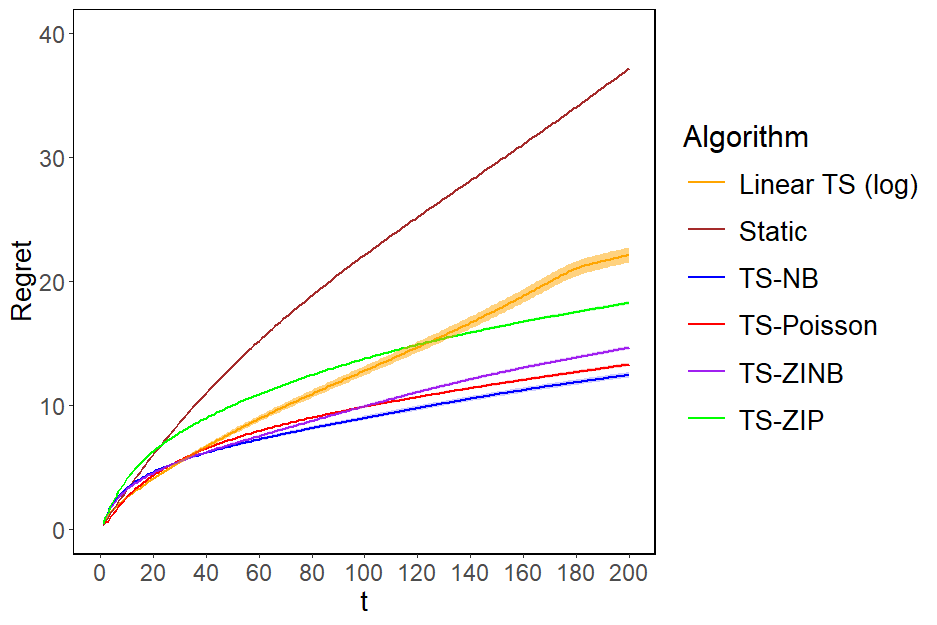}\label{fig:subfig11}}
    \hfill
    \subfloat[ZIOP]{\includegraphics[width=0.5\textwidth]{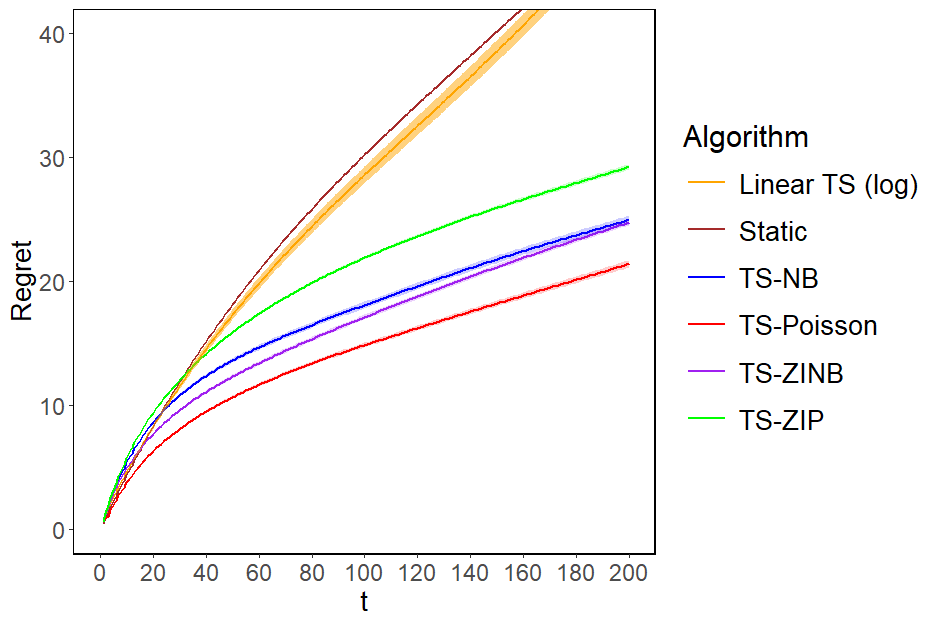}\label{fig:subfig13}}
    
    \caption{Results of the compared algorithms under the Drink Less study with $p_{\min} = 0$ and $p_{\max} = 1$.  The results are calculated from $100$ replications of the experiment. The solid lines indicate the mean values, while the shaded bands represent the standard error bounds across the independent replications.}
    \label{fig:mainfig4}
\end{figure}

\begin{figure}[ht!]
    \centering
    \subfloat[OP]{\includegraphics[width=0.5\textwidth]{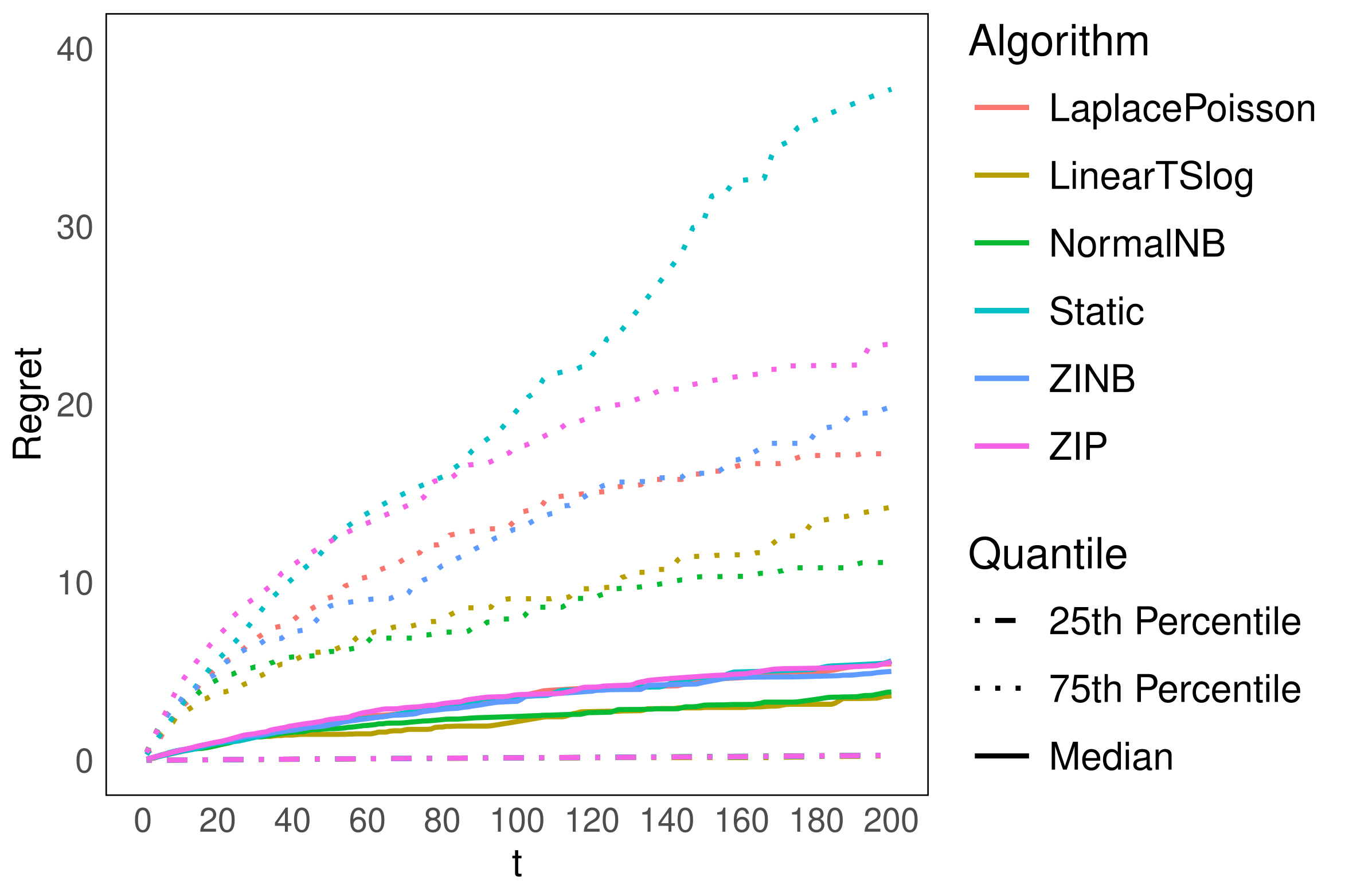}\label{fig:subfig1_app}}
    \hfill
    \subfloat[ZIOP]{\includegraphics[width=0.5\textwidth]{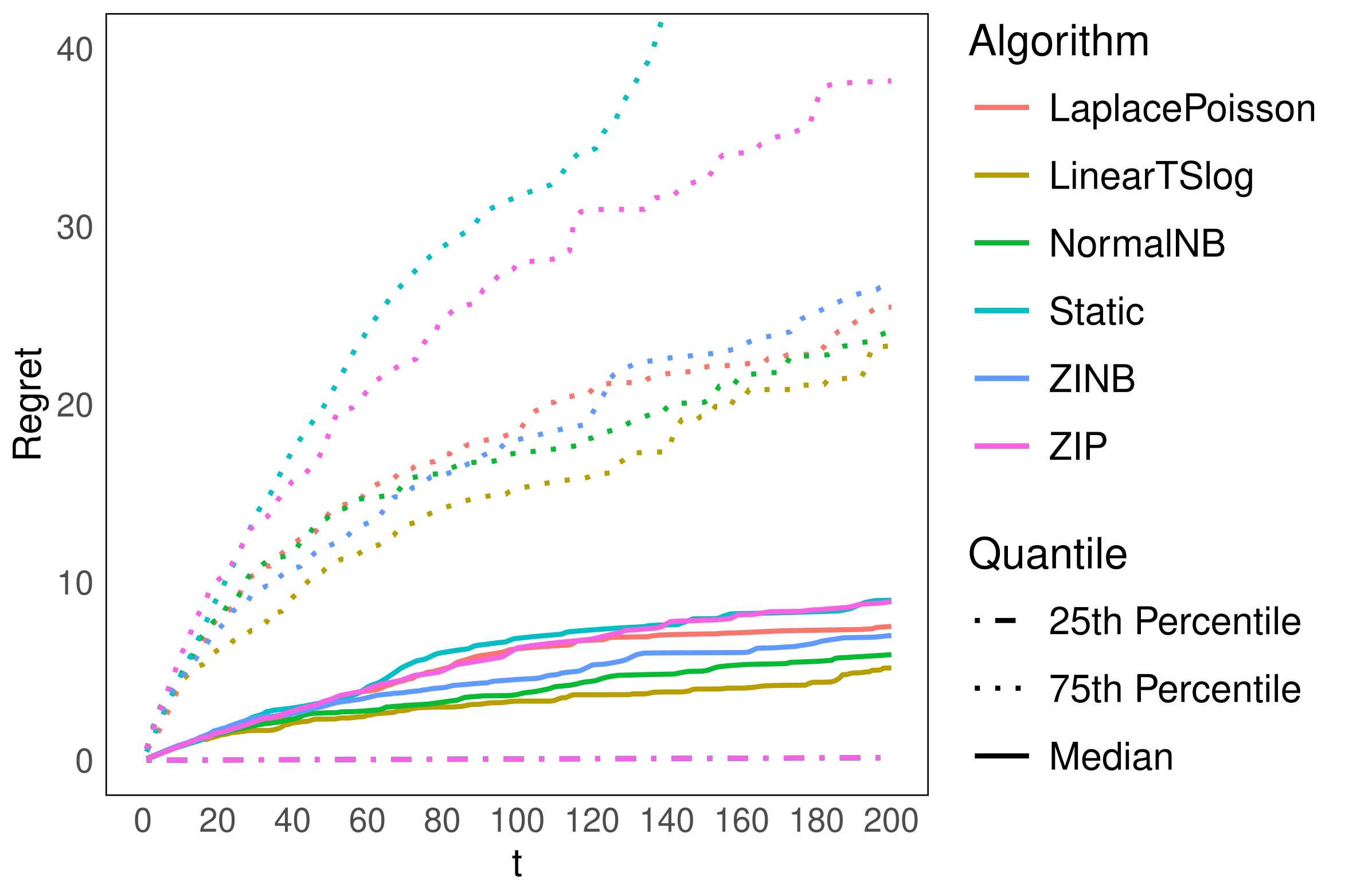}\label{fig:subfig2_app}}
    
    \caption{Quantile plots of the compared algorithms under the Drink Less study with $p_{\min} = 0.01$ and $p_{\max} = 0.99$.  The results are calculated from $1$ replication of the experiment. }
    \label{fig:mainfig8}
\end{figure}
Next, we want to study the individual variation in regret across different users. We utilized the same experimental setup as described in Section 7.3 of the manuscript, employing both overdispersed Poisson and zero-inflated overdispersed Poisson models for reward generation. 

In Supplement C, we include two interactive plots illustrating user-specific regret for 349 users across a single experiment replication. The plot named "regret\_per\_user\_op\_clipped.html" represents results from an overdispersed Poisson model, while "regret\_per\_user\_ziop\_clipped.html" is derived from a zero-inflated overdispersed Poisson model.

Furthermore, we depict the distribution of user regret across different quantiles in Figure \ref{fig:mainfig8}. This figure offers a visual analysis of the spread of regret, providing a deeper understanding of its distribution among users.}

\end{document}